\newcommand{\revise}[1]{{\color{black}#1}}
\newcommand{\reviseSecond}[1]{{\color{black}#1}}
\newtheorem{theorem}{\it Theorem}
\newtheorem{lemma}{\it Lemma}
\newtheorem{remark}{\it Remark}
\newtheorem{observation}{\it Observation}
\begin{document}
%
\title{Learning-augmented Online Minimization of Age of Information and Transmission Costs}
%
%
%
%

\author{Zhongdong Liu, Keyuan Zhang, Bin Li, Yin Sun, Y.~Thomas Hou, and Bo Ji 
\IEEEcompsocitemizethanks{\IEEEcompsocthanksitem This research was supported in part by 
ONR MURI grant N00014-19-1-2621, NSF grants CNS-2106427 and CNS-2239677, Army Research Office grants W911NF-21-1-0244 and W911NF-24-1-0103, Virginia Commonwealth Cyber Initiative (CCI), Virginia Tech Institute for Critical Technology and Applied Science (ICTAS), and Nokia Corporation. A preliminary version of this work is to be presented at IEEE INFOCOM 2024 Age and Semantics of Information Workshop \cite{Liu2405:Learning}.
\IEEEcompsocthanksitem Zhongdong Liu (zhongdong@vt.edu), Keyuan Zhang (keyuanz@vt.edu), and Bo Ji (boji@vt.edu) are with the Department of Computer Science, Virginia Tech, Blacksburg, VA. 
Bin Li (binli@psu.edu) is with the Department of Electrical Engineering,  Pennsylvania State University, University Park, PA. 
Yin Sun (yzs0078@auburn.edu) is with the Department of Electrical and Computer
Engineering, Auburn University, Auburn, AL.
Y.~Thomas Hou (thou@vt.edu) is with the Bradley Department of Electrical and Computer Engineering, Virginia Tech, Blacksburg,~VA.}
\thanks{Manuscript received April 19, 2005; revised August 26, 2015.}}

%
%

\markboth{Journal of \LaTeX\ Class Files,~Vol.~14, No.~8, August~2015}%
{Shell \MakeLowercase{\textit{et al.}}: Bare Demo of IEEEtran.cls for Computer Society Journals}
%



\IEEEtitleabstractindextext{%
\begin{abstract}
We consider a discrete-time system where a resource-constrained source (e.g., a small sensor) transmits its time-sensitive data to a destination over a time-varying wireless channel. Each transmission incurs a fixed transmission cost (e.g., energy cost), and no transmission results in a staleness cost represented by the \textit{Age-of-Information}. The source must balance the tradeoff between transmission and staleness costs. To address this challenge, we develop a robust online algorithm to minimize the sum of transmission and staleness costs, ensuring a worst-case performance guarantee. While online algorithms are robust, they are usually overly conservative and may have a poor average performance in typical scenarios. In contrast, by leveraging historical data and prediction models, machine learning (ML) algorithms perform well in average cases. However, they typically lack worst-case performance guarantees. To achieve the best of both worlds, we design a learning-augmented online algorithm that exhibits two desired properties: (i) \textit{consistency}: closely approximating the optimal offline algorithm when the ML prediction is accurate and trusted; (ii) \textit{robustness}: ensuring worst-case performance guarantee even ML predictions are inaccurate. Finally, we perform extensive simulations to show that our online algorithm performs well empirically and that our learning-augmented algorithm achieves both consistency and robustness. 
\end{abstract}

\begin{IEEEkeywords}
Age-of-Information, transmission cost, online algorithm,
learning-augmented algorithm.
\end{IEEEkeywords}}

\maketitle

\IEEEdisplaynontitleabstractindextext

%
\IEEEpeerreviewmaketitle

\IEEEraisesectionheading{\section{Introduction}}
\IEEEPARstart{I}{n} recent years, we have witnessed the swift and remarkable development of the Internet of Things (IoT), which connects billions of entities through wireless networks \cite{li2015internet}. 
These entities range from small, resource-constrained sensors (e.g., temperature sensors and smart cameras) to powerful smartphones.
Among various IoT applications, one of the most important categories is real-time IoT applications, which requires timely information updates from the IoT sensors. 
For example, in industrial automation systems \cite{wu2017real,10008202}, battery-powered IoT sensors are deployed to provide data for monitoring equipment health and product quality.
On the one hand, IoT sensors are usually small and have limited battery capacity, and thus frequent transmissions drain the battery quickly; on the other hand, occasional transmissions render the information at the controller outdated, potentially leading to detrimental decisions.
In addition, wireless channels can be unreliable due to potential channel fading, interference, and the saturation of wireless networks if the traffic load generated by numerous sensors is high \cite{10049773}.
Clearly, under unreliable wireless networks, IoT sensors must transmit strategically to balance the tradeoff between transmission cost (e.g., energy cost) and data freshness. 
Other applications include wildfire real-time monitoring systems, unmanned aerial vehicle systems, and so on (see more discussions in Section~\ref{sec:case_study}).

To this end, in the first part of this work, we study the tradeoff between transmission cost and data freshness under a time-varying wireless channel. 
Specifically, we consider a discrete-time system where a device transmits data to an access point (AP) over an ON/OFF wireless channel under an online setting. In this setting, the device is aware of the current and previous channel states but does not have knowledge of future states. Consequently, transmissions occur only when the channel is in the ON state.
While each transmission incurs a fixed \textit{transmission cost}, no transmission renders the information at the AP outdated.  
To measure the freshness of information at the AP, we use a popular timeliness metric called the  \textit{Age-of-Information (AoI)}~\cite{6195689}, which is defined as the time elapsed since the generation time of the freshest delivered packet (see formal definition in Section~\ref{sec:system_model}).\footnote{We use AoI instead of delay because AoI measures data freshness at the destination by tracking the time since the latest received update, whereas delay measures transmission latency, which is inherently captured in AoI by definition.}
The evolution of AoI depends on the occurrence of transmissions: it increases linearly with time when no transmission occurs and resets to a smaller value when a new update is successfully transmitted. To account for the penalty associated with outdated information at the AP, we introduce the \textit{staleness cost}, quantified by the AoI.
To minimize the sum of transmission costs and staleness costs, we develop a robust online algorithm that achieves a competitive ratio (CR) of 
$3$. That is, different from typical studies with stationary network assumptions, the cost of our online algorithm is at most three times larger than that of the optimal offline algorithm under the worst channel state (see the definition of CR in Section~\ref{sec:system_model}).

While online algorithms exhibit robustness against worst-case situations, they often lean towards excessive caution and may have a subpar average performance in real-world scenarios.
On the other hand, by exploiting historical data to build prediction models, machine learning (ML) algorithms can excel in average cases. Nonetheless, 
ML algorithms could be sensitive to disparity in training and testing data due to distribution shifts or adversarial examples, resulting in poor performance and lacking worst-case performance guarantees.

To that end, we design a novel learning-augmented online algorithm that takes advantage of both ML and online algorithms. 
Specifically, our learning-augmented online algorithm integrates ML prediction (a series of times indicating when to transmit) into our online algorithm, achieving two desired properties: (i) \textit{consistency}: when the ML prediction is accurate and trusted, our learning-augmented online algorithm performs closely to the optimal offline algorithm,  and (ii) \textit{robustness}: even when the ML prediction is inaccurate, our learning-augmented online algorithm still offers a worst-case guarantee. 

Our main contributions are as follows. 

\textit{First}, we study the tradeoff between transmission cost and data freshness in a dynamic wireless channel by formulating an optimization problem to minimize the sum of transmission and staleness costs under an ON/OFF channel. 

\textit{Second}, following a similar line of analysis as in \cite{8849808}, we reformulate our (non-linear) optimization problem into a linear Transmission Control Protocol (TCP) acknowledgment problem \cite{10.1145/380752.380845} and propose a primal-dual-based online algorithm that achieves a CR of $3$. 
While a similar primal-dual-based online algorithm has been claimed to asymptotically achieve a CR of $e/(e-1)$~\cite{8849808}, their analysis of CR relies on an (unrealistic) asymptotic setting
(see Remark~\ref{remark:wrong_cr}). 

\textit{Third}, by incorporating ML predictions into our online algorithm, we design a novel learning-augmented online optimization algorithm that achieves both consistency and robustness. 
\textit{To the best of our knowledge, this is the first study on AoI that incorporates ML predictions into online optimization to achieve consistency and robustness.}

\textit{Finally}, we perform extensive simulations using synthetic and real trace datasets. Our online algorithm outperforms the theoretical analysis, and our learning-augmented algorithm can achieve consistency and robustness.

The remainder of this paper is organized as follows.  Section~\ref{sec:relatedwork} reviews related work. 
The system model and problem formulation are introduced in Section~\ref{sec:system_model}.
Sections~\ref{sec:online_algorithm}  and \ref{sec:ml_augmented_algorithm} present our robust online algorithm and our learning-augmented online algorithm, respectively. 
Finally, we show the numerical results in Section~\ref{sec:simulation}, discuss the limitations of this work in Section~\ref{sec:limitations}, and conclude our paper in Section~\ref{sec:conclusion}.

\section{Related Work}
\label{sec:relatedwork}
This paper connects and contributes to three expanding areas of research: (i) stationary AoI, which makes assumptions under stationary regimes (e.g., data generation or channel availability following specific distributions), (ii) non-stationary AoI, and (iii) learning-augmented online algorithms. We discuss each of these in the following.

\textbf{\textit{Stationary AoI.}}
The first category includes studies that consider the joint minimization of AoI and certain costs under the stationary settings \cite{9163054,10008099,9488746,9358178, 10349861}.
The work of \cite{9163054} studies the problem of minimizing the average cost of sampling and transmission over an unreliable wireless channel subject to average AoI constraints.
Similarly, in \cite{10349861}, a source monitors a stochastic process and, upon sampling, either transmits raw data over an ON/OFF channel at a transmission cost or processes it locally at a processing cost. A stationary randomized policy is proposed to minimize the distortion of the received information while maintaining the
AoI at the destination below a threshold and satisfying a cost constraint at the source.  Among those AoI works that consider the stationary setting, the most related work to ours is \cite{9488746}. 
Although \cite{9488746} examines a similar problem, there are key differences compared to our problem.
First, regarding the system model, \cite{9488746} considers a continuous system where updates are generated with an inter-generation time following a known continuous distribution. Moreover, the updates in \cite{9488746} can be transmitted at any time (i.e., the channel is always ON). In contrast, we consider a discrete-time model where the device can generate updates at any discrete time slot, but the transmission channel could be OFF. 
Second, in terms of algorithm design and analysis, the proposed algorithm in \cite{9488746} assumes that the inter-generation time distribution (e.g., mean and variance) is known. However, in our algorithm, we do not rely on these assumptions. 
In summary, although the assumptions in these studies lead to tractable performance analysis, such assumptions may not hold in practical scenarios.

\textbf{\textit{Non-stationary AoI.}}
The second category contains studies that focus on non-stationary settings  \cite{9795317,9798166,8849808,10007803, 10226091}. 
For example, in \cite{9795317}, the authors proposed online algorithms to minimize the AoI of users in a cellular network under adversarial wireless channels. 
A similar adversarial framework is explored in \cite{10226091}, which considers a time-slotted communication network involving a base station, an adversary, multiple users, and multiple communication channels. Both the scheduler and the adversary operate under average power constraints, and the probability of successful transmission is contingent upon their respective power levels. In \cite{10226091}, the authors provide a universal lower bound for the average AoI and analyze the existence of Nash equilibria within this scenario.
In these AoI works that consider non-stationary settings, the most relevant work to ours is \cite{8849808}, where the authors study the minimization of the sum of download costs and AoI costs under a non-stationary wireless channel. A primal-dual-based randomized online algorithm is shown to have an asymptotic CR of $e/(e-1)$. However, this CR is attained under an (unrealistic) asymptotic setting (see Remark~\ref{remark:wrong_cr}).
In this work, we propose an online primal-dual-based algorithm that achieves a CR of $3$ in the non-asymptotic regime.

\textbf{\textit{Learning-augmented Online Algorithms.}} 
In recent years, advances in ML models have inspired researchers to revisit the design of classical algorithms. Learning-augmented algorithms, also referred to as algorithms with predictions, have been studied to incorporate insights from ML predictions to improve decision-making in a variety of problems, including online optimization~\cite{10.1145/3447579}, online learning~\cite{pmlr-v178-golowich22a}, and offline combinatorial optimization~\cite{bampis2025polynomial}. 
For a comprehensive collection of relevant papers, we refer interested readers to the corresponding website \cite{Learning_augmented_algorithms_collection}. 
Below, we will focus on the online optimization problems and picture this area through two major lenses: (\romannumeral 1) applications and problem domains, and (\romannumeral 2) the design of algorithms and frameworks.

(\romannumeral 1) Applications and problem domains. In the context of online optimization problems, the seminal work~\cite{10.1145/3447579} was the first to incorporate ML predictions into the online Marker algorithm for the caching problem, and it introduced robustness and consistency as two desired properties for the design of learning-augmented online algorithms. Since then, many classical online optimization problems and their variations have been explored, such as ski-rental, online bipartite matching, and online resource allocation. These problems have broad practical applications. For example, \cite{Zhang_Liu_Choi_Ji_2024} studied the two-level ski rental problem, which is well-suited for modeling cloud service subscription scenarios.
Among these problems, the most related problem to ours is the TCP acknowledgement problem~\cite{NEURIPS2020_e834cb11}. In~\cite{NEURIPS2020_e834cb11}, the uncertainty comes from the packet arrival times, and the controller can make decisions at any time. In our work, however, the uncertainty comes from the channel states, and no data can be transmitted when the channel is OFF. Lacking the freedom to transmit data at any time distinguishes our problem from the existing literature and renders their algorithm inapplicable. To the best of our knowledge, we are the first to explore learning-augmented online algorithm design in AoI scheduling applications.

(\romannumeral 2) The design of algorithms and frameworks. Since the seminal work, many learning-augmented algorithms have been proposed for various problems. Although these algorithms achieve promising results by effectively leveraging problem-specific structures, their approaches are often tailored to particular settings and do not easily generalize to other problem domains. Later, \cite{NEURIPS2020_e834cb11} introduced a more general framework for the algorithm design and theoretical performance analysis by leveraging the structure of primal-dual programming. In~\cite{NEURIPS2020_e834cb11}, this framework has been applied to address a diverse range of problems, including the weighted set cover, ski-rental, and TCP acknowledgment.  In this work, we extend this framework to our AoI scheduling problem. This extension is non-trivial, due to the unique challenges arising from differing sources of uncertainty and their distinct impacts. 
Additionally, we assume that the algorithm does not know the quality of the predictions; in other words, the predictions are generated by a black-box ML model. 
This contrasts with other studies that assume access to additional information about the predictions, such as uncertainty quantification~\cite{shen2025algorithms}. Given these differing assumptions, such works are considered complementary and parallel to ours.
Another line of research focuses on how to generate predictions that best support the downstream learning-augmented online algorithm. For example, \cite{3530894} proposed a neural network training algorithm that explicitly incorporates the output of the downstream online algorithm in the context of online convex optimization with switching costs. In contrast, our work concentrates exclusively on how to effectively utilize predictions through the careful design of learning-augmented algorithms. The task of generating predictions is considered orthogonal to our contributions.

\section{System Model and Problem Formulation}
\label{sec:system_model}
\textbf{\textit{System Model.}}
Consider a status-updating system where a resource-limited device sends time-sensitive data to an access point (AP) through an unreliable wireless channel (see Fig.~\ref{fig:system_model}). 
The system operates in discrete time slots, denoted by $t=1,2,\ldots, T$,  where $T$ is finite and can be arbitrarily large.
We use $s(t) \in \{0, 1\}$ to denote the channel state at time $t$, where $s(t) = 1$ means the channel is ON, allowing the device to access the AP; while $s(t) = 0$ means the channel is OFF, preventing access to the AP. 
The sequence of channel states over the time horizon is represented by $\mathbf{s} = \left\{ {s(1), \dots,s(T)} \right\}$.

\begin{figure}
    \centering
    \includegraphics[width=0.99\linewidth]{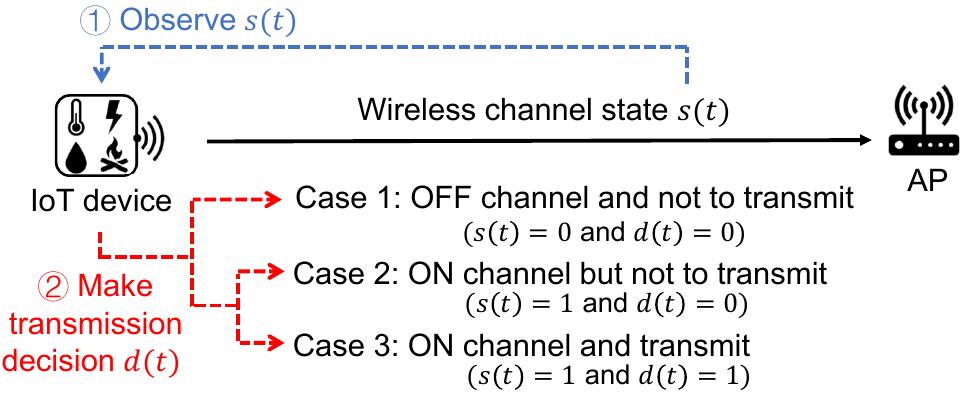}
    \caption{\reviseSecond{An illustration of our system model. The device sends data to AP through an unreliable wireless channel. Assuming the current time slot is $t$. At first, the device probes to know the current channel state $s(t)$. Then, the device decides whether to transmit or not. There are three cases: (i) the channel is OFF, and the device does not transmit; (ii) the channel is ON, but the device chooses not to transmit; and (iii) the channel is ON, and the device transmits.}}
    \label{fig:system_model}
\end{figure}

At the beginning of each slot, the device probes to know the current channel state (see path~$1$ in Fig.~\ref{fig:system_model}) and then decides whether to transmit its freshest data to the AP (see path~$2$ in Fig.~\ref{fig:system_model}). There are three possible cases: (i) the channel is OFF, so the device does not transmit; (ii) the channel is ON, but the device chooses not to transmit; and (iii) the channel is ON, and the device transmits.
The transmission decision at slot $t$ is denoted by $d(t) \in \{0,1\}$, where $d(t) = 1$ if the device decides to transmit (i.e., generates a new update and transmits it to the AP), and $d(t) = 0$ if not. 
A scheduling algorithm $\pi$ is denoted by  $\pi  = \left\{ {{d^\pi }(t)} \right\}_{t = 1}^T$, where ${d^\pi }(t)$ is the transmission decisions made by algorithm $\pi$ at time $t$. For simplicity, we use $\pi  = \left\{ {{d}(t)} \right\}_{t = 1}^T$ throughout the paper. 
When the device decides to transmit, it incurs a fixed \textit{transmission cost} (e.g., energy cost) of $c > 1$, and the data on the AP will be successfully updated at the end of slot $t$ if the channel is ON at slot $t$; otherwise, the data on the AP gets staler.\footnote{If $c \le 1$, which is less than the staleness cost (at least $1$), then the optimal policy is to transmit at every ON slot.} 
To quantify the freshness of data at the AP side, we utilize a metric called \textit{Age-of-Information (AoI)}~\cite{6195689}, which measures the time elapsed since the freshest received update was generated.
We use $a(t)$ to denote the AoI at time $t$, which evolves as
\begin{equation}
{a}(t)=\left\{\begin{array}{ll}
0, & \text {if} \ {s}(t) \cdot {d}(t)=1; \\
{a}(t-1)+1, & \text {otherwise},
\end{array}\right.
\label{eq:AoI_evo}
\end{equation}
where the AoI drops to $0$ if the device transmits at ON slots; otherwise, it increases by $1$.\footnote{Some studies let the AoI drop to $1$, wherein our analysis still holds. We let the AoI drop to $0$ to make the discussion concise.}
We assume that $a(0) = 0$. 
To reflect the penalty that the AP does not get the update at time $t$, we introduce the \textit{staleness cost}, which is represented by the AoI at time $t$.

\textbf{\textit{Problem Formulation.}}
The total cost of an algorithm $\pi$ is 
\begin{equation}
    C(\mathbf{s}, \pi) \triangleq \sum\limits_{t = 1}^T {(c \cdot d(t) + a(t))}, 
    \label{eq:total_cost}
\end{equation}
where the first item $c \cdot d(t)$ is the transmission cost at time $t$, and the second item $a(t)$ is the staleness cost at time $t$.
In this paper, we focus on the class of \emph{online} scheduling algorithms, under which the information available at time $t$ for making decisions includes the transmission cost $c$, the transmission history $\{d(\tau)\}_{\tau=1}^t$, and the channel state pattern $\{s(\tau)\}_{\tau=1}^t$, while the time horizon $T$ and the future channel state $\{s(\tau)\}_{\tau={t+1}}^T$ is unknown.
Conversely, an \emph{offline} scheduling algorithm has the information about the connectivity pattern $\mathbf{s}$ (and the time horizon $T$) beforehand.

Our goal is to develop an online algorithm $\pi$  that minimizes the total cost given a channel state pattern $\mathbf{s}$:
\begin{subequations}
    \begin{align}
     & \underset{d(t)}{\min}  & & \sum\limits_{t = 1}^T {(c \cdot d(t) + a(t))} \\
    & \text{s.t.}  & & d(t) \in \{0,1\} \ \text{for} \ t = 1,2, \dots,T; \label{eq:problem_cons1}\\ 
    & & & a(t) \ \text{evolves as Eq.~\eqref{eq:AoI_evo}}  \ \text{for} \ t = 1,2, \dots,T.
    \label{eq:problem_cons2}
    \end{align}
    \label{problem}
\end{subequations} 
\!\!In Problem~\eqref{problem}, the only decision variables are the transmission decisions $\{ d(t)\} _{t = 1}^T$, and the objective function is a non-linear function of $\{ d(t)\} _{t = 1}^T$ due to the dependence of $a(t)$ on $d(t)$. 
Specifically, based on Eq.~\eqref{eq:AoI_evo}, we can rewrite the AoI at time $t$ as $a(t) = (1 - s(t) \cdot d(t)) \cdot (a(t - 1) + 1)$. Upon rephrasing $a(t)$ with the transmission decisions $\{ d(\tau)\} _{\tau = 1}^t$ (i.e., rewriting $a(t-1)$ with $d(t-1)$ and $a(t-2)$, rewriting $a(t-2)$ with $d(t-2)$ and $a(t-3)$, and so forth), we can observe that $a(t)$ involves the products of the current transmission decision $d(t)$ and the previous transmission decisions  $d(\tau )$ for $\tau \in [1, t-1]$, which indicates that $a(t)$ is not linear with respect to $\{ d(\tau)\} _{\tau = 1}^t$.
This non-linearity poses a challenge to its efficient solutions.
In Section~\ref{subsec:problem_reformulation}, following a similar line of analysis as in \cite{8849808}, we reformulate Problem~\eqref{problem} to an equivalent TCP acknowledgment (ACK) problem, which is linear and can be solved efficiently (e.g., via the primal-dual approach \cite{buchbinder2009design}).
Furthermore, to generalize the problem, constraint~\eqref{eq:problem_cons1} allows the transmission decision to be made on OFF channels, although one desired algorithm transmits only on ON channels.

To measure the performance of an online algorithm, we use the metric
\textit{competitive ratio} (CR) \cite{buchbinder2009design}, which is defined as the worst-case ratio of the cost under the online algorithm to the cost of the optimal offline algorithm.
Formally, we say that an online algorithm $\pi$
is \textit{$\beta$-competitive} if there exists a constant $\beta\geq1$ such that for any channel state pattern $\mathbf{s}$, 
\begin{equation}
    C({\bf{s}},\pi ) \le \beta  \cdot OPT({\bf{s}}),
\end{equation}
where $OPT({\bf{s}})$ is the cost of the optimal offline algorithm for the given channel state $\mathbf{s}$. 
We desire to develop an online algorithm with a CR close to $1$, which implies that our online algorithm performs closely to the optimal offline algorithm.
\section{Robust Online Algorithm}
\label{sec:online_algorithm}
In this section, we first reformulate our AoI Problem~\eqref{problem} to an equivalent linear TCP ACK problem. Then, this TCP ACK problem is further relaxed to a linear primal-dual-based program. 
Finally,  a $3$-competitive online algorithm is developed to solve the linear primal-dual-based program.

\subsection{Problem Reformulation}
\label{subsec:problem_reformulation}
In \cite{8849808}, the authors study the same non-linear Problem~\eqref{problem} and reformulate it to an equivalent linear problem. 
Following a similar line of analysis as in \cite{8849808}, we reformulate the non-linear Problem~\eqref{problem} to an equivalent linear TCP ACK Problem~\eqref{prob:integerl-program} as follows.
Consider a TCP ACK problem,
where the source reliably generates and delivers one packet to the destination in each slot $t=1,2,\ldots, T$. 
Those delivered packets need to be acknowledged (for simplicity, we use ``acked" instead of ``acknowledged" throughout the paper) that they are received by the destination, which requires the destination to send ACK packets (for brevity, we call it ACK) back to the source. 
We use $d(t)\in\{0,1\}$ to denote the ACK decision made by the destination at slot $t$.
Let $z_i(t)\in\{0,1\}$ represent whether packet $i$ (i.e., the packet sent at slot $i$) has been acked by  slot $t$ ($i\leq t$), where $z_i(t) =1$ if packet $i$ is not acked by slot $t$ and $z_i(t) =0$ otherwise. Once packet $i$ is acked at slot $t$, then it is acked forever after slot $t$, i.e.,  $z_i(\tau) = 0$ for all $i \le t$ and all $\tau \ge t$.
The feedback channel is unreliable and its channel state in slot $t$ is modeled by an ON/OFF binary variable $s(t)\in\{0,1\}$. 
We use $\mathbf{s} = \left\{ {s(1), \dots ,s(T)} \right\}$ to denote the entire feedback channel states.
The destination can access the feedback channel state $s(t)$ at the start of each slot $t$. 
When the feedback channel is ON and the destination decides to send an ACK, all previous packets are acked, i.e., the number of unacked packets becomes $0$; otherwise, the number of unacked packets increases by $1$. We can see that the dynamic of the number of unacked packets is the same as the AoI dynamic.

We assume a holding cost at each slot, which is the number of unacked packets in that slot.
In addition, we also assume that each ACK has an ACK cost of $c$. 
The goal of the TCP ACK problem is to develop an online scheduling algorithm $\pi  = \left\{ {{d}(t)} \right\}_{t = 1}^T$ that minimizes the total cost given a feedback channel state pattern $\mathbf{s}$:
\begin{subequations}
    \begin{align}
     & \hspace{-0.3cm} \underset{d(t), z_i(t)}{\min}  \hspace{0.1cm} \sum_{t=1}^{T}\left(c \cdot d(t)+\sum_{i=1}^{t} z_{i}(t)\right) 
    \label{eq:primal-obj} \\
    & \text{s.t. }  \hspace{0.25cm}  \nonumber z_{i}(t)+\sum\nolimits_{\tau=i}^{t}s(\tau) d(\tau) \ge 1 \\
    &  \hspace{2cm} \text{for} \ i \leq t \ \text{and}  \ t=1,2,\ldots,T;  \label{eq:primal-con1} \\
    & \hspace{0.8cm}  d(t), z_i(t) \in \{0,1\}  \ \text{for} \  i \le t \  \text{and} \ t=1,2,\ldots,T,
    \label{eq:primal-con2}
    \end{align}
    \label{prob:integerl-program}
\end{subequations}
where the first item $c \cdot d(t)$ in Eq.~\eqref{eq:primal-obj} is the ACK cost at slot $t$, the second item $\sum\nolimits_{i = 1}^t {{z_i}(t)} $ in Eq.~\eqref{eq:primal-obj} is the holding cost at slot $t$. Constraint~\eqref{eq:primal-con1} states that for packet $i$ at slot $t$, either this packet is not acked (i.e., $z_{i}(t) = 1$) or an ACK was made since its arrival (i.e., $s(\tau )d(\tau ) = 1$ for some $i \le \tau  \le t$). 
While Problem~\eqref{prob:integerl-program} is an integer linear problem,
we demonstrate its equivalence to Problem~\eqref{problem} in the following.

\begin{lemma}
    Problem~\eqref{prob:integerl-program} is equivalent to Problem~\eqref{problem}.
\label{lemma:problem_equivalence}
\end{lemma}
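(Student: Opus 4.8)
The plan is to establish the equivalence between Problem~\eqref{problem} and Problem~\eqref{prob:integerl-program} by showing two things: (i) they share the same feasible decision space (the transmission/ACK decisions $\{d(t)\}_{t=1}^T$ can be treated as identical in both problems), and (ii) for any fixed decision sequence $\{d(t)\}_{t=1}^T$, the two objective functions yield the same value at optimality. Since the decision variable $d(t)\in\{0,1\}$ carries the same meaning in both problems (whether to transmit / send an ACK at slot $t$), I would fix an arbitrary feasible $\{d(t)\}_{t=1}^T$ and compare the two costs term by term. The transmission-cost terms $\sum_{t=1}^T c\cdot d(t)$ are literally identical, so the entire argument reduces to showing that the staleness cost $\sum_{t=1}^T a(t)$ in Problem~\eqref{problem} equals the holding cost $\sum_{t=1}^T \sum_{i=1}^t z_i(t)$ in Problem~\eqref{prob:integerl-program} once the $z_i(t)$ are set to their cost-minimizing values.

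First I would argue that in any optimal solution of Problem~\eqref{prob:integerl-program} with $\{d(t)\}$ fixed, each $z_i(t)$ takes its smallest value consistent with constraint~\eqref{eq:primal-con1}: because $z_i(t)$ appears with a positive coefficient in the objective and is otherwise unconstrained from below except by nonnegativity and the covering inequality, we have $z_i(t)=1$ exactly when $\sum_{\tau=i}^t s(\tau)d(\tau)=0$, and $z_i(t)=0$ otherwise. Thus $\sum_{i=1}^t z_i(t)$ counts the number of indices $i\le t$ for which no successful transmission occurred in the window $[i,t]$ — equivalently, the number of slots counted backward from $t$ until (and including nothing past) the most recent slot $\tau$ with $s(\tau)d(\tau)=1$. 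The key observation is that this count is exactly the number of slots elapsed since the last successful update, which is precisely the AoI $a(t)$ as defined by the recursion in Eq.~\eqref{eq:AoI_evo}.

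The crux of the proof is therefore the identity $a(t)=\sum_{i=1}^t z_i^\star(t)$, where $z_i^\star(t)$ denotes the minimizing assignment. I would prove this cleanly by induction on $t$. For the base case, if $s(t)d(t)=1$ then every window $[i,t]$ contains the successful transmission at $t$, so all $z_i^\star(t)=0$ and the sum is $0=a(t)$; otherwise no window ending at $t$ is yet satisfied by slot $t$ alone, and one checks the sum equals $a(t-1)+1$. For the inductive step, when $s(t)d(t)=0$ the window $[i,t]$ is satisfied if and only if $[i,t-1]$ was, and the additional term $z_t^\star(t)=1$ contributes the ``$+1$'', giving $\sum_{i=1}^t z_i^\star(t)=\sum_{i=1}^{t-1} z_i^\star(t-1)+1=a(t-1)+1=a(t)$, matching the ``otherwise'' branch of Eq.~\eqref{eq:AoI_evo}. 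When $s(t)d(t)=1$ all terms reset to $0$, matching the reset branch. I expect this inductive bookkeeping — carefully verifying that satisfaction of window $[i,t-1]$ transfers to $[i,t]$ and correctly handling the reset — to be the main (though still routine) obstacle; the rest is straightforward identification of identical terms.

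Summing the established per-slot identity over $t=1,\dots,T$ gives equality of the staleness and holding costs, and hence equality of the two full objectives for every fixed decision sequence. Since the feasible sets over $\{d(t)\}$ coincide and the objectives agree pointwise (after optimizing out the auxiliary $z_i(t)$), the two minimization problems have the same optimal value and the same set of optimal transmission decisions, which establishes the claimed equivalence.
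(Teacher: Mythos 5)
Your proof is correct, but it takes a genuinely different route from the paper's. The paper proves equivalence by an explicit bidirectional conversion of feasible solutions: given a transmission schedule $\{t_1,\dots,t_n\}$ for Problem~\eqref{problem}, it constructs an ACK schedule plus minimal $z_i(t)$'s for Problem~\eqref{prob:integerl-program} and computes \emph{both} total costs in closed form as sums of triangular numbers over the inter-transmission intervals, $cn + (t_1-1)t_1/2 + \sum_{i=2}^n (t_i-t_{i-1}-1)(t_i-t_{i-1})/2 + (T-t_n-1)(T-t_n)/2$; the reverse direction first replaces a feasible solution of Problem~\eqref{prob:integerl-program} by a ``cleaned-up'' one $\hat\pi$ without unnecessary $z_i(t)=1$ entries, and the lemma is then concluded by a contradiction argument on optimal solutions. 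You instead fix an arbitrary decision sequence $\{d(t)\}$, optimize out the auxiliary variables (noting that at optimality $z_i(t)=1$ exactly when $\sum_{\tau=i}^t s(\tau)d(\tau)=0$, by constraint~\eqref{eq:primal-con1} and positivity of the objective coefficients), and prove the pointwise identity $a(t)=\sum_{i=1}^t z_i^\star(t)$ by induction on $t$ against the recursion~\eqref{eq:AoI_evo}. Your partial-minimization argument buys two things: it absorbs the paper's ``unnecessary cost'' cleanup step into a one-line observation, and it holds verbatim for \emph{every} $d\in\{0,1\}^T$, including schedules that wastefully transmit on OFF slots --- a case the paper's epoch-based cost formula silently excludes by assuming all transmissions occur at ON slots ($s(t_i)d(t_i)=1$). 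What the paper's approach buys in exchange is the explicit closed-form costs and the concrete solution-to-solution maps, which match the textbook notion of equivalence it cites and are convenient for later reuse. One small presentational nit: your ``base case'' is phrased generically for slot $t$ rather than for $t=1$, and the inductive step already covers it; stating the induction cleanly at $t=1$ (where $a(0)=0$) would tighten the write-up.
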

We provide detailed proof in Appendix~\ref{appendix:proof_of_equivalence} and give a proof sketch as follows. 
We can show that: (i) any feasible solution to Problem~\eqref{problem} can be converted to a feasible solution to Problem~\eqref{prob:integerl-program}, and the total costs of these two solutions are the same; (ii) any feasible solution to Problem~\eqref{prob:integerl-program} can be converted to a feasible solution to Problem~\eqref{problem}, and the total cost of the converted solution to Problem~\eqref{problem} is no greater than the total cost of the solution to Problem~\eqref{prob:integerl-program}. This implies that any optimal solution to Problem~\eqref{problem} is also an optimal solution to Problem~\eqref{prob:integerl-program}, and vice versa. Therefore, these two problems are equivalent \cite[Sec. 4.1.3]{boyd2004convex}.

To obtain a linear program of the integer Problem~\eqref{prob:integerl-program}, we relax the integer requirement to real numbers:
\begin{subequations}
    \begin{align}
    & \hspace{-0.2cm} \underset{d(t), z_{i}(t)}{\min} \hspace{0.1cm} \sum_{t=1}^{T}\left(c \cdot d(t)+\sum_{i=1}^{t} z_{i}(t)\right) 
    \label{eq:primal-obj_relax}\\
    & \text{s.t. } \hspace{0.25cm} \nonumber  z_{i}(t)  +\sum\nolimits_{\tau=i}^{t}s(\tau) d(\tau) \geq 1 \\
    &\hspace{2cm} \text{for} \ i \leq t \  \text{and}  \ t=1,2,\ldots,T; 
    \label{eq:relax_primal-con1}\\
    & \hspace{0.6cm} \ d(t),  z_{i}(t) \ge 0 \ \text{for} \  i \le t \  \text{and} \ t=1,2,\ldots,T,
    \label{eq:relax_primal-con2}
    \end{align}
    \label{prob:integerl-program_relax}
\end{subequations}
\!which is referred to as the primal problem.
The corresponding dual problem of Problem~\eqref{prob:integerl-program_relax} is as follows: 
\begin{subequations}
    \begin{align}
    & \underset{y_{i}(t)}{\max} & & \sum_{t=1}^{T} \sum_{i=1}^{t} y_{i}(t)  \label{eq:relax_dual-obj} \\
    & \text{s.t.} & & s(t) \sum_{i=1}^{t} \sum_{\tau=t}^{T} y_{i}(\tau)  \leq c \text { for }  t=1,2,\ldots,T ; \label{eq:relax_dual-con1} \\
     & & & y_{i}(t)  \in [0,1] \text{ for} \  i \le t \  \text{and} \ t=1,2,\ldots,T, \label{eq:relax_dual-con2}
    \end{align}
    \label{problem:dual-program_relax}
\end{subequations}
\!\!which has a dual variables $y_i(t)$ for packet $i$ and time $t \ge i$.

\begin{remark}
In \cite{8849808},  the authors mentioned that their reformulated problem is equivalent to the original AoI problem without proof. 
For the sake of completeness, we provide rigorous proof regarding the equivalence between Problem~\eqref{prob:integerl-program} and Problem~\eqref{problem} in Lemma~\ref{lemma:problem_equivalence}.
\end{remark}

\subsection{Primal-dual Online Algorithm Design and Analysis}
\label{subsec:PDOA_analysis}
To solve the primal-dual Problems  (\ref{prob:integerl-program_relax}) and (\ref{problem:dual-program_relax}), we develop the Primal-dual-based Online Algorithm
(PDOA) and present it in Algorithm~\ref{alg:primal-dual}.
The input is the channel state pattern $\bf{s}$ (revealed in an online manner), and the outputs are the primal variables $d(t)$ and $z_i(t)$, and the dual variable $y_i(t)$. Two auxiliary variables $L$ and $M$ are also introduced: $L$ denotes the time when the latest ACK was made, and $M$ denotes the ACK marker (PDOA should make an ACK when $M \ge 1$). 

PDOA is a threshold-based algorithm. Assuming that the latest ACK was made at slot $L$, when the accumulated holding costs since slot $L+1$ is no smaller than the ACK cost $c$ (i.e., $M \ge 1$), PDOA will make an ACK at the next ON slot $L'$.  
Here, we call the interval $[L + 1, L']$ an ACK interval.
Note that PDOA updates the primal and dual variables only for packets that are not acked in the current ACK interval $[L + 1, L']$.
Specifically, consider packet $i$ that has not been acked by the current slot $t \in [L + 1, L']$: (i) for the primal variable $z_i(t)$, if the threshold is not achieved ($M < 1$) or the channel is OFF at slot $t$, PDOA will update $z_i(t)$ to be $1$ (in Line~\ref{line:online_z_update1} or Line~\ref{line:online_z_update2}, respectively) since packet $i$ is not acked by slot $t$; (ii) for the dual variable $y_i(t)$, 
if packet $i$ is not the last packet in the current ACK interval, PDOA will update $y_i(t)$ to be $1$ to maximize the dual objective function; otherwise, PDOA will update $y_i(t)$ to $c - c \cdot M$ to ensure that when the threshold is achieved ($M \ge 1$),  the sum of all the dual variables in the current ACK interval is exactly $c$. 

\begin{algorithm}[!t]
\LinesNumbered
\caption{Primal-dual-based Online Algorithm (PDOA)}
\label{alg:primal-dual}
\Input{$c$, $\mathbf{s}$ (revealed in an online manner)}
\Output{$d(t),z_i(t), y_i(t)$}
\Init{$d(t),z_i(t), y_i(t), L,M \gets 0$ for all $i$ and $t$}
\For{$t = 1$ \KwTo $T$}{
    { \small{\Comments{Iterate all the packets arriving since the latest ACK time $L$.}}
    \For{$i = L + 1$ \KwTo $t$ \label{line:inner_interation_begins}}{
       \If(\tcc*[f]{Not ready to ACK}){$M  < 1$ \label{line:d_less_than_1_begin}}{
        ${z_i}(t) \leftarrow 1$\; \label{line:online_z_update1}
        $M \leftarrow M + 1/c$\;
        ${y_i}(t) \leftarrow \min\{1, c- c \cdot M\}$\;
       }\label{line:d_less_than_1_end}
       \If(\tcc*[f]{Ready to ACK}){$M \ge 1$}{
           \eIf(\tcc*[f]{ON channel}){$s(t) = 1$  \label{line:d_larger_than_1_begin} }{
           $d(t) \leftarrow 1$\;
           $M \leftarrow 0$\;
           $L \leftarrow t$\;
           break and go to the next slot (i.e., $t+1$)\; \label{line:slot_ON}
           }(\tcc*[f]{OFF channel}){ 
           ${z_i}(t) \leftarrow 1$\;  \label{line:online_z_update2}
           \label{line:OFF_primal}
           }
        }\label{line:d_larger_than_1_end}
    } \label{line:inner_interation_end} 
    \small{\Comments{At the end of slot $t$, update dual variable $y_i(t)$ with $s(i) = 0$ as:}}
    \For{$i = t$ \textbf{\textit{decrease}} \KwTo $L + 1$ \label{line:final_update_begin}}{
    \eIf{$s(i) = 0$}{
    \If{${y_i}(t) = 0$}{${y_i}(t) \leftarrow 1$\;}
    } {break and go to the next slot;}
    }
    \label{line:final_update_end}}
}
\end{algorithm}

In addition, at the end of slot $t$ (i.e., Lines~\ref{line:final_update_begin}-\ref{line:final_update_end}), 
if the channel is ON at slot $t$, PDOA  will skip slot $t$ and go to slot $t+1$. Otherwise,  the channel is OFF at slot $t$, and assuming that the most recent ON slot is slot $t^\dag \triangleq \max \{\tau: \tau < t \ \text{and} \  s(\tau) = 1 \}$ ($t^\dag \in [L,t)$), then the channels are OFF during $[t^\dag + 1, t]$. To maximize the dual objective function, PDOA updates the dual variables of packet $t^\dag + 1$ to packet $t$ to be $1$ since the channels are OFF during $[t^\dag + 1, t]$ and the updating of their dual variables does not violate constraint~\eqref{eq:relax_dual-con1} 
(see an illustration in Fig.~\ref{fig:online_dual}).
Note that there may be some OFF channels before slot $t^\dag$, but PDOA does not update their dual variables to avoid the violation of constraint~\eqref{eq:relax_dual-con1}. For example, assuming that slot $t'$ ($t' < t^\dag$) is an OFF channel and we let ${y_{t'}}(t) = 1$. Letting ${y_{t'}}(t) = 1$ has no effect on the constraint~\eqref{eq:relax_dual-con1} at slot $t'$ since we always have $s(t')\sum_{i=1}^{t'} \sum_{\tau=t'}^{T} y_{i}(\tau) = 0$, but doing this does impact the constraint~\eqref{eq:relax_dual-con1} at slot $t^\dag$ (i.e., increasing $s(t^\dag)\sum_{i=1}^{t^\dag} \sum_{\tau=t^\dag}^{T} y_{i}(\tau)$ by $1$ because ${y_{t'}}(t)$ is a part of $s(t^\dag)\sum_{i=1}^{t^\dag} \sum_{\tau=t^\dag}^{T} y_{i}(\tau)$ as $t' < t^\dag$ and $t^\dag \le t$), possibly making constraint~\eqref{eq:relax_dual-con1} at slot $t^\dag$ violated.

\begin{figure}[!t]
	\centering
	\subfigure[Primal variables $z_i(t)$ updates.]{
         \includegraphics[scale = 0.47]{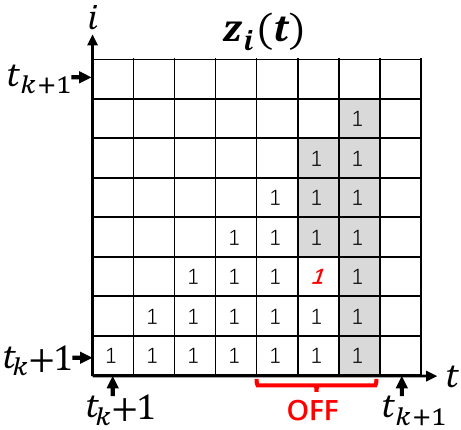}
         \label{fig:online_primal}}
         \qquad
         \subfigure[Dual variables $y_i(t)$ updates. ]{\includegraphics[scale = 0.47]{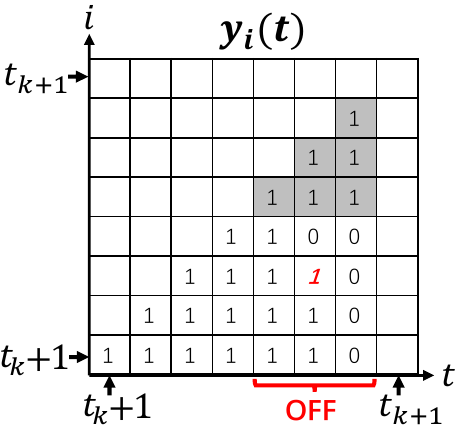}
         \label{fig:online_dual}}
         \caption{The updates of primal variables $z_i(t)$ and dual variables $y_i(t)$ in the $k$-th ACK interval $[t_k+1, t_{k+1}]$ of PDOA, where channels are OFF during $[t_k+5, t_k+7]$. The x-axis represents time and the y-axis represents the packet id. PDOA makes two ACKs at slot $t_k$ and slot $t_{k+1}$, where the ACK cost $c = 18$. 
         The primal variables $z_i(t)$ and dual variables $y_i(t)$ are updated from slot $t_k+1$ to slot $t_{k+1}$; and in slot $t$, packets are updated from packet $t_k+1$ to packet $t$. 
         The red bold italic $1$ denotes when the ACK marker equals or is larger than $1$. 
         In Fig.~\ref{fig:online_primal}, the grey areas denote the updates due to Line~\ref{line:OFF_primal}; In Fig.~\ref{fig:online_dual}, the grey areas denote the updates due to Lines~\ref{line:final_update_begin}-\ref{line:final_update_end}.}
         \label{fig:online_primal_dual_updates}
         \vspace{-10pt}
\end{figure}

\begin{theorem}
    PDOA is 3-competitive. 
    \label{them:online_3}
\end{theorem}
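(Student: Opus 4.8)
The plan is to apply the online primal--dual method to the relaxed LP pair \eqref{prob:integerl-program_relax}--\eqref{problem:dual-program_relax}. Let $P$ denote the total cost of the integral primal solution $\{d(t),z_i(t)\}$ output by PDOA, and let $D=\sum_{t=1}^{T}\sum_{i=1}^{t}y_i(t)$ be the value of the dual solution it builds in parallel. Because \eqref{problem:dual-program_relax} is the dual of a relaxation of the integer program \eqref{prob:integerl-program}, and \eqref{prob:integerl-program} has optimum $OPT(\mathbf{s})$ by Lemma~\ref{lemma:problem_equivalence}, weak duality gives $D\le OPT(\mathbf{s})$. Hence it suffices to establish two facts: (a) the solutions PDOA produces are both primal- and dual-feasible, and (b) $P\le 3D$. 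Together these yield $C(\mathbf{s},\pi)=P\le 3D\le 3\,OPT(\mathbf{s})$, which is exactly Theorem~\ref{them:online_3}.

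First I would verify primal feasibility, the easy part: PDOA sets $z_i(t)=1$ for every packet $i\le t$ that is still unacked at slot $t$ (whether because the marker has not reached the threshold, $M<1$, or because it has but the channel is OFF), and it only ever performs an ACK, $d(t)=1$, on an ON slot. Thus for each pair $(i,t)$ either $z_i(t)=1$ or there is an ACK on an ON slot $\tau\in[i,t]$ with $s(\tau)d(\tau)=1$, which is precisely the covering constraint \eqref{eq:relax_primal-con1}.

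The core of the argument is organized around ACK intervals. Fixing the interval $[L+1,L']$ that ends with an ACK at the ON slot $L'$, I would split its primal cost into three families: the single ACK cost $c$; the \emph{accumulation} holding cost paid while $M<1$; and the \emph{waiting} holding cost paid after $M$ reaches $1$ but while the channel stays OFF until $L'$. The threshold rule keeps the accumulation holding cost on the order of $c$ (each unit of holding raises $M$ by $1/c$ and we stop at $M\ge 1$), and the rule $y_i(t)\leftarrow\min\{1,\,c-c\cdot M\}$ is engineered so that the dual mass collected in the not-ready branch of the interval telescopes to exactly $c$; this covers the ACK cost and the accumulation cost with a factor of $2$. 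The remaining, and genuinely new, piece is the waiting holding cost, which I would charge against the dual increments produced by the final OFF-update loop (Lines~\ref{line:final_update_begin}--\ref{line:final_update_end}): during each maximal OFF run ending at a slot $t$, that loop raises $y_i(t)\leftarrow 1$ for exactly the packets $i$ in the trailing run $[t^{\dag}+1,t]$, and these increments are matched to the extra holding incurred while waiting for the next ON slot, supplying the third factor.

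The hard part will be dual feasibility, and it is inseparable from the waiting-cost charging above. I must show that all the $y_i(t)$ raised to $1$ in the OFF-update loop never violate the knapsack-type constraint \eqref{eq:relax_dual-con1} at any ON slot, i.e.\ $s(t)\sum_{i=1}^{t}\sum_{\tau=t}^{T}y_i(\tau)\le c$. This is exactly why PDOA confines its OFF updates to the run after the most recent ON slot $t^{\dag}$ and refuses to touch duals of OFF packets preceding $t^{\dag}$: as the discussion around Fig.~\ref{fig:online_dual} shows, lifting a dual $y_{t'}(t)$ with $t'<t^{\dag}$ would inflate the left-hand side of \eqref{eq:relax_dual-con1} evaluated at the ON slot $t^{\dag}$ and could break it. I would therefore prove feasibility by tracking, for each ON slot $t$, which $y_i(\tau)$ with $i\le t\le\tau$ have been set and showing their sum stays at most $c$, using the same telescoping identity that makes the per-interval dual equal to $c$. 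Once this invariant holds, the per-interval bound $c+(\text{accumulation})+(\text{waiting})\le 3\cdot(\text{interval dual mass})$ sums over all intervals to give $P\le 3D$, completing the proof.
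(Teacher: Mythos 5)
Your overall scaffolding---weak duality on the LP pair, primal feasibility, per-ACK-interval accounting, and the observation that dual feasibility survives precisely because the OFF-update loop (Lines~\ref{line:final_update_begin}--\ref{line:final_update_end}) only raises duals of packets arriving after the most recent ON slot---coincides with the paper's proof. The genuine gap is in your charging scheme for the holding cost. You split each interval's holding cost into an \emph{accumulation} part (while $M<1$) and a \emph{waiting} part (after $M\ge 1$ but the channel is OFF), and you pay for the waiting part exclusively with the dual mass created by the OFF-update loop. That matching fails: in every waiting slot, \emph{all} unacked packets pay holding cost, including the packets that piled up before the marker crossed the threshold, whereas the OFF-loop only raises duals of packets inside the trailing OFF run. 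Quantitatively, with $a=t'-t_k-1$ accumulation slots and $b=t_{k+1}-t'$ waiting slots, the waiting cost is $ab+b(b+1)/2$ (a $b$-triangle plus an $a\times b$ rectangle), while the OFF-loop dual mass is only about $b(b+1)/2$. Take $c$ large, $a\approx\sqrt{2c}$ (so that the accumulation triangle $a(a+1)/2$ is $\approx c$), and $b=1$: the waiting cost is $\approx a$ while the OFF-loop contributes $\approx 1$, so your charge is off by an unbounded factor. A symptom that something is wrong: if your per-piece charges were valid, you would obtain $P(k)\le(2+1/c)D(k)$, i.e., a competitive ratio strictly better than the $3$ being proved.

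The paper never charges the waiting cost to the OFF-loop duals alone. Instead it proves the joint inequality $C_H(k)\le 2D(k)$ for the \emph{entire} holding cost of the interval, where $D(k)$ collects both the not-ready-branch duals ($T_1(k)=a(a+1)/2$) and the OFF-loop duals ($T_2(k)+R_1(k)\ge b(b+1)/2$); the cross rectangle $ab$ is absorbed using the algebraic identity $a(a+1)/2+b(b+1)/2-ab=[(a-b)^2+a+b]/2\ge 0$ (Eq.~\eqref{eq:triangles_larger_than_rectangle}). In other words, the doubled accumulation dual is essential for paying the old packets' continued holding during the wait; it is not an independent budget that only covers the ACK and accumulation costs. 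Combined with $C_A(k)=c\le D(k)$, this yields $P(k)/D(k)\le 1+2=3$. To repair your proof you must replace the piecewise matching by this (or an equivalent) joint bound in which the waiting-phase rectangle is charged against both dual triangles simultaneously.
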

We provide detailed proof in Section~\ref{sec:keyProof} and explain
the key ideas as follows. We first show that given any channel state $\bf{s}$, 
PDOA produces a feasible solution to primal Problem \eqref{prob:integerl-program_relax} and dual Problem \eqref{problem:dual-program_relax}. 
Then, we show that in any $k$-th ACK interval, the ratio between the primal objective value and the dual objective value (denoted by $P(k)$ and $D(k)$, respectively) is at most $3$, i.e., $P(k)/D(k) \le 3$. This implies that the ratio between the total primal objective value (denoted by $P$) and the total dual objective value (denoted by $D$) is also at most $3$, i.e., $P/D \le 3$.  
By the weak duality \cite{buchbinder2009design}, PDOA is 3-competitive.

\begin{remark}
\label{remark:wrong_cr} 
In \cite{8849808}, the authors also propose a primal-dual-based online algorithm and show that their algorithm achieves a CR of $e/(e-1)$. 
However, this CR is achieved only in an (unrealistic) asymptotic setting (i.e., when the transmission cost $c$ goes to infinity but the time horizon $T$ is finite). 
Specifically, in their algorithm, to maximize the dual objective function, at the end of the last slot $T$, they update certain dual variables $y_i(t)$ that arrived at the OFF slot to be $1$. 
In their analysis (the proof of their Theorem~$7$), they show that because of those dual variables updates at the end of slot $T$, the primal objective value satisfies 
$P \le (1 + 1/({(1 + 1/c)^{\left\lfloor c \right\rfloor }} - 1)) \cdot D + (T(T + 1)/2) \cdot (D/c)$.
When $c$ goes to infinity, their CR becomes $P/D \le e/(e - 1)$ as $(T(T + 1)/2)/c$ goes to $0$ since $T$ is finite. 
However, the optimization problem becomes trivial in this setting since an optimal algorithm is simply not to transmit at all given that the transmission cost $c$ can significantly exceed the total staleness cost (at most $(T(T + 1)/2)$). 
Furthermore, when $c$ is finite, their CR is a quadratic function of the time horizon $T$, which can be very large when $T$ is large. 
Instead, our analysis holds for any $T$ and $c$. In our algorithm, rather than updating these dual variables $y_i(t)$ at the end of slot $T$, we directly update them only 
in the current ACK interval (i.e., Lines~\ref{line:final_update_begin}-\ref{line:final_update_end}), ensuring that the dual constraint~\eqref{eq:relax_dual-con1} is satisfied and the dual objective function is as large as possible.
This enables us to focus on the analysis of $P(k)/D(k)$ in the current ACK interval and show that $P(k)/D(k) \le 3$ for any $k$-th ACK interval, which implies that PDOA is 3-competitive.
\end{remark}
\section{Learning-augmented Online Algorithm}
\label{sec:ml_augmented_algorithm}
Online algorithms are known for their robustness against worst-case scenarios, but they can be overly conservative and may have a poor average performance in typical scenarios. 
In contrast, ML algorithms leverage historical data to train models that excel in average cases. However, they typically lack worst-case performance guarantees when facing distribution shifts or outliers. To attain the best of both worlds, we design a learning-augmented online algorithm that achieves both consistency and robustness.

\subsection{Machine Learning Predictions}

We consider a scenario where the device is provided with ML predictions generated by an external ML algorithm (i.e., the device does not need to run the ML model or generate the predictions itself).
The ML prediction $\mathcal{P} \triangleq \{ {p_1},{p_2},\dots,{p_n}\} $ represents the times to transmit an ACK for the destination (i.e., the prediction $\mathcal{P}$ makes a total of $n$ ACKs and sends the $i$-th ACK at slot $p_i$).
The prediction $\mathcal{P}$ is unaware of the channel state pattern $\mathbf{s}$ and can be provided either in full in the beginning (i.e., $t =0$) or one-by-one in each slot. 
Furthermore, when the prediction $\mathcal{P}$ decides to send an ACK at an OFF slot, we will simply ignore the decision for this particular slot. 

Provided with the prediction $\mathcal{P}$, we specify a trust parameter $\lambda \in (0,1]$ to reflect our confidence in the prediction $\mathcal{P}$: a smaller $\lambda$ means higher confidence. 
The learning-augmented online algorithm takes a prediction $\mathcal{P}$, a trust parameter $\lambda$, and a channel state pattern $\mathbf{s}$ (revealed in an online manner) as inputs, and outputs a solution with a cost of $C(\mathbf{s}, \mathcal{P},\lambda)$.
A learning-augmented online algorithm is said  $\beta(\lambda)$-\textit{robust} ($\beta(\lambda) \ge 1$) and $\gamma(\lambda)$-\textit{consistent} ($\gamma(\lambda) \ge 1$) if its cost satisfies
\begin{equation}
   C(\mathbf{s}, \mathcal{P},\lambda) \le \min \{ \beta(\lambda ) \cdot OPT(\mathbf{s}), \gamma(\lambda ) \cdot C(\mathbf{s},\mathcal{P})\},
\end{equation}
where $OPT(\mathbf{s})$ and $C(\mathbf{s},\mathcal{P})$ is the cost of the optimal offline algorithm and the cost of purely following the prediction $\mathcal{P}$ under the channel state pattern $\mathbf{s}$, respectively.

We aim to design a learning-augmented online algorithm for primal Problem~\eqref{prob:integerl-program_relax} that exhibits two desired properties (i) \textit{consistency}: when the ML prediction $\mathcal{P}$ is accurate ($C(\mathbf{s},\mathcal{P}) \approx OPT(\mathbf{s})$) and trusted, our learning-augmented online algorithm performs closely to the optimal offline algorithm (i.e., $\gamma (\lambda ) \to 1$ as $\lambda \to 0$); 
and (ii) \textit{robustness}: even if the ML prediction $\mathcal{P}$ is inaccurate, our learning-augmented algorithm still retains a worst-case guarantee (i.e., $C(\mathbf{s}, \mathcal{P},\lambda) \le \beta(\lambda ) \cdot OPT(\mathbf{s})$ for any prediction $\mathcal{P}$).

\begin{algorithm}[t!]
\LinesNumbered
\caption{Learning-augmented Primal-dual-based Online Algorithm (LAPDOA)}
\label{alg:ml_primal-dual}
\Input{$c$, $\mathcal{P}, \lambda$, $\mathbf{s}$ (revealed in an online manner)}
\Output{$d(t),z_i(t), y_i(t)$}
\Init{$d(t),z_i(t), y_i(t), L,M \gets 0$ for all $i$ and $t$}
\For{$t = 1$ \KwTo $T$}{
    { \small{\Comments{Iterate all the packets arriving since the most recent ACK time $L$.}}
    \For{$i = L + 1$ \KwTo $t$ \label{line:ml_inner_interation_begins}}{
       \If(\tcc*[f]{Not ready to ACK}){$M  < 1$ \label{line:ml_d_less_than_1_begin}}{
        \eIf{$t \ge \alpha (i)$} {
        \Comments{\small{Big update: prediction already acked packet $i$}}
       $M' \leftarrow 1/{\lambda c}$, $y' \leftarrow 1$\; 
       } {
       \Comments{\small{Small update: prediction did not ack packet $i$ yet}}
       $M' \leftarrow  \lambda / c$, $y' \leftarrow \lambda$\;
       }
        ${z_i}(t) \leftarrow 1$\;
        $M \leftarrow M + M'$\;
        ${y_i}(t) \leftarrow y'$\;
       \label{line:ml_d_less_than_1_end}}
       \If(\tcc*[f]{Ready to ACK}){$M  \ge 1$}{
       \eIf(\tcc*[f]{ON channel}){$s(t) = 1$}{
       $d(t) \leftarrow 1$\;  \label{line:d_t_update}
       $M \leftarrow 0$\;
       $L \leftarrow t$\;
       break and go to the next slot (i.e., $t+1$)\;
       }(\tcc*[f]{OFF channel}){\If{${z_i}(t) \ne 1$}{
       \Comments{\small{Zero update}}
       ${z_i}(t) \leftarrow 1$\;
       \label{line:ml_OFF_primal}
       }}
       }}      \label{line:ml_inner_interation_end} 
    \small{\Comments{At the end of slot $t$, update dual variable $y_i(t)$ with $s(i) = 0$ as:}}
    \For{$i = t$ \textbf{\textit{decrease}} \KwTo $L + 1$ \label{line:ml_final_update_begin}}{
    \eIf{$s(i) = 0$}{
    \If{${y_i}(t) = 0$}{${y_i}(t) \leftarrow 1$\;}
    } {break and go to the next slot;}
    }
    \label{line:ml_final_update_end}}
}
\end{algorithm}

\subsection{Learning-augmented Online  Algorithm Design}

We present our Learning-augmented Primal-dual-based Online Algorithm (LAPDOA) in Algorithm~\ref{alg:ml_primal-dual}. LAPDOA behaves similarly to PDOA, but the updates of primal variables and dual variables incorporate the ML prediction $\mathcal{P}$.

In LAPDOA, two additional auxiliary variables $M'$ and $y'$ are used, where $M'$ denotes the increment of the ACK marker $M$ and $y'$ denotes the increment of the dual variables $y_i(t)$ in each iteration of update.
Assuming that the current time is $t$, let $\alpha(t)$ denote the next time when the prediction $\mathcal{P}$ sends an ACK (i.e., $\alpha (t) \triangleq \min \{ {p_i}:{p_i} \ge t\} $ and $\alpha (t) = \infty $ if $t > {p_n}$). 
For the updates of primal and dual variables of an unacked packet $i$ at slot $t$, based on the relationship between the current time $t$ and $\alpha(i)$ (which is also the time when the prediction $\mathcal{P}$ makes an ACK for packet $i$ because packet $i$ arrives at slot $i$),  we classify them into three types: 
\begin{itemize}
    \item Big updates: those updates make $M' \gets 1/\lambda c$, $y' \gets 1$, and $z_i(t) \gets 1$. The big updates are made when LAPDOA is behind the ACK scheduled by the prediction $\mathcal{P}$ (i.e., $t \ge \alpha (i)$), and it tries to catch up the prediction $\mathcal{P}$ by making a big increase in the ACK marker. 
    \item Small updates: those updates make $M' \gets \lambda / c$, $y' \gets \lambda$, and $z_i(t) \gets 1$. The small updates are made when LAPDOA is ahead of the ACK scheduled by the prediction $\mathcal{P}$ (i.e., $t < \alpha (i)$), and LAPDOA tries to slow down its ACK rate by making a small increase in the ACK marker.
    \item Zero updates: those updates make $M' \gets 0$, $y' \gets 0$, and $z_i(t) \gets 1$. The zero updates are made when LAPDOA is supposed to ACK at some slot $t'$ but finds that slot $t'$ is OFF, and it has to delay its ACK to the next ON slot and pay the holding cost (i.e., $z_i(t) =1$) along the way. 
\end{itemize} 
An illustration of these three types of updates is in Fig.~\ref{fig:ml_primal_dual_updates}.

\begin{figure*}[!t]
	\centering
	\subfigure[Auxiliary variables $M'$ updates.]{
         \includegraphics[scale = 0.55]{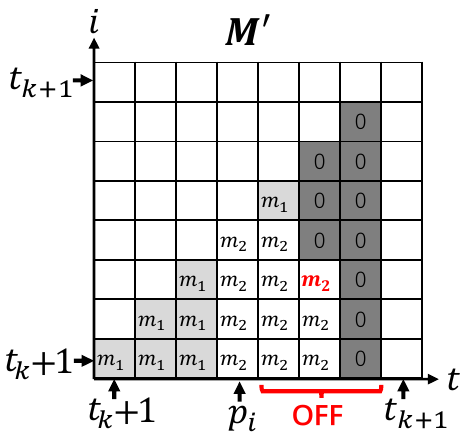}
         \label{fig:ml_primal_d}}
         \quad \quad
         \subfigure[Primal variables $z_i(t)$ 
 updates. ]{\includegraphics[scale = 0.55]{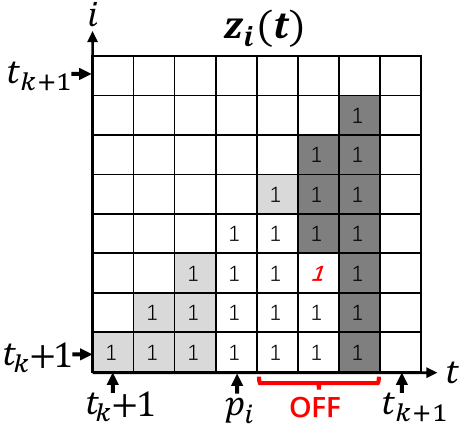}
         \label{fig:ml_primal_z}}
         \quad \quad
         \subfigure[Dual variables $y_i(t)$ updates.]{
         \includegraphics[scale = 0.55]{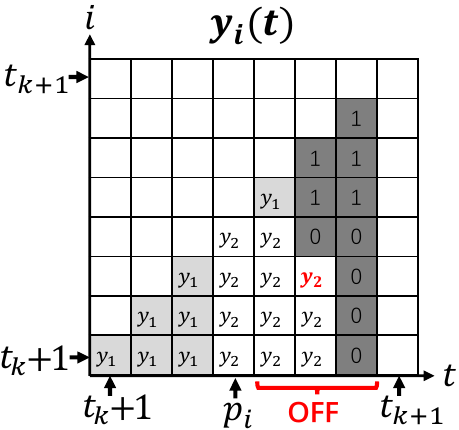}
         \label{fig:ml_dual}}
         \caption{The updates of variables in the $k$-th ACK interval $[t_k+1, t_{k+1}]$ of LAPDOA, where channels are OFF during  $[t_k+5, t_k+7]$. LAPDOA makes two ACKs at $t_k$ and $t_{k+1}$, and the ML prediction $\mathcal{P}$ makes its $i$-th ACK at slot $t_k+4$. The red bold italic value denotes when the ACK marker $M \ge 1$. 
         Let ${m_1} = \lambda /c$, ${m_2} = 1/\lambda c$, ${y_1} = \lambda $, and ${y_2} = 1$.
         The light grey area denotes the small updates, the white area (without background) denotes the big updates, and the dark grey area denotes the zero updates.
         }   \label{fig:ml_primal_dual_updates}
\end{figure*}

In addition, at the end of slot $t$ (i.e., Lines~\ref{line:ml_final_update_begin}-\ref{line:ml_final_update_end}), if the channel is OFF at slot $t$, similar to PDOA, to maximize the dual objective function, LAPDOA updates the dual variables of the packets that arrive after the most recent ON slot (which is assumed to be slot $t^\dag \triangleq \max \{\tau: \tau < t \ \text{and} \  s(\tau) = 1 \}$ ($t^\dag \in [L,t)$)) and the updating of their dual variables does not violate constraint~\eqref{eq:relax_dual-con1} since the channels are OFF during $[t^\dag + 1, t]$
(see an illustration in Fig.~\ref{fig:ml_dual}).

\subsection{Learning-augmented Online Algorithm Analysis}
In this subsection, we focus on the consistency and robustness analysis of LAPDOA with $\lambda \in (0, 1]$. The special cases of LAPDOA with $\lambda=0$ and $\lambda=1$ correspond to the cases that LAPDOA follows the prediction $\mathcal{P}$ purely and PDOA, respectively.
It is noteworthy that by choosing different values of $\lambda$, LAPDOA exhibits a crucial trade-off between consistency and robustness.

\begin{theorem}
    For any channel state pattern $\mathbf{s}$,  any prediction $\mathcal{P}$, any parameter $\lambda \in (0,1]$, and any ACK cost $c$, LAPDOA outputs an almost feasible solution (within a factor of $c/(c+1)$) with a cost of:
    when $\lambda \in (0, 1/c]$, 
    \begin{align}
        \nonumber {C}({\mathbf{s}},{\cal P},\lambda ) \le & \min \{ ({3}/{\lambda }) \cdot (({{c + 1}})/{c}) \cdot OPT({\mathbf{s}}),\\
        & (1 + \lambda ){C_H}({\mathbf{s}},{\cal P}) + {C_A}({\mathbf{s}},{\cal P})\},
    \end{align}
    and when $\lambda \in (1/c, 1]$, 
    \begin{align}
        & C(\mathbf{s}, \mathcal{P}, \lambda) \le \nonumber \min \{ ({3}/{\lambda }) \cdot (({{c + 1}})/{c})  \cdot OPT({\mathbf{s}}), \\
        & (\lambda  + 2)C_H(\mathbf{s},\mathcal{P}) + (1 / \lambda  + 2) \cdot \left\lceil {\lambda c} \right\rceil \cdot C_A(\mathbf{s},\mathcal{P})/{c}\},
    \end{align}
    where $C_A(\mathbf{s},\mathcal{P})$ and $C_H(\mathbf{s},\mathcal{P})$ denote the total ACK costs and total holding costs of prediction $\mathcal{P}$ under $\mathbf{s}$, respectively;
    and $C(\mathbf{s},\mathcal{P}) = C_A(\mathbf{s},\mathcal{P}) + C_H(\mathbf{s},\mathcal{P})$.
    \label{them:ml_primal_dual}
\end{theorem}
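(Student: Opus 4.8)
\emph{The plan is to} reuse the primal--dual scaffolding behind Theorem~\ref{them:online_3}, but to re-examine every increment now that the marker step is $M'\in\{\lambda/c,\,1/(\lambda c)\}$ and the dual step is $y'\in\{\lambda,1\}$. I would organize the argument into three blocks: (almost-)feasibility, the robustness inequality, and the consistency inequality, with the case split at $\lambda=1/c$ appearing only in the last block.

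\emph{First}, I would check feasibility. Primal feasibility is immediate because every still-unacked packet $i$ in each slot $t$ of an ACK interval has $z_i(t)\gets 1$ set by one of the three update types, so constraint~\eqref{eq:relax_primal-con1} holds verbatim as in PDOA. For the dual, the point is to bound, for each ON slot $t$, the left-hand side of~\eqref{eq:relax_dual-con1}, which reduces to the dual mass of the single ACK interval containing $t$ (the contributions of already-acked packets and of not-yet-started intervals vanish, exactly as in PDOA). Inside one interval we have $M<1$ up to the threshold, and since a small update yields $c$ units of dual per unit of $M$ while a big update yields only $\lambda c\le c$, the dual accumulated while $M<1$ stays below $c$; the final increment adds at most $1$, and the end-of-slot OFF updates (Lines~\ref{line:ml_final_update_begin}--\ref{line:ml_final_update_end}) touch only the constraint at the most recent ON slot $t^\dagger$. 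Hence the left-hand side of~\eqref{eq:relax_dual-con1} never exceeds $c+1$, so multiplying the dual vector by $c/(c+1)$ restores feasibility---precisely the claimed ``almost feasible within a factor of $c/(c+1)$.''

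\emph{Second}, for robustness I would prove the per-interval bound $P(k)/D(k)\le 3/\lambda$. Splitting $P(k)$ into the single ACK cost $c$, the holding cost incurred while $M<1$, and the extra holding incurred while waiting for the next ON slot, I would charge each piece to $D(k)$: a small (resp.\ big) update has holding-to-dual ratio $1/\lambda$ (resp.\ $1$), so the $M<1$ holding is at most $(1/\lambda)D(k)$; the marker reaches $1$ only after the interval's dual mass is at least $\lambda c$, so the ACK cost obeys $c\le(1/\lambda)D(k)$; and the OFF-waiting holding is absorbed by the OFF end-of-slot dual updates into another $(1/\lambda)D(k)$ term, mirroring the third term in the PDOA analysis. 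Summing over intervals gives $P\le(3/\lambda)D$, and combining with the almost-feasibility bound $\tfrac{c}{c+1}D\le OPT(\mathbf{s})$ from weak duality yields $C(\mathbf{s},\mathcal{P},\lambda)=P\le\tfrac{3}{\lambda}\cdot\tfrac{c+1}{c}\cdot OPT(\mathbf{s})$, the robustness term in both cases.

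\emph{Third---and this is the hard part---}I would obtain the consistency bounds by a charging argument that compares LAPDOA's ACKs and holding against the prediction's. The structural key is that packet $i$ receives a big update exactly once the prediction has already acked it ($t\ge\alpha(i)$) and a small update while LAPDOA still leads the prediction ($t<\alpha(i)$); this is the precise mechanism by which LAPDOA accelerates when it lags and decelerates when it leads. When $\lambda\le 1/c$ a single big update already drives $M\ge1$, so LAPDOA fires at most one ACK per prediction ACK and its ACK cost is bounded by $C_A(\mathbf{s},\mathcal{P})$, while the slow-down phase charges its holding to $C_H(\mathbf{s},\mathcal{P})$ with only a $(1+\lambda)$ overhead. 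When $\lambda>1/c$, however, $\lceil\lambda c\rceil$ big updates are needed to cross the threshold, which is exactly what inflates the ACK coefficient to $(1/\lambda+2)\lceil\lambda c\rceil/c$ and the holding coefficient to $(\lambda+2)$. I expect the delicate bookkeeping to lie in (i) pinning down the boundary intervals where LAPDOA's and the prediction's ACKs fail to align one-to-one, and (ii) folding the OFF-channel zero updates into the correct charging bucket so the holding accounting stays tight; the threshold $\lambda=1/c$ is simply the point at which one big update does or does not suffice, which is why the two regimes carry different constants.
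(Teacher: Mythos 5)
Your plan follows the paper's own architecture exactly: Theorem~\ref{them:ml_primal_dual} is obtained there by combining Lemma~\ref{lemma:ml_robustness} (robustness) and Lemma~\ref{lemma:ml_consistency} (consistency), with robustness resting on Lemma~\ref{lemma:ml_almost_feasible_solution} (primal feasibility plus dual feasibility after scaling by $c/(c+1)$) and a per-interval bound $P(k)/D(k)\le 3/\lambda$ fed into weak duality. Your first two blocks are essentially the paper's arguments: your ``dual mass per unit of marker'' observation ($c$ for small updates, $\lambda c\le c$ for big ones, plus one final increment of at most $1$) reaches the same ``$<c+1$'' conclusion as the paper's edge-case computation, and your three-way split of $P(k)$ reproduces the paper's robustness accounting---with the caveat that the holding cost of pre-$t'$ packets during the OFF stretch (the ``rectangle'') cannot be absorbed into the OFF-slot dual updates alone, but must be split across both triangles via the inequality $(a-b)^2+a+b\ge 0$; since you explicitly defer to the PDOA third-term analysis (Eq.~\eqref{eq:triangles_larger_than_rectangle}), which does exactly this, I read that as the same argument. (One device you omit: to make per-update ACK-cost charging rigorous, the paper analyzes the pessimistic variant $d(t)\leftarrow M$, so that each big/small update carries an ACK cost of $1/\lambda$ or $\lambda$.)

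The genuine gap is in the consistency bound \eqref{eq:consistency_2} for $\lambda\in(1/c,1]$: you never bound the zero updates, and you misattribute the constants. The $\lceil\lambda c\rceil$ factor does come from needing $\lceil\lambda c\rceil$ big updates per prediction ACK, but that alone (the paper's Case~1, all channels ON) yields only $(\lambda+1)C_H(\mathbf{s},\mathcal{P})+(1/\lambda+1)\lceil\lambda c\rceil C_A(\mathbf{s},\mathcal{P})/c$; the jump from ``$+1$'' to ``$+2$'' in \emph{both} coefficients is produced entirely by the OFF-channel zero updates, which you leave as ``delicate bookkeeping.'' These are hard precisely because, when $\lambda>1/c$, a single big update no longer forces $M\ge 1$, so LAPDOA can lag the prediction and zero updates can accumulate for packets the prediction has \emph{already} acked---so the one-to-one charge to $C_H(\mathbf{s},\mathcal{P})$ that works for $\lambda\le 1/c$ (where LAPDOA acks whenever the prediction effectively acks) simply fails. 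The paper's missing idea is to split, in each ACK interval, the zero-update holding $C_{H,z}(k)$ into a triangle $C_{H,z}^1(k)$ (packets arriving during the OFF stretch; these arrive at OFF slots, so the prediction cannot have acked them, and they are pooled with the small updates and charged to $C_H(\mathbf{s},\mathcal{P})$) and a rectangle $C_{H,z}^2(k)$ (older packets waiting out the OFF stretch), then bound the rectangle by the adjacent triangles via the same geometric inequality used in robustness (as in Eq.~\eqref{eq:robustness_proof_holding_dual_ratio}), giving $C_{H,z}^2\le C_{H,s}+C_{H,b}+C_{H,z}^1\le C_H(\mathbf{s},\mathcal{P})+\lceil\lambda c\rceil C_A(\mathbf{s},\mathcal{P})/c$. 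Summing these pieces is what produces $(\lambda+2)C_H(\mathbf{s},\mathcal{P})+(1/\lambda+2)\lceil\lambda c\rceil C_A(\mathbf{s},\mathcal{P})/c$; without this step your plan cannot reach the stated bound.
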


Next, we show that LAPDOA has the robustness guarantee in Lemma~\ref{lemma:ml_robustness} and the consistency guarantee in Lemma~\ref{lemma:ml_consistency}.
Combining Lemmas~\ref{lemma:ml_robustness} and \ref{lemma:ml_consistency}, we can conclude Theorem~\ref{them:ml_primal_dual}.

\begin{lemma}
    (Robustness)  For any ON/OFF input instance $\mathbf{s}$,  any prediction $\mathcal{P}$, any parameter $\lambda \in (0,1]$, and any ACK cost $c$, LAPDOA outputs a solution which has a cost of
    \begin{equation}
        C(\mathbf{s}, \mathcal{P}, \lambda)  \le (3/\lambda ) \cdot ((c + 1)/c) OPT(\mathbf{s}).
    \end{equation}
    \label{lemma:ml_robustness}
\end{lemma}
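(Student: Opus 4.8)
The plan is to run the same primal--dual accounting that establishes Theorem~\ref{them:online_3}, but to carry the scaling introduced by the big, small, and zero updates of Algorithm~\ref{alg:ml_primal-dual}. Write $P=C(\mathbf{s},\mathcal{P},\lambda)$ for the objective value~\eqref{eq:primal-obj} of the integral solution LAPDOA outputs and $D=\sum_{t}\sum_{i}y_i(t)$ for its dual value~\eqref{eq:relax_dual-obj}. The lemma then follows from two facts: (i) a weak-duality bound $OPT(\mathbf{s})\ge \tfrac{c}{c+1}D$, obtained once we show LAPDOA's dual becomes feasible after scaling by $c/(c+1)$; and (ii) a per-ACK-interval inequality $P(k)\le \tfrac{3}{\lambda}D(k)$, which sums to $P\le\tfrac{3}{\lambda}D$. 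Chaining the two gives $C(\mathbf{s},\mathcal{P},\lambda)=P\le\tfrac{3}{\lambda}D\le\tfrac{3}{\lambda}\cdot\tfrac{c+1}{c}\,OPT(\mathbf{s})$; here I use that, by Lemma~\ref{lemma:problem_equivalence} and LP relaxation, $OPT(\mathbf{s})$ dominates the optimum of the primal LP~\eqref{prob:integerl-program_relax} and hence any feasible dual value of~\eqref{problem:dual-program_relax}.

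For (i), primal feasibility is immediate: any packet $i$ still unacked at slot $t$ of its interval is assigned $z_i(t)\gets1$ by exactly one of the big, small, or zero branches, while an acked packet is covered by an ON ACK $s(\tau)d(\tau)=1$, so constraint~\eqref{eq:relax_primal-con1} holds and $P$ is a genuine schedule cost. For dual near-feasibility I would exploit that at an ON slot $t$ the end-of-slot updates (Lines~\ref{line:ml_final_update_begin}--\ref{line:ml_final_update_end}) contribute nothing to the left-hand side of~\eqref{eq:relax_dual-con1}: those updates only raise $y_i(\tau)=1$ along a maximal OFF run, and an ON slot $t$ breaks every OFF run, so no term with $i\le t\le\tau$ survives. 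Thus at an ON slot only the marker-advancing mass of the current interval is active, and the increment identities $y'=c\,M'$ (small) and $y'=c\lambda\,M'$ (big) give $y'\le c\,M'$ throughout; since the marker lies below $1$ right before its final increment and the final $y'\le1$, the interval's marker-advancing mass is strictly below $c+1$. Hence the left-hand side of~\eqref{eq:relax_dual-con1} never exceeds $c+1$, scaling all $y_i(t)$ by $c/(c+1)$ restores feasibility, and weak duality yields $OPT(\mathbf{s})\ge\tfrac{c}{c+1}D$.

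The crux, and the step I expect to be hardest, is the per-interval bound (ii). In the $k$-th interval I would split $P(k)$ into the single ACK cost $c$, the accumulation holding cost $H_{\mathrm{acc}}(k)$ (the $z_i(t)\gets1$ set while $M<1$), and the waiting holding cost $H_{\mathrm{wait}}(k)$ (the zero updates incurred while $M\ge1$ but the channel stalls the ACK). Writing the marker-advancing dual as $D_{\mathrm{MA}}(k)$ and the end-of-slot dual as $D_{\mathrm{EOS}}(k)$, the identities above together with $\sum M'\ge1$ give both $c\le\tfrac{1}{\lambda}D_{\mathrm{MA}}(k)$ and $H_{\mathrm{acc}}(k)\le\tfrac{1}{\lambda}D_{\mathrm{MA}}(k)$, so $c+H_{\mathrm{acc}}(k)\le\tfrac{2}{\lambda}D_{\mathrm{MA}}(k)$ (the factor $1/\lambda$ enters precisely because a small update deposits only $\lambda$ of dual per unit of holding). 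The delicate term is $H_{\mathrm{wait}}(k)$: over a stalled OFF stretch of length $w$ following an accumulation phase of length $a$, the zero-update holding equals $a\,w+\tfrac{w(w+1)}{2}$, whereas the end-of-slot loop deposits only $D_{\mathrm{EOS}}(k)=\tfrac{w(w+1)}{2}$ on that same stretch, leaving the cross term $a\,w$ with no local dual to pay for it. I would absorb it by an AM--GM step: the accumulation length obeys $a=O(\sqrt{c/\lambda})$ (the $\binom{a+1}{2}$ marker-advancing updates must reach the threshold), so $a\,w\le\tfrac{1}{\lambda}\bigl(D_{\mathrm{MA}}(k)+D_{\mathrm{EOS}}(k)\bigr)$; distributing this across the two dual pools then yields $P(k)=\bigl(c+H_{\mathrm{acc}}(k)\bigr)+a\,w+D_{\mathrm{EOS}}(k)\le\tfrac{3}{\lambda}\bigl(D_{\mathrm{MA}}(k)+D_{\mathrm{EOS}}(k)\bigr)=\tfrac{3}{\lambda}D(k)$. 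Summing over intervals and invoking (i) finishes the argument. The real work is the careful bookkeeping of this triangular waiting cost against $D_{\mathrm{EOS}}(k)$ and the $\lambda$-scaled marker dual $D_{\mathrm{MA}}(k)$, handled uniformly over the big/small switch at $t=\alpha(i)$ and over all $\lambda\in(0,1]$.
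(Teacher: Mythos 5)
Your proposal is correct and follows essentially the same route as the paper's proof: primal feasibility plus dual near-feasibility within a factor of $c/(c+1)$ (the paper's Lemma~\ref{lemma:ml_almost_feasible_solution}), a per-ACK-interval bound $P(k)\le(3/\lambda)D(k)$ obtained by splitting the holding cost into an accumulation triangle, a waiting triangle, and a cross rectangle that is absorbed by the same AM--GM step as the paper's inequality~\eqref{eq:triangles_larger_than_rectangle}, and finally weak duality after scaling the dual variables. The only differences are organizational: you charge the true ACK cost $c$ against the threshold condition $m/\lambda+\lambda n\ge c$ instead of the paper's device of inflating $d(t)$ to $M$ and distributing ACK cost $1/\lambda$ or $\lambda$ per update, and your pointwise inequality $y'\le c\,M'$ replaces the paper's aggregated case analysis for the $<c+1$ dual-constraint bound.
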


We provide detailed proof in Appendix~\ref{appendix:ml_robustness_proof} and explain
the key ideas as follows. 
We first show that LAPDOA produces a feasible primal solution and an almost feasible dual solution (with a factor of $c/(c+1)$). 
Then, we show that in any $k$-th ACK interval,
LAPDOA  achieves $P(k)/{D}(k) \le 3/\lambda $.
This implies that LAPDOA also achieves $ P/ {D} \le 3/\lambda $ on the entire instance. 
Finally, by scaling down all dual variables ${y_i}(t)$ generated by LAPDOA by a factor of $c/(c + 1)$, we obtain a feasible dual solution with a dual objective value of $(c/(c + 1)) \cdot D$. By the weak duality, we have $P/OPT \le P/((c/(c + 1)) \cdot D) = (P/D) \cdot ((c + 1)/c)  \le  (3/\lambda) \cdot ((c + 1)/c)  $.

\begin{lemma}
    (Consistency)
    For any channel state pattern $\mathbf{s}$,  any prediction $\mathcal{P}$, any parameter $\lambda \in (0,1]$, and any ACK cost $c$, LAPDOA outputs a solution with a cost of:
    when $\lambda \in (0, 1/c]$, 
    \begin{equation}
        C(\mathbf{s}, \mathcal{P}, \lambda) \le (1+\lambda)C_H(\mathbf{s},\mathcal{P}) + C_A(\mathbf{s},\mathcal{P}),
        \label{eq:consistency_1}
    \end{equation}
    and when $\lambda \in (1/c, 1]$, 
    \begin{align}
        & \nonumber C(\mathbf{s}, \mathcal{P}, \lambda)  \le  \\
        & (\lambda  + 2)C_H(\mathbf{s},\mathcal{P}) + (1 / \lambda  + 2) \cdot \left\lceil {\lambda c} \right\rceil \cdot {C_A(\mathbf{s},\mathcal{P})}/{c},\label{eq:consistency_2}
    \end{align}
    where $C_A(\mathbf{s},\mathcal{P})$ and $C_H(\mathbf{s},\mathcal{P})$ denote the total ACK cost and total holding cost of prediction $\mathcal{P}$ under $\mathbf{s}$, respectively; and $C(\mathbf{s},\mathcal{P}) = C_A(\mathbf{s},\mathcal{P}) + C_H(\mathbf{s},\mathcal{P})$.
    \label{lemma:ml_consistency}
\end{lemma}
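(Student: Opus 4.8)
The plan is to bound LAPDOA's primal cost $C(\mathbf{s},\mathcal{P},\lambda)$ directly against the prediction's cost by a charging argument, without invoking duality; the primal iterate is feasible by construction, and the dual side is only needed for the robustness bound in Lemma~\ref{lemma:ml_robustness}. First I would split $C(\mathbf{s},\mathcal{P},\lambda)=(\text{ACK cost})+(\text{holding cost})$, where the holding cost is the number of pairs $(i,t)$ with $z_i(t)=1$. Since each such pair is created by exactly one update, the holding cost equals $N_b+N_s+N_z$, the total counts of big, small, and zero updates. The strategy is then to charge $N_s$ (and the waiting zero updates) to the prediction's holding cost $C_H$, and to charge the big updates together with the ACKs they trigger to the prediction's ACK cost $C_A$; the two regimes of $\lambda$ differ only in how many big updates are needed to fire one ACK.

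For the ACK cost I would use marker accounting: every ACK resets $M$ from a value $\ge 1$ to $0$, while $M$ rises by $1/(\lambda c)$ per big update and $\lambda/c$ per small update (zero updates leave $M$ fixed). Hence the number of ACKs is at most $N_b/(\lambda c)+N_s\lambda/c$, so the ACK cost is at most $N_b/\lambda+\lambda N_s$. In the regime $\lambda\le 1/c$ this is wasteful, because $1/(\lambda c)\ge 1$ means a single big update already forces $M\ge 1$; there I would instead use the sharper fact that each LAPDOA ACK interval contains at most one big update, so the big-triggered ACKs number exactly $N_b$ and cost $cN_b$, while every small-only interval needs at least $c/\lambda$ small updates and hence contributes ACK cost at most $\lambda N_s$ in aggregate. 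In the regime $\lambda>1/c$ a catch-up needs $\lceil\lambda c\rceil$ big updates, which is the source of the $\lceil\lambda c\rceil$ factor in Eq.~\eqref{eq:consistency_2}.

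The heart of the argument is charging the holding to $C_H$. The key observation is that a small (or a waiting zero) update on packet $i$ at slot $t$ occurs only while $t<\alpha(i)$, i.e. before the prediction's scheduled ACK for packet $i$; thus packet $i$ is still unacked by the prediction at slot $t$ and contributes one unit to $C_H$. Because each $(i,t)$ pair is processed at most once, this is an \emph{injection} of those updates into prediction-holding pairs, so $N_s$ (plus the small-only zero updates) is at most $C_H$. For the big updates, when $\lambda\le 1/c$ the single big update of an interval sits exactly at a prediction ACK slot $\alpha$, where the prediction holds $\alpha-L\ge 1$ packets; charging that one unit of big-update holding to this otherwise-unused slot, and the big-triggered ACK cost $cN_b\le C_A$ to the matching prediction ACKs ($N_b\le C_A/c$), yields $\text{holding}\le C_H$ and $\text{ACK}\le C_A+\lambda C_H$, which is precisely the clean bound $(1+\lambda)C_H+C_A$ of Eq.~\eqref{eq:consistency_1}. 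When $\lambda>1/c$ the up-to-$\lceil\lambda c\rceil$ big updates fall at slots $t\ge\alpha$ where the prediction no longer holds, so their holding cannot be absorbed locally and must be bounded globally by $\lceil\lambda c\rceil\cdot(C_A/c)$, producing the looser constants of Eq.~\eqref{eq:consistency_2}.

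The main obstacle is the bookkeeping forced by OFF slots and by the regime $\lambda>1/c$. When a scheduled prediction ACK lands on an OFF slot it is ignored, so the prediction keeps holding past $\alpha(i)$ while LAPDOA, upon reaching $M\ge 1$ on an OFF slot, is driven into a run of zero updates until the next ON slot; I would argue these two delays are synchronized, so the waiting zero updates still inject into $C_H$ and no uncharged term appears. The genuinely harder case is $\lambda>1/c$: a catch-up now needs $\lceil\lambda c\rceil$ big updates spread over slots $t\ge\alpha$ that the prediction has already cleared, and if this burst crosses $M=1$ on an OFF slot it also spawns zero updates at such slots. Neither can be charged locally to prediction holding, so both must be absorbed into the slacker constants — the extra $+1$ on the $C_H$ coefficient and the extra $+2$ multiplying $\lceil\lambda c\rceil C_A/c$. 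Verifying that these global (non-injective) bounds close exactly, while keeping the clean injection intact in the $\lambda\le 1/c$ case and ensuring no prediction-holding pair is charged twice across the several LAPDOA intervals that may sit inside one prediction interval, is the delicate part; the two regimes then combine to give Lemma~\ref{lemma:ml_consistency}, which with Lemma~\ref{lemma:ml_robustness} yields Theorem~\ref{them:ml_primal_dual}.
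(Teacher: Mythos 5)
Your overall strategy is the same as the paper's: classify LAPDOA's updates into big, small, and zero updates, spread the ACK cost over updates via marker accounting (the paper's device of pretending $d(t)=M$ and charging $1/\lambda$ per big and $\lambda$ per small update is exactly your marker-conservation argument), charge small and zero updates to $C_H(\mathbf{s},\mathcal{P})$, and charge big updates to $C_A(\mathbf{s},\mathcal{P})$ with the counts $1$ versus $\lceil\lambda c\rceil$ per prediction ACK in the two regimes. Your treatment of $\lambda\in(0,1/c]$ matches the paper's proof of \eqref{eq:consistency_1}. One imprecision there: your ``key observation'' that a zero update on packet $i$ at slot $t$ occurs only while $t<\alpha(i)$ is false -- zero updates routinely occur at $t\ge\alpha(i)$ when the scheduled ACK time $\alpha(i)$ lands on an OFF slot. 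The statement that actually makes your injection work in this regime is that LAPDOA acks at every ON slot where the prediction executes an ACK (a single big update already forces $M\ge 1$ there), so LAPDOA's unacked packets are at all times a subset of the prediction's, and hence every holding unit of LAPDOA -- small, zero, or big -- injects into a distinct prediction-holding pair.

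The genuine gap is the step you explicitly defer: bounding the zero-update holding in the regime $\lambda\in(1/c,1]$ when OFF slots are present. In that regime LAPDOA does \emph{not} ack at every executed prediction ACK (a catch-up needs up to $\lceil\lambda c\rceil$ big updates), so during an OFF run LAPDOA can be holding packets that the prediction already cleared at an earlier ON slot; the holding cost of the zero updates on those packets cannot be injected into $C_H$ by any synchronization argument, and your proposal only asserts that it ``must be absorbed into the slacker constants'' and calls the verification delicate -- that verification is the proof. The paper closes it as follows: in each ACK interval whose marker crosses $1$ at an OFF slot $t'$, the zero-update holding is split into a triangle $C_{H,z}^1(k)$ of packets arriving during the OFF run $[t',t_{k+1}-1]$, which is chargeable to $C_H$ because the prediction cannot ack on OFF slots either, and a rectangle $C_{H,z}^2(k)$ of older packets held across the run; the rectangle is then bounded by the two triangles flanking it, $C_{H,z}^2(k)\le C_{H,s}(k)+C_{H,b}(k)+C_{H,z}^1(k)$, via the geometric inequality $ab\le\left[a(a+1)+b(b+1)\right]/2$ already used in \eqref{eq:triangles_larger_than_rectangle}, and summing over intervals gives $C_{H,z}^2\le C_H(\mathbf{s},\mathcal{P})+\lceil\lambda c\rceil C_A(\mathbf{s},\mathcal{P})/c$. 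That last bound is exactly the extra ``$+1$'' in each coefficient of \eqref{eq:consistency_2}; without it (or an equivalent bound on the rectangle term), your proof of the second case does not close, while the rest of your outline is sound.
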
 

We provide detailed proof in Appendix~\ref{appendix:ml_consistency_proof} and give a proof sketch as follows. 
In general,  LAPDOA generates three types of updates: big updates, small updates, and zero updates. 
Our idea is to bound the total cost of each type of update by the cost of the algorithm that purely follows the prediction.
In the case of $\lambda \in (0, 1/c]$, we show that the total number of big updates is $C_A(\mathbf{s},\mathcal{P})/c$, and each big update increases the primal objective value by $c$, so the total cost of big updates in LAPDOA is $C_A(\mathbf{s},\mathcal{P})$. In addition, the total number of small updates and zero updates can be shown to be bounded by $C_H(\mathbf{s},\mathcal{P})$, and each small update or zero update incurs a cost at most $1+\lambda$, thus the total cost of small and zero updates in LAPDOA is at most $(1+\lambda)C_H(\mathbf{s},\mathcal{P})$. In summary, the total cost of LAPDOA in the case of $\lambda \in (0, 1/c]$ is bounded by $(1+\lambda)C_H(\mathbf{s},\mathcal{P}) + ((c+1)/c)C_A(\mathbf{s},\mathcal{P})$. A similar bound can be also obtained for the case of $\lambda \in (1/c, 1]$.

\begin{remark}
     When we trust the ML prediction (i.e., $\lambda \to 0$) and the ML prediction is accurate at the same time ($C(\mathbf{s},\mathcal{P}) \approx OPT(\mathbf{s})$), our learning-augmented algorithm also performs nearly to the optimal offline algorithm, achieving consistency. 
\end{remark}

\begin{remark}
With any $\lambda \in (0,1]$, the CR of LAPDOA is at most $(3/\lambda ) \cdot ((c + 1)/c)$, regardless of the prediction quality. This indicates that our learning-augmented algorithm has
the worst-case performance guarantees, achieving robustness. 
\end{remark}

\section{Numerical Results}
\label{sec:simulation}
In this section, we perform simulations using both synthetic data and real trace data to show that our online algorithm PDOA outperforms the State-of-the-Art online algorithm and that our learning-augmented online algorithm LAPDOA achieves consistency and robustness.

\begin{figure*}[!t]
        \begin{minipage}[t]{0.6\textwidth}
	\centering
	\subfigure[Bernoulli process]{
         \includegraphics[scale = 0.35]{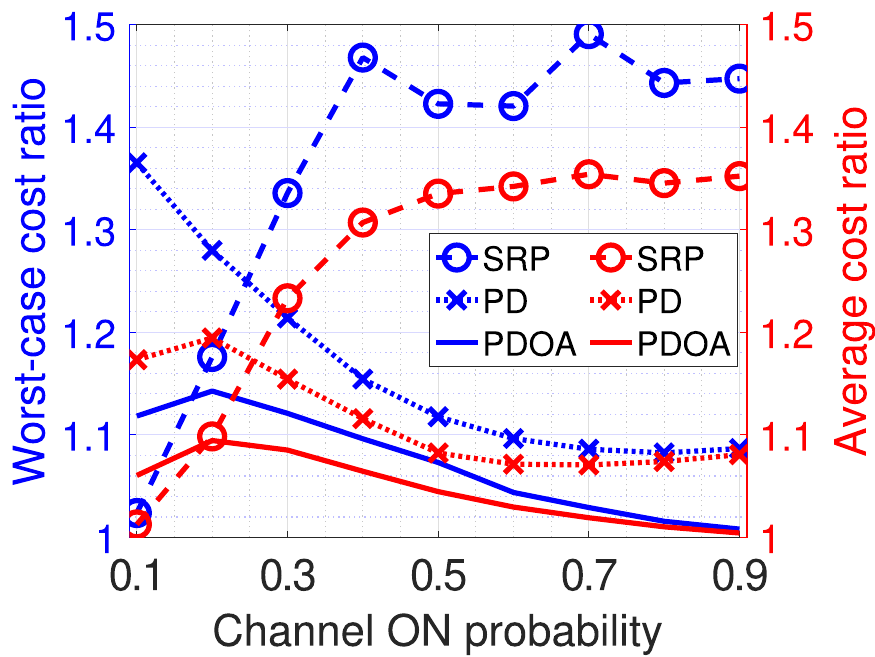}
         \label{fig:simu_online_ber}}
         \subfigure[Real trace dataset]{\includegraphics[scale = 0.35]{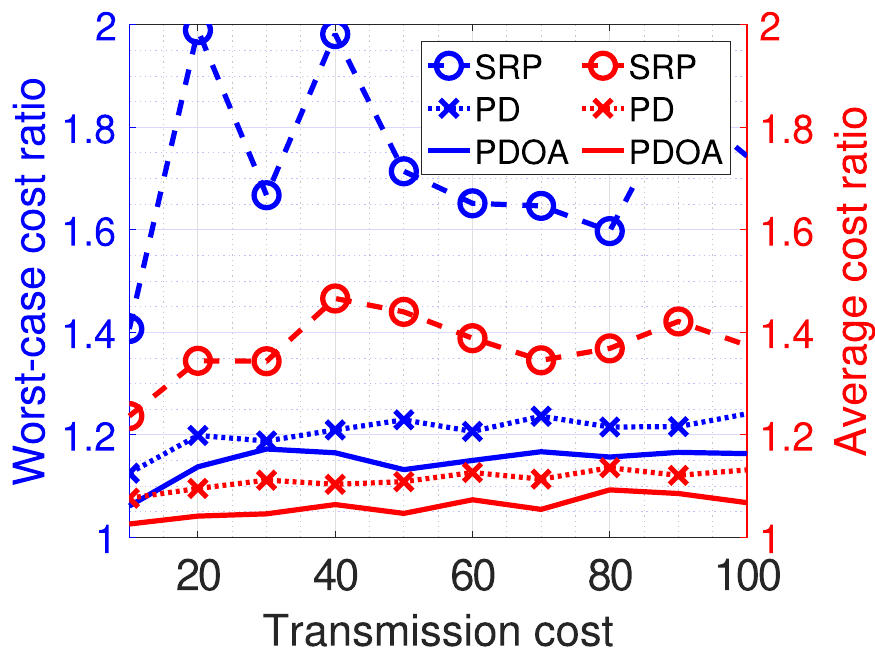}
         \label{fig:simu_online_realTrace}}
         \caption{Performance comparison of online algorithms under different datasets.}
         \label{fig:simu_online}
         \end{minipage}
         \begin{minipage}[t]{0.4\textwidth}
         \centering
         \subfigure{
         \includegraphics[scale = 0.37]{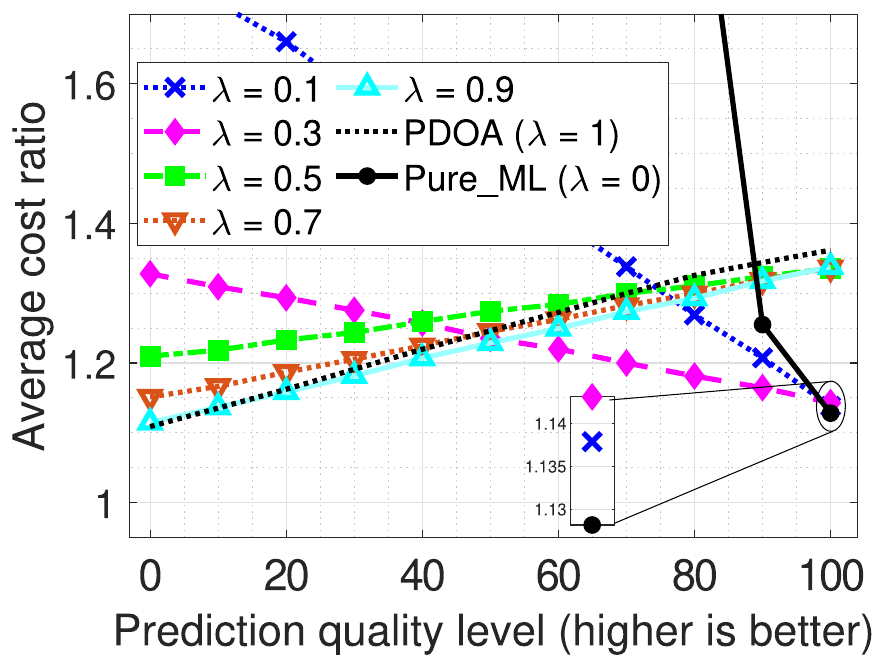}
         }
         \caption{Performance comparison of LAPDOA under different trust parameter $\lambda$ using synthetic dataset (higher prediction quality levels lead to better prediction accuracy).}
         \label{fig:ml_syn}
         \end{minipage}
\end{figure*}

\begin{figure*}[!t]
        \begin{minipage}[t]{0.37\textwidth}
         \centering
         \subfigure{
         \includegraphics[scale = 0.375]{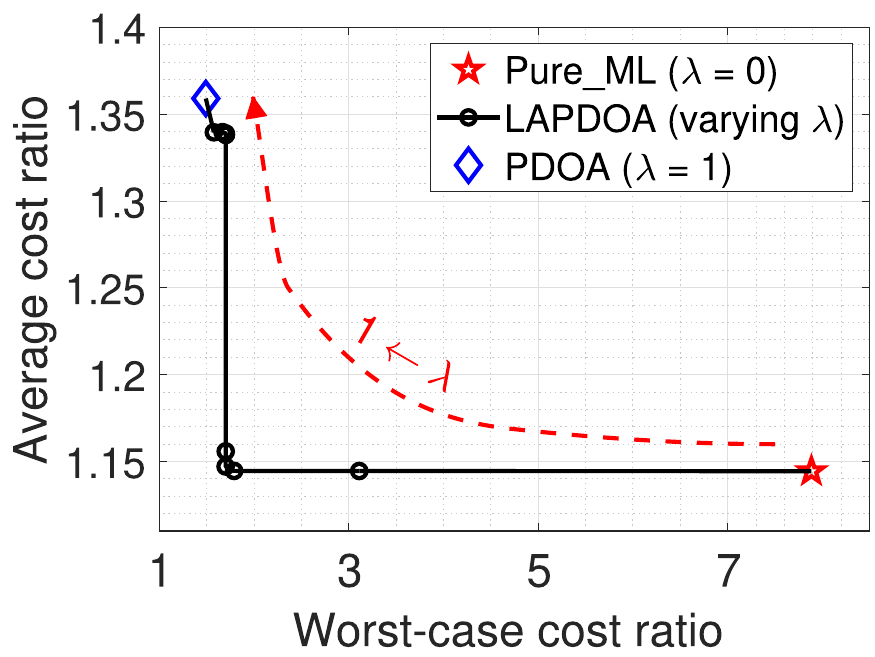}
         }
         \caption{Average cost ratio vs. worst-case cost ratio of LAPDOA when the prediction quality level is $99$. The direction of the dotted arrow line indicates that $\lambda$ becomes larger. }
         \label{fig:ml_em_cr_syn}
         \end{minipage}
         \quad
        \begin{minipage}[t]{0.6\textwidth}
	\centering
	\subfigure[Driving tests (prediction is accurate)]{
         \includegraphics[scale = 0.355]{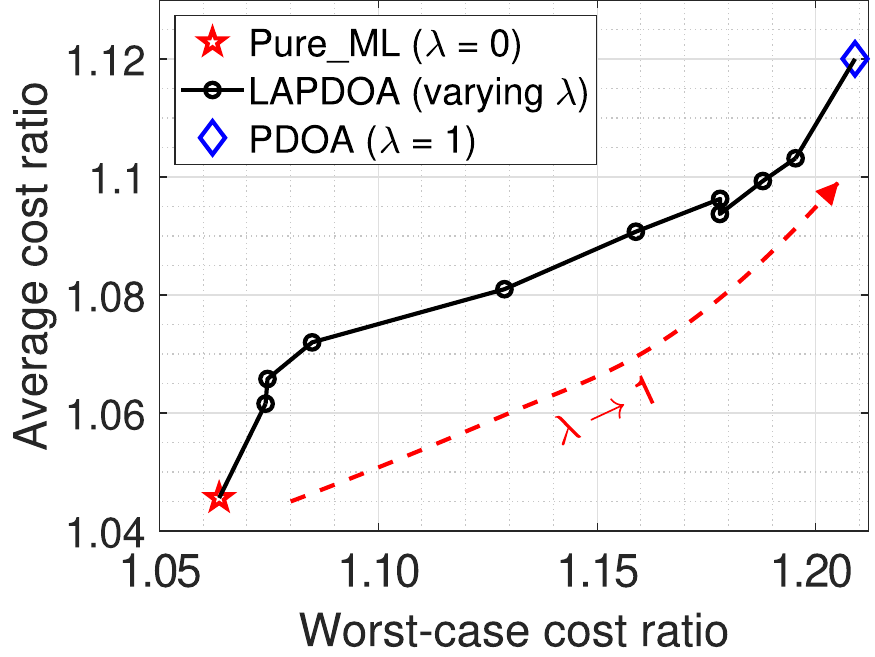}
        \label{fig:ml_trace_small_shift}}
         \subfigure[Walking tests (prediction is inaccurate)]{\includegraphics[scale = 0.355]{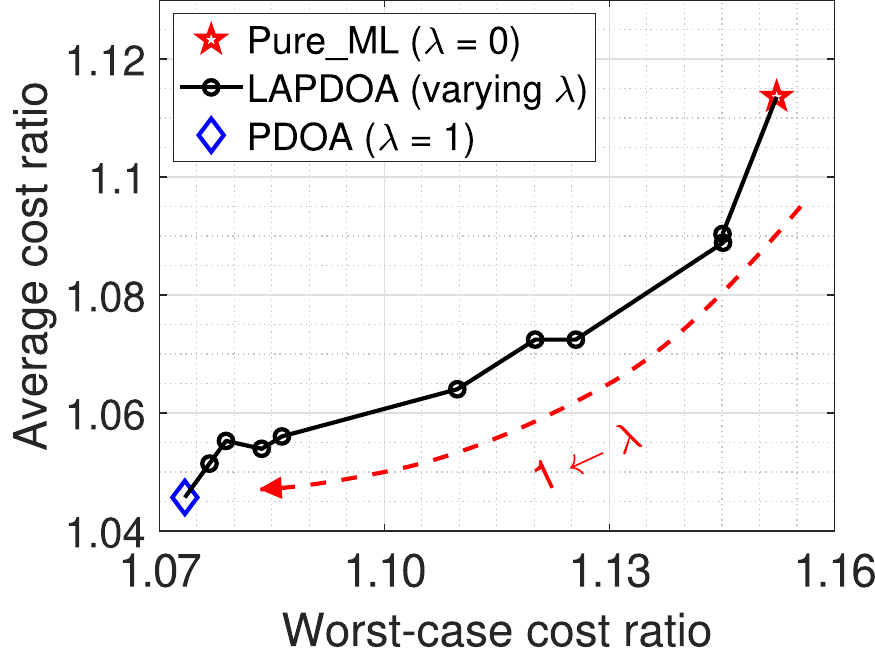}
         \label{fig:ml_trace_big_shift}}
         \caption{Average cost ratio vs. worst-case cost ratio of LAPDOA using real trace dataset. The direction of the dotted arrow line indicates that $\lambda$ becomes larger. The prediction is accurate in Fig.~\ref{fig:ml_trace_small_shift} since the ML model was trained on the driving test dataset.}
         \label{fig:simu_la_cr}
         \end{minipage}
\end{figure*}

\subsection{Online Algorithm}
In Fig.~\ref{fig:simu_online}, we compare PDOA with two online algorithms: (i) the State-of-the-Art online algorithm proposed in \cite{8849808} (which is referred to as ``PD'' in Fig.~\ref{fig:simu_online}), and (ii) the stationary randomized policy proposed in \cite{9488746} (which is referred to as ``SRP'' in Fig.~\ref{fig:simu_online}). Note that SRP in \cite{9488746} cannot be directly applied to our setting due to several differences. First, regarding the system model, \cite{9488746} considers a continuous system where updates are generated with an inter-generation time following a known continuous distribution. Moreover, updates can be transmitted at any time (i.e., there are no OFF channels). Second, in terms of algorithm design and analysis, SRP requires knowledge of the inter-generation time distribution (e.g., mean and variance). Specifically, when there is an update generated at the source, SRP transmits it with a probability $p^* = \min \{ \mu /\sqrt c ,1\}$, where $\mu $ is the mean of inter-generation time, and $c$ is the transmission cost. 
To adapt SRP to our setting, we treat the slots with ON channels as those where updates are generated, requiring SRP to make decisions only during these ON slots. Consequently, the mean of inter-generation time can be determined as $\mu  = T/\sum\nolimits_{t = 1}^T {s(t)} $, which corresponds to the average length between two ON slots.

Two datasets are considered: (i) The synthetic dataset in Fig.~\ref{fig:simu_online_ber}. We adopt the same settings as in \cite{8849808}, where the channel state is a Bernoulli process with varying channel ON probability and the transmission cost $c = 15$. The number of simulation runs is 100, each has 1000 slots; (ii) The real trace dataset \cite{narayanan2020lumos5g} in Fig.~\ref{fig:simu_online_realTrace}.
This dataset contains the channel measurement (i.e.,  reference signal received quality (RSRQ)) of the commercial
mmWave 5G services in a major U.S. city. Specifically, located in the Minneapolis downtown region, the researchers in \cite{narayanan2020lumos5g} repeatedly conduct walking tests on the 1300m loop area.  
Throughout these walking tests, they utilized a 5G monitoring tool installed on an Android smartphone to collect RSRQ information. 
The RSRQ values fluctuate as the tester moves, being higher in proximity to the mmWave 5G tower and decreasing as the tester moves away. 
There are 11 simulation runs, each with 750 slots.
For the ON/OFF channel determination, a threshold is established for the RSRQ (-13dB): the channel is considered ON when the RSRQ exceeds the threshold; otherwise, it is deemed OFF.
Here we vary the transmission cost from $10$ to $100$. 
In both datasets, the performance metrics are the worst-cast cost ratio and the average cost ratio (i.e., the worst cost ratio and the average cost ratio under the online algorithm and the optimal offline algorithm over multiple simulation runs). The worst-case cost ratio serves as a proxy for the CR, as it is impractical to examine the worst-case cost ratio in all possible scenarios. Instead, we use the worst-case cost ratio observed over multiple simulation runs to represent the CR.

Fig.~\ref{fig:simu_online} illustrates that our online algorithm PDOA consistently outperforms the State-of-the-Art online algorithm PD in both datasets, i.e., PDOA achieves a lower worst-case cost ratio and a lower average cost ratio than PD.
Compared to PDOA, SRP slightly outperforms PDOA in Fig.~\ref{fig:simu_online_ber} only when the channel ON probability is low (i.e., $0.1$).  This occurs because, under such conditions, SRP will transmit at every ON slot (as the transmission probability ${p^*} = \min \{ \mu /\sqrt c ,1\}  = \min \{ 10/\sqrt {15} ,1\}  = 1$). In contrast, PDOA may skip transmissions in some ON slots if they are close to the previous ON slot, resulting in a significant increase in AoI since the next ON slot could be much later.
In addition, the worst-case cost ratio of PDOA outperforms the theoretical analysis (with a CR of 3), validating our theoretical results.  

\subsection{Learning-augmented Online Algorithm}

In this subsection, we study the performance of LAPDOA
under different prediction qualities using both synthetic and real trace datasets.
We begin by presenting the training dataset and explaining the process of generating ML predictions based on it. Next, we shift the distribution of the testing dataset to deviate from the training dataset, demonstrating the performance of LAPDOA on these testing datasets.

\subsubsection{Training Dataset and ML Prediction Generation}

\textbf{\textit{The Synthetic Dataset.}}
In Fig.~\ref{fig:simu_online_ber}, PDOA demonstrates strong performance under the Bernoulli process.
However, a specific training dataset reveals its suboptimal performance. 
In this training dataset, the transmission cost $c = 15$, and the channel state sequence is constituted by an independently repeating pattern [$X \times$OFF, $Y\times$ON], where $X \sim B(13,0.9)$ and $Y \sim B(6,0.9)$
($B(n,p)$ represents the binomial distribution with parameters $n$ and $p$). The sequence starts with $X$ OFF slots, followed by $Y$ ON slots, then $X$ OFF slots again, and so on. This pattern characterizes a bursty channel condition.
Under this pattern, in most cases, PDOA only makes one transmission at the first ON slot of these $Y$ ON slots (i.e., after a long consecutive $X$ OFF slots, the ACK marker $M$ will be larger than $1$, and PDOA will transmit at the first ON slot. However, after this transmission, during the short remaining $(Y-1)$ ON slots, the ACK marker $M$ may not be able to be increased to $1$). This results in a high AoI increase for these next $X$ OFF slots. 
For the optimal offline algorithm, to have a lower AoI during OFF slots, it will transmit at both the first ON slot and the last ON slot among those $Y$ ON slots.
To generate a sequence of channel states of the required length, we repeat the pattern enough times independently and concatenate them together.

Recall that LAPDOA incorporates an ML prediction $\mathcal{P}$ that provides the transmission decision at each slot. To generate such an ML prediction $\mathcal{P}$, we train an \textit{Long Short-term Memory
(LSTM)} network, which has three LSTM layers (each layer has $20$ hidden states) followed by one
fully connected layer.
The input of our LSTM network is the current channel state, and its output is the transmission probability at that slot.  
For training, we manually create 300 sequences, each with a length of 100 slots consisting of repeating patterns introduced earlier (we call these constructed sequences ``pattern sequences''). 
Optimal offline transmission decisions for the training datasets are obtained through dynamic programming. 
We use the mean squared error between the LSTM network output and the optimal offline algorithm output as the loss function and employ the Adam optimizer to train the weights.
In the end, to convert the output of our LSTM network (i.e., transmission probability) to the real transmission decisions, a threshold (i.e., $0.5$) is set, and transmission occurs when the output of the LSTM network exceeds the threshold. \footnote{Our ML prediction algorithm generates the full set of predictions $\mathcal{P}$ at the beginning (i.e., $t = 0$).  However, more advanced adaptive ML prediction algorithms could be employed (e.g., predictions could be generated sequentially in each slot based on previous channel states).}

\textbf{\textit{The Real Trace Dataset.}} 
We still use the real trace dataset \cite{narayanan2020lumos5g}. In addition to the walking tests we introduced before, this dataset also contains the RSRQ measurement of the driving tests. Throughout these driving tests, the researchers mounted the smartphone on the car’s windshield and repeatedly drove on the same 1300m loop area to collect RSRQ information. 
Again, we set a threshold for RSRQ ($-13$dB) to determine the ON/OFF channels and let the transmission cost $c = 15$.
The differences between the driving datasets and the walking datasets are that: (i) the time length of one driving loop is much shorter than that of one walking loop (i.e., 250 seconds vs 750 seconds); (ii) the proportion 
of ON slots in the driving dataset is less than that in the walking dataset. As explained in \cite{narayanan2020lumos5g}, this phenomenon primarily arises due to signal attenuation caused by the car's body components, such as windshields or side windows. Additionally, the swift movement of the car leads to frequent handoffs between 5G panels and towers, which further degrades signal strength.

Similar to the synthetic dataset, the ML prediction $\mathcal{P}$ is also generated by an LSTM network (with the same architecture as introduced in the synthetic dataset). To train this LSTM network, we use a $5$ loop of driving tests as our training dataset.

\subsubsection{Results Analysis}

\textbf{\textit{The Synthetic Dataset.}} 
In Fig.~\ref{fig:ml_syn}, we illustrate LAPDOA's performance under varying prediction qualities, influenced by a distribution shift between the training and testing datasets.
The training dataset only contains the sequences fully composed of the pattern (i.e., the percentage of the pattern sequences is $100\%$). However, in the testing dataset, the percentage of the pattern sequence is reduced by replacing some pattern sequences with a Bernoulli process sequence of a length of $100$ with an ON probability of $0.32$ (close to the pattern ON probability).
While the training dataset and the testing dataset share the same channel ON probability, they exhibit shifts in distribution. The magnitude of this shift amplifies as the percentage of the pattern sequence decreases.
To quantify these shifts, we introduce the term ``prediction quality level” in Fig.~\ref{fig:ml_syn}, ranging from $0$ to $100$. The prediction quality level represents the percentage of pattern sequences in the testing dataset (e.g., a prediction quality level of $80$ indicates that $80\%$ of the sequences in the testing dataset are pattern sequences). Thus, a higher prediction quality level means better prediction accuracy.

As we can observe in Fig.~\ref{fig:ml_syn}, when the prediction is accurate (prediction quality level of $100$ or $90$), our trained ML algorithm (``Pure\_ML" in the figure) outperforms PDOA (recall that Pure\_ML is a special case of LAPDOA with $\lambda=0$, and PDOA is a special case of LAPDOA with $\lambda=1$). Learning-augmented algorithms trusting the prediction ($\lambda \in \{0.1, 0.3\}$) closely match the ML algorithm's performance.
Conversely, with an inaccurate prediction (prediction quality level of $0$ or $10$), the ML algorithm performs poorly while PDOA performs well. In this case, learning-augmented algorithms not trusting the prediction ($\lambda \in \{0.7, 0.9\}$)  closely resemble PDOA. Furthermore, with different values of $\lambda$,
LAPDOA provides different tradeoff curves for consistency
and robustness.

Though the trained ML algorithm performs well in the average case when the distribution shift is small (i.e., Pure\_ML achieves a low average cost ratio in Fig.~\ref{fig:ml_syn} when the prediction quality level is high), it may lack worst-case performance guarantees. 
In Fig.~\ref{fig:ml_em_cr_syn}, we show the average cost ratio and the worst-case cost ratio performance of LAPDOA when the prediction quality level is $99$ (i.e., there exists at least a sequence that is not the pattern sequence). 
Here we consider the LAPDOA algorithms with 
$\lambda \in \{0, 0.1,..., 0.9, 1\}$. 
Pure\_ML achieves the smallest average cost ratio; however, its worst-case cost ratio significantly surpasses that of PDOA, indicating that it lacks performance robustness. 
In addition, as the trust parameter  $\lambda$ increases, the worst-case cost ratio performance improves, while the performance of the average cost ratio worsens. 
In this scenario, selecting $\lambda$ as $0.3$ appears to be beneficial, as it not only yields a low worst-case cost ratio but also sustains a low average cost ratio concurrently.

\textbf{\textit{The Real Trace Dataset.}} 
We consider two different testing datasets: (i) a $3$ loop of driving tests in Fig.~\ref{fig:ml_trace_small_shift}, and (ii) a $3$ loop of walking tests in Fig.~\ref{fig:ml_trace_big_shift}. The distribution shift between the first testing dataset and the training dataset is small as the data is collected under the same scenario (i.e., driving), while the distribution shift between the second testing dataset and the training dataset is large as the data is collected under the two different scenarios (i.e., walking vs. driving). 
In Fig.~\ref{fig:ml_trace_small_shift}, when the testing dataset is the driving dataset (indicating accurate predictions due to a small distribution shift from the training dataset), the Pure\_ML demonstrates superior performance not only for average cost ratio but also worst-case cost ratio. 
We conjecture that Pure\_ML achieves a low worst-case cost ratio because this testing dataset is highly identical to the training datasets 
(i.e., with the same $1300$m loop area, the signal strength measured in one driving loop does not appear to change dramatically in another driving loop, and thus those driving loops share a similar signal strength pattern). 
However, in Fig.~\ref{fig:ml_trace_big_shift}, when the predictions are less accurate (due to a significant distribution shift between the training and testing datasets), the performance of Pure\_ML diminishes, resulting in both a high average cost ratio and a high worst-case cost ratio. In contrast, the PDOA online algorithm excels in this scenario in terms of both the average cost ratio and the worst-case cost ratio.
Upon analyzing these two testing datasets, we learn that, on the one hand, 
if we understand the characteristics of the testing dataset, 
we can select our trust parameters correspondingly. For example, if we are aware that the testing dataset deviates from the training dataset greatly, we should choose a lower trust parameter. 
On the other hand, when uncertainty shrouds the testing dataset, selecting an appropriate trust parameter (e.g., $\lambda = 0.4$) enables LAPDOA to strike a good trade-off between consistency and robustness.
\section{\revise{Limitations}}
\label{sec:limitations}

In this section, we discuss the limitations of this work and outline potential improvements for future research. The main limitations lie in the following two categories: (i) system model and (ii) algorithm design. 

\textbf{\textit{Limitations in System Model.}} 
Our system model makes several simplifying assumptions to facilitate algorithm development and performance analysis. However, these assumptions may not fully capture real-world constraints.

(\romannumeral 1) Channel probing overhead. 
We assume the device probes the channel at the beginning of each time slot to determine its state. However, frequent probing is energy-intensive, particularly in resource-constrained environments we consider. Previous studies have shown that adaptive probing strategies, such as Markov decision processes (MDPs) and threshold-based strategies, can leverage historical data and environmental conditions to minimize unnecessary probing. Machine learning techniques have also shown promise in predicting wireless channel states \cite{8395053}. By integrating these adaptive probing techniques or predictive methods into our learning-augmented online algorithm, we may achieve significant energy savings.

(\romannumeral 2) Fixed transmission cost. We consider a scenario where the device operates with a limited battery, and each transmission incurs a fixed cost.  However, a more realistic approach would allow the device to adjust its transmission power dynamically, taking into account factors such as distance, battery level, and network conditions. In this case, the transmission cost would vary based on the transmission power.
Furthermore, in certain cases (e.g., outdoor sensors or cameras), the device can replenish its energy from external sources (e.g., solar, wind, or water). 
This energy replenishment introduces time-varying constraints; for example, the data transmission rate can be higher when energy is abundant and lower when it is scarce.
Consequently, the transmission policy could be adapted to be more dynamic.

(\romannumeral 3) Staleness cost. The staleness cost is captured by the AoI, which increases linearly over time. However, this linear growth may not accurately capture scenarios where prolonged periods without transmission have a disproportionately greater impact on the receiver. In such cases, modeling the staleness cost as a rapidly increasing convex function of AoI could provide a more appropriate representation~\cite{8000687}.

\textbf{\textit{Limitations in Algorithm Design.}} 
We proposed a learning-augmented online algorithm LAPDOA that leverages the strengths of both traditional online algorithms and machine learning techniques. Specifically, we highlighted that by adjusting the trust parameter $\lambda$, LAPDOA achieves a critical balance between consistency and robustness. For example, setting $\lambda = 0$ results in LAPDOA relying solely on the prediction $\mathcal{P}$, while $\lambda = 1$ corresponds to the case where LAPDOA behaves like PDOA. However, the trust parameter $\lambda$ must be predetermined in this work. 
In real-world systems,  the quality of ML predictions may vary, requiring adaptive adjustments to the trust parameter. An intriguing avenue for future research lies in developing methods to dynamically select $\lambda$ to optimize performance. One possible approach is to incorporate feedback loops that continuously evaluate prediction errors and adjust $\lambda$ accordingly, ensuring the model adapts to changing conditions.

\section{Conclusion}
\label{sec:conclusion}
In this paper, we studied the minimization of data freshness and transmission costs under a time-varying wireless channel.
After reformulating our original problem to a TCP ACK problem, we developed a $3$-competitive primal-dual-based online algorithm.
Realizing the pros and cons of online algorithms and ML algorithms, we designed a learning-augmented online algorithm that takes advantage of both approaches and achieves consistency and robustness. 
Finally, simulation results validate the superiority of our online algorithm and highlight the consistency and robustness achieved by our learning-augmented algorithm. 
For future work, one interesting direction would be to consider how to adaptively select the trust parameter $\lambda$ to achieve the best performance.
\bibliographystyle{IEEEtran}
\bibliography{reference}

\section{Practical Applications and Case Studies}
\label{sec:case_study}
In this section, we present two real-world examples to illustrate the tradeoff between transmission cost and data freshness, highlighting the critical need to balance this tradeoff effectively.
We also provide justifications for our modeling assumptions, particularly the fixed transmission cost and the requirement for channel probing at each time slot, demonstrating their relevance in practical scenarios.

\textbf{\textit{Wildfires Real-time Monitoring.}}
Wildfires in the U.S. and around the world are becoming increasingly frequent, costly, and dangerous (for example, the 2025 LA wildfire in the U.S. set a record with $135$ billion dollar in damages \cite{BBC}).  
To address this growing threat, wildfire sensors are typically installed in high-risk areas, such as forests, grasslands, mountainous regions, and the wildland-urban interface. Early detection of wildfires in these areas significantly increases the chances of timely containment and suppression.
In response, the Department of Homeland Security (DHS) Science and Technology Directorate (S\&T)  began efforts at developing wildfire sensors in late 2019 \cite{DHS_ST}.  
Research on wildfire sensors aims to achieve real-time, continuous identification of key elements associated with wildfire conditions, such as temperatures, humidity, particulate matter, volatile organic compounds, and gases. 
Although wildfire monitoring systems may deploy multiple sensors across a region, each sensor functions independently, collecting and transmitting data based on local conditions. Our analysis focuses on a single sensor.
Typically, the wildfire sensor is equipped with internal batteries and utilize LTE cellular network communications for data transmission \cite{DHS_ST}.  
In this case, the wildfire sensor must transmit data strategically to balance the tradeoff between transmission cost (e.g., energy cost) and data freshness. Frequent transmissions can quickly deplete the battery, while infrequent transmissions may result in delayed detection of a wildfire, potentially allowing it to grow into a large-scale disaster. 

For system modeling, we assume that each transmission consumes a fixed amount of energy, resulting in a fixed transmission cost. This assumption holds in wildfire monitoring systems, where the data packets transmitted are typically small and uniform in size.  Additionally, most wildfire sensors often use pre-configured communication modules, such as LTE or low-power wide-area networks (LPWANs) \cite{7815384}, which operate at a fixed power level for each transmission. 
Furthermore, we assume that the sensor probes the channel state at each time slot. Wildfire monitoring systems must ensure that the transmitted data is received successfully. Since network conditions fluctuate due to signal attenuation or congestion, regular channel probing enables real-time monitoring and informed transmission scheduling, ensuring reliable data delivery.

\textbf{\textit{Unmanned Aerial Vehicles (UAVs) System.}} UAVs are versatile tools used for applications such as aerial surveillance, delivery, disaster response, agriculture, environmental monitoring, and media production, offering efficiency and precision in various industries \cite{8682048}. While UAV networks are often used in large-scale operations, we consider a single UAV, which is typically equipped with a limited battery and relies on wireless communication to transmit data to ground controllers.
On the one hand, frequent data transmissions consume significant energy, which is a critical resource for the UAV due to its limited battery capacity. High transmission rates may also cause increased bandwidth usage, potential interference, and higher risks of packet collisions, especially in multi-UAV networks \cite{8961096}.
On the other hand, fresh data is essential for making accurate and timely decisions. 
A lower frequency of data transmission can lead to outdated information at the controller, potentially resulting in suboptimal or even harmful decisions. 
For instance, in surveillance, delayed images or sensor data could result in missed critical events, while outdated GPS data in delivery drones may cause navigation errors.  
Therefore, the UAV needs to transmit data strategically to balance the tradeoff between transmission cost (e.g., energy cost and bandwidth usage) and data freshness.

Similarly, in UAV-based systems, we can assume that each transmission consumes a fixed amount of energy. This is because UAVs often transmit data packets of relatively uniform size, particularly when sending periodic updates (e.g., sensor readings, images, or GPS coordinates).  Additionally, we assume that the sensor probes the channel state at each time slot. Maintaining a reliable communication link with the ground controller is critical for UAV operations. Given the UAV’s mobility and potential environmental interferences, real-time evaluation of the channel state is necessary to determine optimal transmission decisions. Periodic channel probing allows the UAVs to detect signal degradation, assess bandwidth availability, and ensure that critical updates reach the controller with minimal delay.

\section{Proof of Theorem~\ref{them:online_3}}
\label{sec:keyProof}

Our proof outline is as follows. We first demonstrate that PDOA produces a feasible solution to primal Problem \eqref{prob:integerl-program_relax} and dual Problem \eqref{problem:dual-program_relax} in Lemma~\ref{lemma:online_feasible_sol}. Then, we explain the usefulness of the primal-dual problem \cite{buchbinder2009design} for competitive analysis. Finally, we establish that our online primal-dual-based PDOA is $3$-competitive.

To begin with, we first introduce two key observations of PDOA that will be widely used in the proofs.

\begin{observation}
Assuming that PDOA makes the latest ACK at some ON slot $L$ and the current time slot is $t$ ($t > L$), then at slot $t$, before the threshold is achieved ($M < 1$), PDOA updates the primal variable $z_i(t)$ and dual variable $y_i(t)$ of packet $(L+1)$ to packet $t$; however, once the threshold is achieved ($M \ge 1$), PDOA only updates the primal variable $z_i(t)$ of the unacked packets if the channel is OFF at slot $t$.
\label{obs:within_interval}
\end{observation}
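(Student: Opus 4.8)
The plan is to establish the observation by a direct trace of the control flow of the inner loop (Lines~\ref{line:inner_interation_begins}--\ref{line:inner_interation_end}) of PDOA for a fixed slot $t$ lying in the ACK interval that begins immediately after the latest ACK at slot $L$. First I would recall that at slot $t$ the inner loop ranges over exactly the currently unacked packets $i = L+1, \dots, t$, and that the marker $M$ is carried over from the previous slot, being reset to $0$ only when an ACK is issued. Each iteration enters at most one of the two branches guarded by $M < 1$ and $M \ge 1$, so the execution of the loop splits naturally into a phase in which the threshold has not yet been met, followed by a phase in which it has.

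For the first phase I would argue that, as long as $M < 1$, the iteration enters the block in Lines~\ref{line:d_less_than_1_begin}--\ref{line:d_less_than_1_end}, which sets $z_i(t) \leftarrow 1$, increments $M$ by $1/c$, and assigns $y_i(t) \leftarrow \min\{1, c - c \cdot M\}$. Hence every packet processed while $M < 1$ receives both a primal and a dual update; in particular, if the threshold is never reached during slot $t$ (so no ACK occurs at $t$), then all of packets $L+1$ through $t$ get both $z_i(t)$ and $y_i(t)$ updated, which is precisely the first assertion.

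For the second phase, once some iteration pushes $M$ to $M \ge 1$, I would split on the channel state $s(t)$. If $s(t) = 1$ (Line~\ref{line:d_larger_than_1_begin}), PDOA sets $d(t) \leftarrow 1$, resets $M \leftarrow 0$ and $L \leftarrow t$, and breaks (Line~\ref{line:slot_ON}); the ACK marks all packets as acked and terminates the loop, so no further primal or dual updates are made at this slot. If instead $s(t) = 0$, the algorithm executes only Line~\ref{line:online_z_update2}, setting $z_i(t) \leftarrow 1$ for each remaining unacked packet while leaving $y_i(t)$ untouched inside the inner loop. This yields the second assertion: after the threshold is met, only the primal variables of the unacked packets are updated, and only when the channel is OFF.

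The one point that needs care, and the only place a reader might object, is to keep the inner-loop dual updates cleanly separated from the end-of-slot dual updates in Lines~\ref{line:final_update_begin}--\ref{line:final_update_end}. The observation concerns only the behavior inside the main iteration; the subsequent pass that may raise some $y_i(t)$ from $0$ to $1$ on OFF slots is a distinct step and does not contradict the claim that no $y_i(t)$ is set during the post-threshold phase of the inner loop. I expect this bookkeeping, rather than any substantive argument, to be the main (and rather minor) obstacle.
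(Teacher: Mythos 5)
Your proposal is correct and takes essentially the same route as the paper, which states this observation without any explicit proof as a direct reading of Algorithm~\ref{alg:primal-dual}; your control-flow trace, including the careful separation of the inner-loop updates from the end-of-slot dual pass in Lines~\ref{line:final_update_begin}--\ref{line:final_update_end}, is exactly that reading. One minor slip worth fixing: the guards $M<1$ and $M\ge 1$ are two \emph{sequential} if-blocks, not an if--else, so the iteration whose update pushes $M$ across $1$ executes both blocks (in particular, when $s(t)=1$ the ACK is issued within that very iteration, not a later one); this does not affect any of the observation's claims about which variables get updated in each phase.
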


\begin{observation}
Once PDOA makes an ACK at some ON slot $t$, all the packets arriving no later than slot $t$ (packet $1$ to packet $t$) are acked forever after slot $t$, and their primal variables and dual variables will never be changed after slot $t$,  i.e.,  $z_i(\tau) = 0$ and $y_i(\tau) = 0$ for all $i \le t$ and all $\tau > t$.
\label{obs:after_ACK}
\end{observation}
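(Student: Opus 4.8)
The plan is to prove this observation by a direct structural analysis of the auxiliary variable $L$, which records the time of the most recent ACK and acts as a moving barrier separating the already-acked packets from the active ones. The two quantities to control are the primal variables $z_i(\tau)$ and the dual variables $y_i(\tau)$ for indices $i \le t$ and times $\tau > t$, and the goal is to show that both stay frozen at their initialized value of $0$. The argument rests on two facts about PDOA: that $L$ only increases over time, and that every variable assignment in a slot touches only packets whose index exceeds the current $L$.

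First I would establish that $L$ is non-decreasing throughout the execution of PDOA. The only place $L$ is modified is the line $L \leftarrow t$ executed immediately after an ACK at an ON slot $t$; since the outer loop processes slots in strictly increasing order and $L$ is always set to the current slot index, the value of $L$ can only increase. In particular, once the ACK at slot $t$ sets $L \leftarrow t$, we have $L \ge t$ at the start of every subsequent slot $\tau > t$, and this remains true throughout the processing of slot $\tau$, since any further ACK occurring in a later slot can only raise $L$ above $t$, never lower it.

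Next I would verify that both update phases within a slot only ever touch packets whose index exceeds the current barrier $L$. The inner packet loop (Lines~\ref{line:inner_interation_begins}--\ref{line:inner_interation_end}) iterates $i$ from $L+1$ up to the current slot, so it assigns $z_i(\cdot)$ and $y_i(\cdot)$ only for $i \ge L+1$. Likewise, the end-of-slot dual-update loop (Lines~\ref{line:final_update_begin}--\ref{line:final_update_end}) iterates $i$ downward and terminates at $L+1$, so it too touches only $i \ge L+1$. Combining this with the previous step, during the processing of any slot $\tau > t$ we have $L \ge t$, hence every variable assignment in slot $\tau$ is to an index $i \ge L+1 \ge t+1$. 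Consequently, no packet with index $i \le t$ is ever assigned in any slot $\tau > t$.

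Finally I would conclude by recalling that $z_i(\tau)$ and $y_i(\tau)$ are initialized to $0$ for all $i$ and $\tau$, and that the variables carrying time index $\tau$ are modified only during the processing of slot $\tau$. Since no such assignment ever reaches a packet index $i \le t$ once $\tau > t$, these variables retain their initial value, giving $z_i(\tau) = 0$ and $y_i(\tau) = 0$ for all $i \le t$ and all $\tau > t$; interpreting $z_i = 0$ as ``acked'' yields the claim that packets $1$ through $t$ remain acked forever. This argument is essentially careful bookkeeping, so I do not expect a genuine obstacle; the one point requiring care is coupling the monotonicity of $L$ with the observation that a subsequent ACK inside a later slot can only push the loops' lower limit higher, never re-expose an already-barriered packet.
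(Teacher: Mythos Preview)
Your proposal is correct. The paper states this observation without proof, treating it as immediate from the structure of PDOA; your argument---monotonicity of $L$ combined with the fact that both the inner packet loop and the end-of-slot dual loop only touch indices $i \ge L+1$---is exactly the bookkeeping needed to make that intuition rigorous.
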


With Observations~\ref{obs:within_interval}
and \ref{obs:after_ACK}, we can prove the feasibility of the solutions produced by PDOA in the following lemma.

\begin{lemma}
    PDOA  produces a feasible solution to both primal Problem \eqref{prob:integerl-program_relax} and dual Problem \eqref{problem:dual-program_relax}.
    \label{lemma:online_feasible_sol}
\end{lemma}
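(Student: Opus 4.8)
The plan is to verify the two groups of constraints in turn, relying throughout on Observations~\ref{obs:within_interval} and \ref{obs:after_ACK}; I would dispatch primal feasibility and the dual box constraint quickly and reserve the real work for the dual packing constraint~\eqref{eq:relax_dual-con1}. For primal feasibility, non-negativity~\eqref{eq:relax_primal-con2} is immediate because PDOA only ever assigns the values $0$ or $1$ to $d(t)$ and $z_i(t)$. For the covering constraint~\eqref{eq:relax_primal-con1} I would fix a pair $(i,t)$ with $i\le t$ and split on whether packet $i$ has been acked by slot $t$: if some ACK occurs in $[i,t]$ then $s(\tau)d(\tau)=1$ for that $\tau$, so $\sum_{\tau=i}^{t}s(\tau)d(\tau)\ge 1$ and the constraint holds regardless of $z_i(t)$; if packet $i$ is still unacked at slot $t$, then by Observation~\ref{obs:within_interval} the inner loop visits packet $i$ at slot $t$, and since no ACK is made there (either $M<1$ or the channel is OFF) the algorithm sets $z_i(t)\leftarrow 1$ in Line~\ref{line:online_z_update1} or Line~\ref{line:online_z_update2}, so the constraint again holds. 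For the box constraint~\eqref{eq:relax_dual-con2} I would inspect the two assignment sites: the in-loop update $y_i(t)\leftarrow\min\{1,c-cM\}$ enforces $y_i(t)\le 1$ through the $\min$, while the lower bound $y_i(t)\ge 0$ follows from the gate $M<1$ guarding that update and the fact that $M$ advances in steps of $1/c$; the end-of-slot update only ever assigns the value $1$.

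The main argument concerns the packing constraint~\eqref{eq:relax_dual-con1}, which is active only at ON slots. I would fix an ON slot $t$, let $[L+1,L']$ be the ACK interval containing it, and show $\sum_{i=1}^{t}\sum_{\tau=t}^{T}y_i(\tau)\le c$. By Observation~\ref{obs:after_ACK}, every packet acked before slot $t$ has $y_i(\tau)=0$ for all $\tau\ge t$, so only packets $i\in[L+1,t]$ contribute. The argument then rests on two facts. First, the end-of-slot OFF updates (Lines~\ref{line:final_update_begin}--\ref{line:final_update_end}) never enter this sum: at any slot $\tau\ge t$ they raise $y_i(\tau)$ only for packets in the most recent consecutive-OFF stretch, whose indices exceed the most recent ON slot before $\tau$; since $t$ is itself an ON slot with $t\le\tau$, that most recent ON slot is at least $t$, so every such index is $>t$ and falls outside the range $i\le t$. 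Second, the remaining in-loop dual weight assigned within the interval totals at most $c$, because each in-loop assignment advances $M$ by $1/c$ and is gated by $M<1$, giving at most $\lceil c\rceil$ such assignments whose values $\min\{1,c-cM\}$ sum to at most $c$. Since the quantity in~\eqref{eq:relax_dual-con1} is a sub-sum of this interval total, it is bounded by $c$.

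The hard part will be this packing constraint, and specifically the OFF-slot bookkeeping. One must show that PDOA's restriction of the end-of-slot update to the latest OFF stretch, rather than to all preceding OFF slots, is exactly what prevents the dual load charged against an earlier ON slot from exceeding $c$, echoing the discussion immediately preceding the lemma about the spurious update $y_{t'}(t)=1$ and its effect at slot $t^\dag$. Getting the index ranges right here, namely which packets are charged at which slots and why older OFF packets must be left untouched, is the delicate step; by contrast, the primal covering constraint and the dual box constraint are routine inspections of the update rules.
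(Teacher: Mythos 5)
Your proof is correct, and its primal part and box-constraint part coincide with the paper's (which dismisses those just as quickly). The genuine difference lies in how the packing constraint~\eqref{eq:relax_dual-con1} is handled. The paper, after localizing the sum to the ACK interval containing the ON slot $t$ exactly as you do, splits into two cases, $t = t_k+1$ and $t_k+1 < t \le t_{k+1}$; in the second case it runs a counting argument, bounding the number $m$ of dual updates made at slots $\ge t$ by $\lceil c\rceil - 1$ via marker accounting (packets arriving before slot $t$ have already advanced $M$ by at least $1/c$, so the threshold is hit before $m$ can reach $\lceil c\rceil$). You collapse both cases into a single step: the constrained quantity is a sub-sum of the nonnegative in-loop dual assignments of the interval, which by the algorithm's design total at most $c$ (exactly $c$ once the threshold is reached), so it is at most $c$. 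This is simpler and uniform, and it buys you a proof with no case analysis. You are also more explicit than the paper on a point it glosses over: that the end-of-slot updates of Lines~\ref{line:final_update_begin}--\ref{line:final_update_end} never enter the sum at an ON slot $t$, because they only touch packets indexed strictly after the most recent ON slot, which is at least $t$. The paper absorbs this into Observation~\ref{obs:within_interval} (whose statement, read literally, does not account for those lines) and into the informal discussion of the algorithm, so your short index argument is a welcome tightening. One caveat: your claim that the at most $\lceil c\rceil$ in-loop values sum to at most $c$ cannot rest on the count alone, since $\lceil c\rceil$ may exceed $c$; it needs the capped value of the final assignment (the $c-cM$ term), i.e., precisely the paper's design property that the in-loop duals of one interval total $c$. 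You invoke the right quantity, but that dependence should be stated rather than attributed to the step size and the gate $M<1$ by themselves.
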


\begin{proof}
The primal constraint~\eqref{eq:relax_primal-con2} and the dual constraint~\eqref{eq:relax_dual-con2} are clearly satisfied. 
For the primal constraint~\eqref{eq:relax_primal-con1}, it is easy to verify that for the $i$-th packet at slot $t$ ($i \le t$), if PDOA made an ACK during $[i,t]$, then constraint~\eqref{eq:relax_primal-con1} is satisfied; otherwise, since the $i$-th packet is not acked by slot $t$, PDOA will update ${z_i}(t)$ to be $1$, so constraint~\eqref{eq:relax_primal-con1} is also satisfied.
When the channels are OFF, the dual constraints~\eqref{eq:relax_dual-con1} are automatically satisfied.  
Now, consider an ON slot $t$ and its dual constraint~\eqref{eq:relax_dual-con1}
$\sum_{i=1}^{t} \sum_{\tau=t}^{T} y_{i}(\tau) \leq c$. This constraint requires that for all the packets arriving no later than slot $t$ (packet $1$ to packet $t$),  the sum of their dual variables beyond slot $t$ should not exceed $c$.  
Assuming that this ON slot $t$ falls into the $k$-th ACK interval $[t_k + 1, t_{k+1}]$ of PDOA, i.e., ${t_k} + 1 \le t \le {t_{k + 1}}$, where PDOA makes two ACKs at the ON slot $t_{k+1}$ ($t_{k+1} > t_k + 1$) and the ON slot $t_k$ (when the ON slot $t$ falls into the last ACK interval $[t_K + 1, T]$, where PDOA makes the last ACK at the ON slot $t_K$, our following analysis can be easily extended to this case).  
According to Observations~\ref{obs:within_interval}
and \ref{obs:after_ACK}, packet $1$ to packet $t_k$ are not updated after slot $t_k$, and packet $(t_k+1)$ to packet $t$ are not updated after slot $t_{k+1}$, then we have
\begin{equation}
\begin{aligned}
& \sum\nolimits_{i = 1}^t {\sum\nolimits_{\tau  = t}^T {{y_i}} } (\tau ) \\
= & \sum\nolimits_{i = 1}^{{t_k}} {\sum\nolimits_{\tau  = t}^T {{y_i}} } (\tau ) + \sum\nolimits_{i = {t_k} + 1}^t {\sum\nolimits_{\tau  = t}^{{t_{k + 1}}} {{y_i}} } (\tau ) \\
& + \sum\nolimits_{i = {t_k} + 1}^t {\sum\nolimits_{\tau  = {t_{k + 1}+1}}^T {{y_i}} } (\tau ) \\ 
= & 0 + \sum\nolimits_{i = {t_k} + 1}^t {\sum\nolimits_{\tau  = t}^{{t_{k + 1}}} {{y_i}} } (\tau ) + 0 \\
= & \sum\nolimits_{i = {t_k} + 1}^t {\sum\nolimits_{\tau  = t}^{{t_{k + 1}}} {{y_i}} } (\tau ).
\end{aligned}
\label{eq:dual_var_reduction}
\end{equation}
Next, we discuss the value of $\sum\nolimits_{i = {t_k} + 1}^t {\sum\nolimits_{\tau  = t}^{{t_{k + 1}}} {{y_i}} } (\tau )$ in two cases: (i)  $t = t_k + 1$,  and (ii) $t_k + 1 < t \le t_{k+1}$.

Case i): $t = t_k + 1$. In this case, we have $\sum\nolimits_{i = {t_k} + 1}^t {\sum\nolimits_{\tau  = t}^{{t_{k + 1}}} {{y_i}} } (\tau ) = \sum\nolimits_{\tau  = {t_k} + 1}^{{t_{k + 1}}} {{y_{{t_k} + 1}}(\tau )} $. 
We assume that the threshold is achieved after the updating packet $j$ ($t_k + 1 \le j \le t_{k+1}$) at slot ${t^\dag}$ ($t_k + 1 < {t^\dag} \le t_{k+1}$), i.e., 
\begin{equation}
    \begin{aligned}
        & \sum\nolimits_{\tau  = {t_k} + 1}^{{t_{k + 1}}} {{y_{{t_k} + 1}}(\tau )} \\
        = & \sum\nolimits_{\tau  = {t_k} + 1}^{{t^\dag }} {{y_{{t_k} + 1}}(\tau )}  + \sum\nolimits_{\tau  = {t^\dag } + 1}^{{t_{k + 1}}} {{y_{{t_k} + 1}}(\tau )} \\
        \buildrel (a) \over = & \sum\nolimits_{\tau  = {t_k} + 1}^{{t^\dag }} {{y_{{t_k} + 1}}(\tau )}  + 0 \\
        \le & \sum\nolimits_{\tau  = {t_k} + 1}^{{t^\dag } - 1} {\sum\nolimits_{i = {t_k} + 1}^\tau  {{y_i}(\tau )} }  + \sum\nolimits_{i = {t_k} + 1}^j {{y_i}({t^\dag })}  \\
        = & c, 
    \end{aligned}
\end{equation}
where $(a)$ is because the threshold is achieved at slot  ${t^\dag}$ and packet $(t_k+1)$ is never updated after slot  ${t^\dag}$.

Case ii): $t_k + 1 < t \le t_{k+1}$. 
We assume that during the interval $[t,{t_{k + 1}}]$, all packets arriving between $[t_k+1,t]$ (packet $t_k+1$ to packet $t$) make a total number $m$ updates, i.e.,  $\sum_{i={t_k} + 1}^{t} \sum_{\tau=t}^{{{t_{k + 1}}}} y_{i}(\tau) \le m$.
Next, we discuss the value of $m$ regarding the value of the ACK marker $M$.
On the one hand, consider the case where before $m$ increases to $\left\lceil c \right\rceil  - 1$, the ACK marker $M$ already becomes no smaller than $1$. In this case, all the dual variables of packet $t_k+1$ to packet $t$ are not updated according to Observation~\ref{obs:within_interval}, and we have $\sum\nolimits_{i = {t_k} + 1}^t {\sum\nolimits_{\tau  = t}^{{t_{k + 1}}} {{y_i}} } (\tau ) \le m < \left\lceil c \right\rceil  - 1 < (c + 1) - 1 = c$. 
On the other hand, consider the case where before $m$ increases to $\left\lceil c \right\rceil  - 1$, the ACK marker $M$ is smaller than $1$. 
At the point when $m = \left\lceil c \right\rceil  - 1$, the increment of the ACK marker $M$  due to those $m$ updates (denoted by $M(m)$) is 
$M(m) = (\left\lceil c \right\rceil  - 1)/c$.
Now the ACK marker $M$ becomes $M = N' + M(m) \ge 1/c + M(m) = \left\lceil c \right\rceil /c \ge 1$, where $N'$ is the increment of the ACK marker $M$ due to packet $(t_k + 1)$ to packet $t-1$ before slot $t$ ($N'$ is at least $1/c$ since packet $(t_k + 1)$ is not acked at slot $(t_k + 1)$, which increases  $N'$ by $1/c$). 
Given that the ACK marker $M$ now is no smaller than $1$, all the dual variables of packet $t_k+1$ to packet $t$ are not updated according to Observation~\ref{obs:within_interval}. 
Therefore, we have $\sum\nolimits_{i = {t_k} + 1}^t {\sum\nolimits_{\tau  = t}^{{t_{k + 1}}} {{y_i}} } (\tau ) \le m = \left\lceil c \right\rceil  - 1 < (c + 1) - 1 = c$.

In summary, we have $\sum_{i=1}^{t} \sum_{\tau=t}^{T} y_{i}(\tau) \leq c$ in both cases, thus the dual constraint~\eqref{eq:relax_dual-con1} is satisfied.
\end{proof}

The primal-dual problem allows us to analyze the CR of our online algorithm without knowing the optimal offline solution. As shown in Lemma~\ref{lemma:online_feasible_sol}, for a given channel state $\bf{s}$, our online algorithm outputs an integer feasible solution (denoted by $\pi$) to both primal Problem~\eqref{prob:integerl-program_relax} and dual Problem~\eqref{problem:dual-program_relax}. We use $P(\bf{s}, \pi)$ and $D(\bf{s}, \pi)$ to denote the primal objective value and the dual objective value under $\pi$, respectively.
In addition, this integer solution $\pi$ is also a feasible solution to Problem~\eqref{prob:integerl-program}, and we use $C_{\ref{prob:integerl-program}}({\bf{s}},\pi)$ to denote the objective value of Problem~\eqref{prob:integerl-program} under $\pi$.
Because primal Problem~\eqref{prob:integerl-program_relax} and Problem~\eqref{prob:integerl-program} have the same objective
function, we have $C_{\ref{prob:integerl-program}}({\bf{s}},\pi) = P({\bf{s}}, \pi)$.
The CR of our online algorithm $\pi$ for primal Problem~\eqref{prob:integerl-program_relax} satisfies
\begin{equation}
    \underbrace {\frac{{C_{\ref{prob:integerl-program}}({\bf{s}},\pi)}}{{OPT_{\ref{prob:integerl-program}}({\bf{s}})}}}_{\text{CR of Problem~\eqref{prob:integerl-program}}} \le \underbrace {\frac{{P({\bf{s}},{\pi})}}{{OPT_{\ref{prob:integerl-program_relax}}({\bf{s}})}}}_{\text{CR of primal Problem~\eqref{prob:integerl-program_relax}}} \le \frac{{P({\bf{s}},{\pi})}}{{D({\bf{s}},{\pi})}},
\end{equation}
where $OPT_{\ref{prob:integerl-program}}({\bf{s}})$ and $OPT_{\ref{prob:integerl-program_relax}}({\bf{s}})$ is the cost of the optimal offline algorithm for Problem~\eqref{prob:integerl-program} and primal Problem~\eqref{prob:integerl-program_relax}, respectively. 
Here we have $OPT_{\ref{prob:integerl-program}}({\bf{s}}) \ge OPT_{\ref{prob:integerl-program_relax}}({\bf{s}})$ because the search space of the optimal solution in primal Problem~\eqref{prob:integerl-program_relax} is larger than that in Problem~\eqref{prob:integerl-program}. 
The second inequality comes from the weak duality \cite{buchbinder2009design}. 
Furthermore, if we can show that there exists a constant $\beta$ such that $P({\bf{s}},\pi)/D({\bf{s}},\pi) \le \beta $ holds for any channel state $\mathbf{s}$, then our online algorithm is $\beta$-competitive
for primal Problem~\eqref{prob:integerl-program_relax} and Problem~\eqref{prob:integerl-program}.

Now with Lemma~\ref{lemma:online_feasible_sol}, we are ready to show our main result that PDOA is $3$-competitive in the following.

\begin{proof}[Proof of Theorem~\ref{them:online_3}]
For notational simplicity, let $P$ and $D$ be the value of the objective function of the
primal and the dual solutions produced by PDOA under a given channel state $\bf{s}$, respectively. 
In the following, we show that $ P/ {D} \le 3$. 
We assume that PDOA makes a sequence of ACKs $\pi = \{t_1, t_2, \dots, t_K\}$, where PDOA makes the $i$-th ACK at the ON slot $t_i$
(i.e., $s(t_i)d(t_i) = 1$). 
Our goal is to show that for any $k$-th  ($k \in [0,K]$) ACK interval $[{t_k} + 1,{t_{k+1}}]$ (where the first ACK interval is $[1,{t_{1}}]$ when $k = 0$ and the last ACK interval is $[{t_K} + 1,T]$ when $k= K$), the ratio between the primal objective value and the dual objective value in this $k$-th ACK interval (denoted by $P(k)$ and $D(k)$, respectively) is at most $3$, i.e., $P(k)/D(k) \le 3$. According to Observation~\ref{obs:after_ACK}, $P(k)$ and $D(k)$ are never changed when this ACK interval ends at slot $t_{k+1}$. 
This implies that PDOA also achieves $ P/ {D} \le 3$ on the entire instance $\mathbf{s}$. 
This, along with Lemma~\ref{lemma:online_feasible_sol}, concludes that PDOA is $3$-competitive based on the weak duality \cite{buchbinder2009design}.

We first discuss the relation between $P(k)$ and $D(k)$ in the first $K$ ACK interval $[{t_k} + 1,{t_{k+1}}]$ (i.e., $k \in [0,K - 1]$, where there is always an ACK made at slot $t_{k+1}$), and then discuss the relation between $P(K)$ and $D(K)$ in the last ACK interval $[{t_K} + 1,T]$ (i.e., $k = K$, where it is possible that no ACK is made at slot $T$) in the end.

Consider any $k$-th ($k \in [0,K-1]$) ACK interval $[{t_k} + 1,{t_{k+1}}]$ (denoted by $I_k$), where PDOA makes two ACKs at the ON slots $t_k$ and $t_{k+1}$, respectively. 
There are two cases when making an ACK at $t_{k+1}$: $1)$ the ACK marker $M$ equals or is larger than $1$ at $t_{k+1}$; $2)$ the ACK marker $M$ equals or is larger than $1$ at some OFF slot $t'$ $(t_k + 1<t'< t_{k+1})$ and $t_{k+1}$ is the very first ON slot after $t'$ (the channels are OFF during $[t', t_{k+1} - 1]$). 
Note that $t' \ne {t_k} + 1$ because the ACK cost $c > 1$.
An illustration of Case~$2$ is provided in Fig.~\ref{fig:online_proof_primal_dual_updates}.
We emphasize that according to Observation~\ref{obs:after_ACK}, we only need to consider the primal variable update and dual variable updates of packet $(t_k + 1)$ to packet $t_{k+1}$, since all previous packets (packet $1$ to packet $t_k$) are never updated after slot $t_k$.

Case $1)$: The ACK marker $M$ equals or is larger than $1$ at $t_{k+1}$. 
In this case, Lines~\ref{line:d_less_than_1_begin}-\ref{line:d_less_than_1_end} in PDOA are repeated $\left\lceil c \right\rceil $ times, and the total holding cost in $I_k$ is $\sum\nolimits_{t = {t_k} + 1}^{{t_{k + 1}}} {\sum\nolimits_{i = {t_k} + 1}^t {{z_i}(t)} } =  \left\lceil c \right\rceil$ and the total ACK cost in $I_k$ is $\sum\nolimits_{t = {t_k} + 1}^{{t_{k + 1}}} {c \cdot d(t)}  = c \cdot d({t_{k + 1}}) = c$. 
Thus, the primal objective value is  $P(k) = \left\lceil c \right\rceil  + c$. 
Similarly, the dual objective value is the sum of the dual variables $y_i(t)$ in $I_k$, which is $D(k) = \sum\nolimits_{t = {t_k} + 1}^{{t_{k + 1}}} {\sum\nolimits_{i = {t_k} + 1}^t {{y_i}(t)} }  = c$. Therefore, we have $P(k)/D(k) = (\left\lceil c \right\rceil  + c)/c \le (c + 1 + c)/c < 3$.

    \begin{figure}
        \centering
        \includegraphics[scale = 0.5]{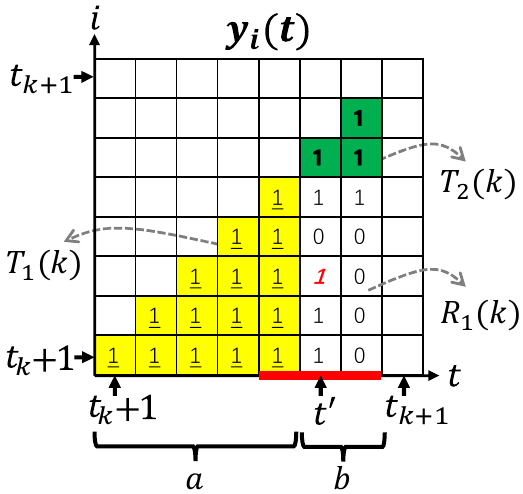}
        \caption{
        The updates of primal variables $z_i(t)$ and dual variables $y_i(t)$ in the $k$-th ACK interval $[t_k+1, t_{k+1}]$, where channels are OFF during $[t'-1, t'+1]$. The red bold italic $1$ denotes when the ACK marker equals or is larger than $1$.
        In addition, ${T_1}(k)$ is an equilateral triangle made of 1 (the underlined $1$'s with yellow background), ${T_2}(k)$ is an equilateral triangle made of $1$ (the bold $1$'s with green background), and
        ${R_1}(k)$ is a rectangle made of $1$ and $0$ (the regular $1$'s and $0$'s without background).} \label{fig:online_proof_primal_dual_updates}
        \vspace{-10pt}
    \end{figure}

Case $2)$: The ACK marker $M$ equals or is larger than $1$ at some OFF slot $t'$  $(t_k + 1<t'< t_{k+1})$ and $t_{k+1}$ is the very first ON slot after $t'$. 
We use $C_A(k)$ to denote the total ACK cost and use $C_H(k)$ to denote the total holding cost in $I_k$.  Here $P(k) = C_A(k) + C_H(k)$.
We have
    \begin{equation}
        \begin{aligned}
            {P(k)}/{D(k)} &  = ({{C_A(k) + C_H(k)}})/{D(k)} \\
            & \buildrel (a) \over = {c}/{D(k)} + {C_H(k)}/{D(k)}\\ 
            & \buildrel (b) \over \le {c}/{c} + {C_H(k)}/{D(k)}\\
            & \buildrel (c) \over \le {c}/{c} + {{2\Delta D}(k)}/{D(k)}\\
            & =3,
        \end{aligned}
        \label{eq:online_CR_3}
    \end{equation}
where $(a)$ is because PDOA makes only one ACK at slot ${{t_{k + 1}}}$ during $I_k$, 
i.e., ${C_A}(k) = \sum\nolimits_{t = {t_k} + 1}^{{t_{k + 1}}} {c \cdot d(t)}  = c \cdot d({t_{k + 1}}) = c$; $(b)$ is due to the dual objective value $D(k)$ is at least $c$ (i.e., when the ACK markter $M$ equals or is larger than $1$, $D(k)$ equals $c$, and $D(k)$ can be larger than $c$ due to the additional updates of the dual variables in Lines~\ref{line:final_update_begin}-\ref{line:final_update_end}); and we prove $(c)$ as follows. 
The total holding cost in $I_k$ is 
\begin{equation}
\begin{aligned}
& {C_H}(k) \\
= & \sum\nolimits_{\tau  = {t_k} + 1}^{{t_{k + 1}}} {\sum\nolimits_{i = {t_k} + 1}^\tau  {{z_i}(\tau )} } \\ 
= & \sum\nolimits_{\tau  = {t_k} + 1}^{{t_{k + 1}} - 1} {\sum\nolimits_{i = {t_k} + 1}^\tau  {{z_i}(\tau )} }  + \sum\nolimits_{i = {t_k} + 1}^{{t_{k + 1}}} {{z_i}({t_{k + 1}})}  \\
\buildrel (d) \over = & \sum\nolimits_{\tau  = {t_k} + 1}^{{t_{k + 1}} - 1} {\sum\nolimits_{i = {t_k} + 1}^\tau  {{z_i}(\tau )} }  + 0  \\
= & \sum\nolimits_{\tau  = {t_k} + 1}^{{t_{k + 1}} - 1} {\sum\nolimits_{i = {t_k} + 1}^\tau  {{z_i}(\tau )} } \\
\buildrel (e) \over = & \sum\nolimits_{\tau  = {t_k} + 1}^{{t_{k + 1}} - 1} {\sum\nolimits_{i = {t_k} + 1}^\tau  1 } \\
= & ({t_{k + 1}} - {t_k} - 1)({t_{k + 1}} - {t_k})/2,
\end{aligned}
\end{equation}
where in $(d)$,  ${z_i}({t_{k + 1}}) = 0$ for any $i \in [{t_k} + 1, {t_{k + 1}}]$ is because all the packets in $I_k$ are acked at slot $t_{k+1}$; and in $(e)$, ${{z_i}(\tau )} = 1$ for any $\tau  \in [{t_k} + 1,{t_{k + 1}} - 1]$ and $i \in [{t_k} + 1,\tau ]$ is because the packets in $I_k$ are not acked until slot $t_{k+1}$, and each of them needs to pay a holding cost, i.e., ${{z_i}(\tau )} = 1$. 
Similarly, the dual objective value in $I_k$ can be computed as $D(k) = \sum\nolimits_{\tau  = {t_k} + 1}^{{t_{k + 1}} - 1} {\sum\nolimits_{i = {t_k} + 1}^\tau  {{y_i}(\tau )} } $. 
We split $D(k)$ into three parts: the triangle ${T_1}(k) = \sum\nolimits_{\tau  = {t_k} + 1}^{t' - 1} {\sum\nolimits_{i = {t_k} + 1}^\tau  {{y_i}(\tau )} } $, the triangle  ${T_2}(k) = \sum\nolimits_{\tau  = t'}^{{t_{k + 1}} - 1} {\sum\nolimits_{i = t'}^\tau  {{y_i}(\tau )} } $, and the rectangle ${R_1}(k) = \sum\nolimits_{\tau  = t'}^{{t_{k + 1}} - 1} {\sum\nolimits_{i = {t_k} + 1}^{t' - 1} {{y_i}(\tau )} } $ (see an illustration in Fig.~\ref{fig:online_proof_primal_dual_updates}). Here $D(k) = {T_1}(k) + {T_2}(k) + {R_1}(k)$.

Our goal is to show that ${C_H}(k) \le 2({T_1}(k) + {T_2}(k) + {R_1}(k) ) = 2D(k)$.
Here, we can compute ${T_1}(k) = \sum\nolimits_{\tau  = {t_k} + 1}^{t' - 1} {\sum\nolimits_{i = {t_k} + 1}^\tau  {{y_i}(\tau )} }  = \sum\nolimits_{\tau  = {t_k} + 1}^{t' - 1} {\sum\nolimits_{i = {t_k} + 1}^\tau  1 }  = (t' - {t_k} - 1)(t' - {t_k})/2$, where ${{y_i}(\tau )} = 1$ for any $\tau  \in [{t_k} + 1,t' - 1]$ and $i \in [{t_k} + 1,\tau ]$ comes from  Lines~\ref{line:d_less_than_1_begin}-\ref{line:d_less_than_1_end} in PDOA since the ACK marker $M$ equals or is larger than $1$ until $t'$. 
In addition, we can compute ${T_2}(k) = \sum\nolimits_{\tau  = t'}^{{t_{k + 1}} - 1} {\sum\nolimits_{i = t'}^\tau  {{y_i}(\tau )} }  = \sum\nolimits_{\tau  = t' + 1}^{{t_{k + 1}} - 1} {\sum\nolimits_{i = t'}^\tau  {{y_i}(\tau )} }  + {y_{t'}}(t') = \sum\nolimits_{\tau  = t' + 1}^{{t_{k + 1}} - 1} {\sum\nolimits_{i = t'}^\tau  1 }  + {y_{t'}}(t') = [({t_{k + 1}} - t')({t_{k + 1}} - t' + 1)/2 - 1] + {y_{t'}}(t')$, where ${{y_i}(\tau )} = 1$ for any $\tau  \in [t' + 1, t_{k+1}-1]$ and $i \in [t',\tau ]$ comes from Lines~\ref{line:final_update_begin}-\ref{line:final_update_end} in PDOA since the channels are OFF during $[t' + 1, t_{k+1}-1]$. 
Next, we discuss the value of ${T_2}(k) + {R_1}(k)$ based on the value of ${y_{t'}}(t')$. On the one hand, if ${y_{t'}}(t') < 1$ (i.e., right after the update of packet $t'$ at slot $t'$, the ACK marker $M$ becomes no smaller than $1$), then we have ${R_1}(k) = \sum\nolimits_{\tau  = t'}^{{t_{k + 1}} - 1} {\sum\nolimits_{i = {t_k} + 1}^{t' - 1} {{y_i}(\tau )} }  \ge {y_{{t_k} + 1}}(t') = 1$, where ${y_{{t_k} + 1}}(t') = 1$ because at slot $t'$, ${y_i}(t') = 1$ for any $i \in [{{t_k} + 1}, t')$ except that ${y_{t'}}(t') < 1$. In this case, we have ${T_2}(k) + {R_1}(k) \ge ({t_{k + 1}} - t')({t_{k + 1}} - t' + 1)/2$. 
On the other hand, if ${y_{t'}}(t') = 1$ (i.e., the ACK marker $M$ becomes no smaller than $1$ before the update of packet $t'$ at slot $t'$, and ${y_{t'}}(t') = 1$ due to Lines~\ref{line:final_update_begin}-\ref{line:final_update_end}), we have ${T_2}(k) + {R_1}(k) \ge {T_2}(k) = ({t_{k + 1}} - t')({t_{k + 1}} - t' + 1)/2$. In both cases, we always have ${T_2}(k) + {R_1}(k) \ge ({t_{k + 1}} - t')({t_{k + 1}} - t' + 1)/2$.
Let $a$ be the length of $[{t_k} + 1,t' - 1]$ (i.e., $a = t' - {t_k} - 1$) and $b$ be the length of $[t', t_{k+1} -1]$ (i.e., $b = {t_{k + 1}} - t'$). Now we have ${T_1}(k) = a(a + 1)/2$, ${T_2}(k) + {R_1}(k) \ge b(b + 1)/2$, and ${C_H}(k) = (a + b)(a + b + 1)/2$.  Clearly, we have
      \begin{equation}
       \begin{aligned}
            & 2D(k) - {C_H}(k) \\
            & = 2({T_1}(k) + {T_2}(k) + {R_1}(k)) - {C_H}(k) \\
            & \ge  2 \cdot [a(a + 1)/2 + b(b + 1)/2] - (a + b)(a + b + 1)/2 \\
            & = [{(a - b)^2} + a + b]/2 \\
            & \ge 0,
        \end{aligned}
        \label{eq:triangles_larger_than_rectangle}
   \end{equation}
which completes $(c)$ in Eq.~\eqref{eq:online_CR_3}.

In the end, we consider the last time interval $[{t_K} + 1,T]$ (denoted by $I_{K}$).
If the last slot is an ON slot and PDOA makes the last ACK exactly at the last slot, i.e., $t_K= T$, then our previous analysis in Cases $1$ and $2$ still holds. Next, we consider the scenario where $t_K < T$. There are two cases at slot $T$: $1)$ $T$ is the slot before the ACK marker $M$ equals or is larger than $1$; $2)$ $T$ is the slot when or after the ACK marker $M$ equals or is larger than $1$. In both cases, there is no ACK made during $I_{K}$.

Case $1)$: $T$ is the slot before the ACK marker $M$ equals or is larger than $1$.
In this case, the total holding cost and the total ACK cost in $I_K$ are $(T - {t_K})(T - {t_K} + 1)/2$ and $0$, respectively. Thus, the primal objective value is $ P(K)=  (T - {t_K})(T - {t_K} + 1)/2 + 0 \cdot c = (T - {t_K})(T - {t_K} + 1)/2$. 
The dual objective value is the sum of total dual variables $y_i(t)$ in $I_{K+1}$, which is $(T - {t_K})(T - {t_K} + 1)/2$, i.e., $D(K)=(T - {t_K})(T - {t_K} + 1)/2$.  Therefore, we have $P(K)/D(K) = 1$. 

Case $2)$: $T$ is the slot when or after the ACK marker $M$ equals or is larger than $1$.
We assume that the ACK marker $M$ equals or is larger than $1$ at some slot $t'$ $( t_K < t' \le T)$. 
We claim that the channels are OFF during the interval $[t', T]$. Otherwise, an ACK will be made during $[t', T]$, which contradicts the fact that there is no ACK made during $I_{K}$.
We use $C_A(K)$ to denote the total ACK cost and use $C_H(K)$ to denote the total holding cost.  Here $P(K) = C_A(K) + C_H(K) = 0 + C_H(K)$, where $C_A(K) = 0$ because there is no ACK made during $I_{K}$.
We have
  \begin{equation}
        \begin{aligned}
            {P(K)}/{D(K)} &  = ({{C_A(K) + C_H(K)}})/{D(K)} \\
            & = 0 + {C_H(K)}/{D(K)}\\ 
            & \buildrel (a) \over \le {{2\Delta D}(K)}/{D(K)}\\
            & =2,
        \end{aligned}
    \end{equation}
where the analysis in $(a)$ mirrors  that of $(c)$ in Eq.~\eqref{eq:online_CR_3}.

In summary, given any channel state $\mathbf{s}$, PDOA achieves $P(k)/D(k) \le 3$ for any ACK interval $I_k$ ($k \in [0,K]$), and thus PDOA achieves
$P/D \le 3$ on the entire instance $\mathbf{s}$.
This, along with Lemma~\ref{lemma:online_feasible_sol} and leveraging the weak duality \cite{buchbinder2009design}, confirms that PDOA is $3$-competitive.
\end{proof}

\vspace{-1cm}
\begin{IEEEbiography}[{\includegraphics[width=1in,height=1.25in,clip,keepaspectratio]{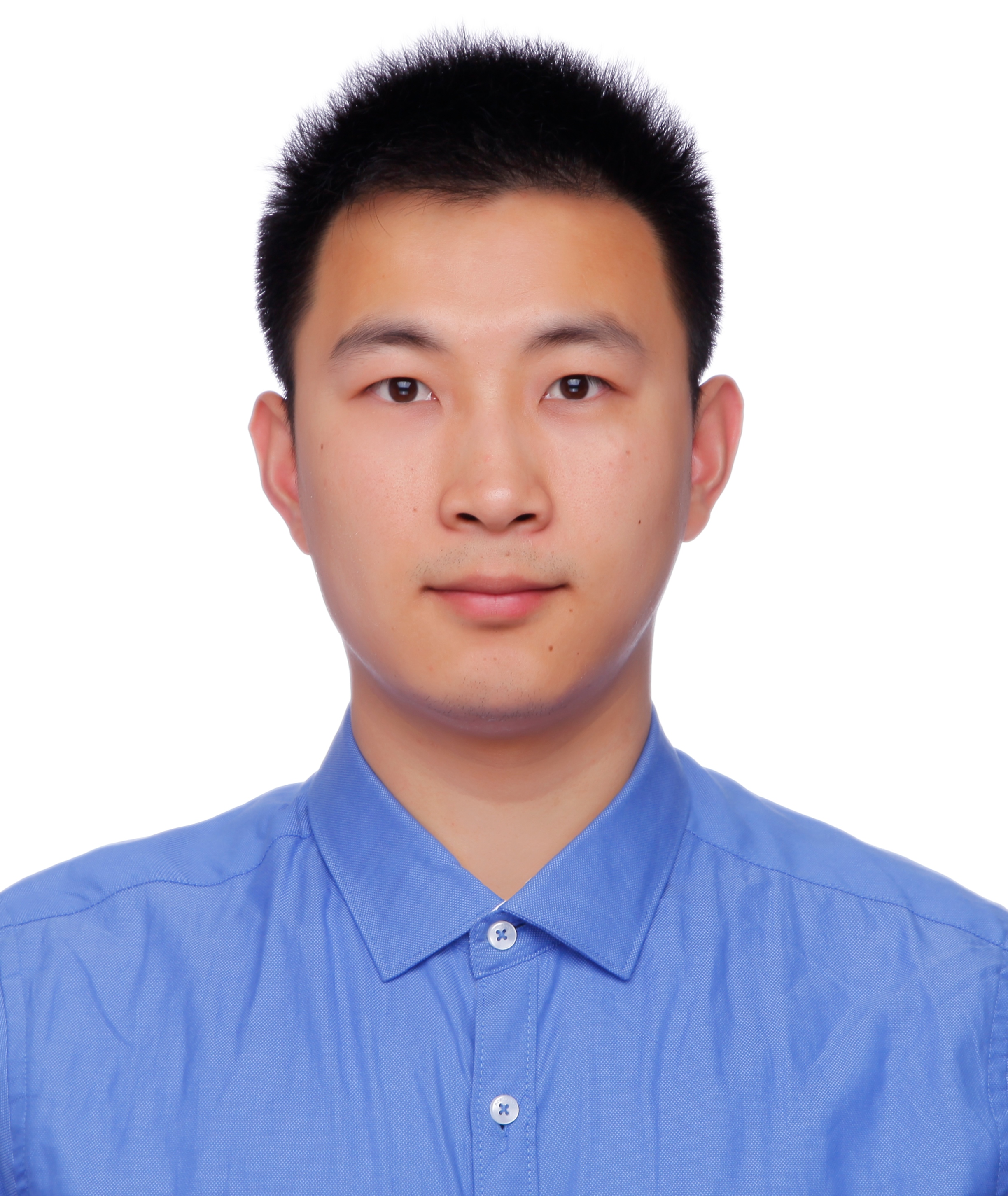}}]{Zhongdong Liu} received his B.S. degree in Mathematics and Applied Mathematics with honor from Northeast Forestry University, Harbin, China, in 2016,  and his Ph.D. degree in Computer Science and Application from Virginia Tech, Blacksburg, VA, USA, in 2024. He is currently an instructor in the Department of Computer Science at Virginia Tech, Blacksburg, VA, USA. His research interests are in the modeling, analysis, control, and optimization of complex network systems.
\end{IEEEbiography}

\begin{IEEEbiography}[{\includegraphics[width=1in,height=1.25in,clip,keepaspectratio]{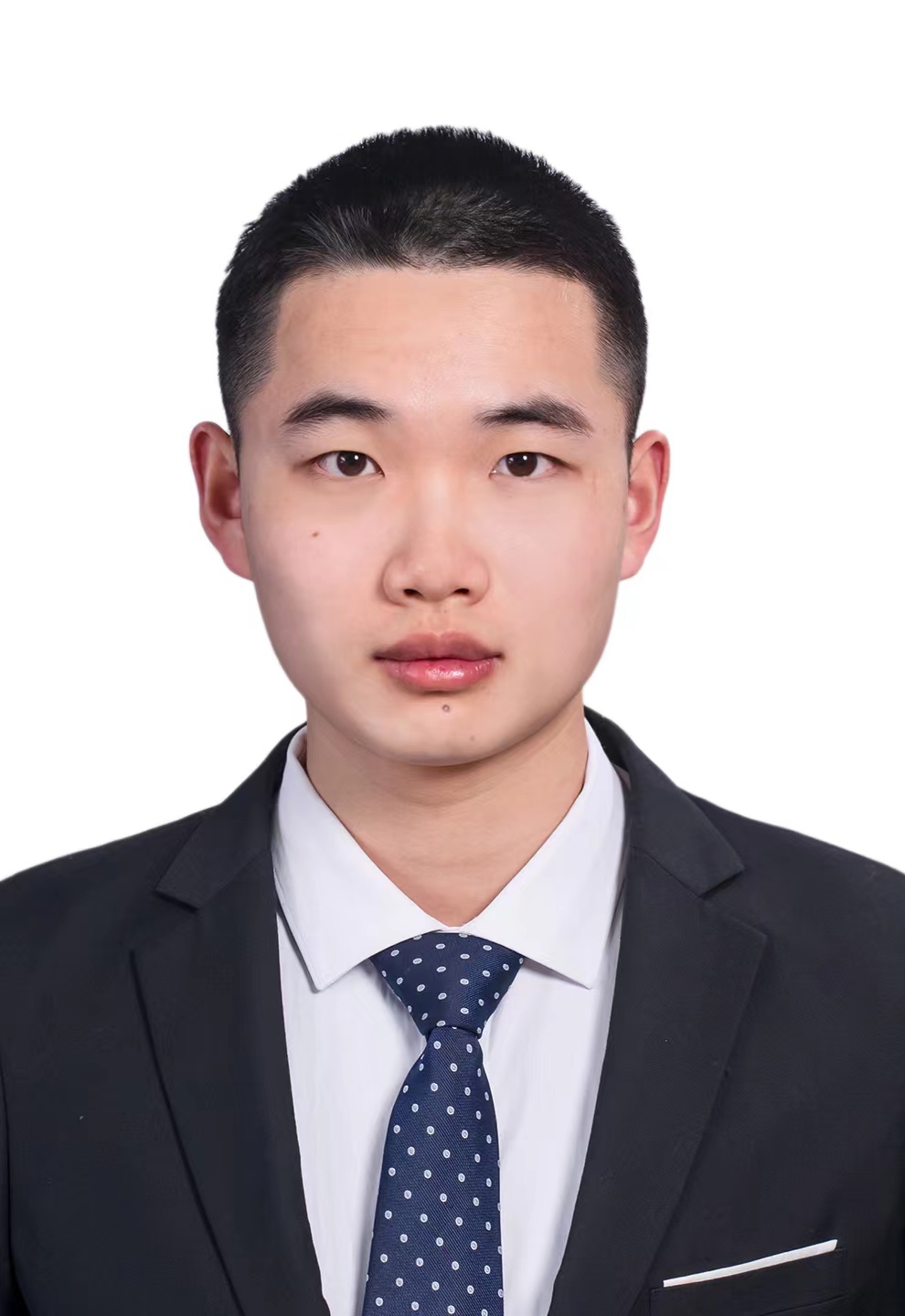}}]{Keyuan Zhang}(S'22) receive his B.S. degree in Computer Science and Engineering from Southern University of Science and Technology (SUSTech), Shenzhen, China, in 2022. He is currently pursuing his Ph.D. degree with the Department of Computer Science, Virginia Tech, Blacksburg, VA, USA. His research interests are modeling, analysis, and algorithm design for machine learning and network systems. 
\end{IEEEbiography}

\vspace{-0.6cm}
\begin{IEEEbiography}[{\includegraphics[width=1in,height=1.25in,clip,keepaspectratio]{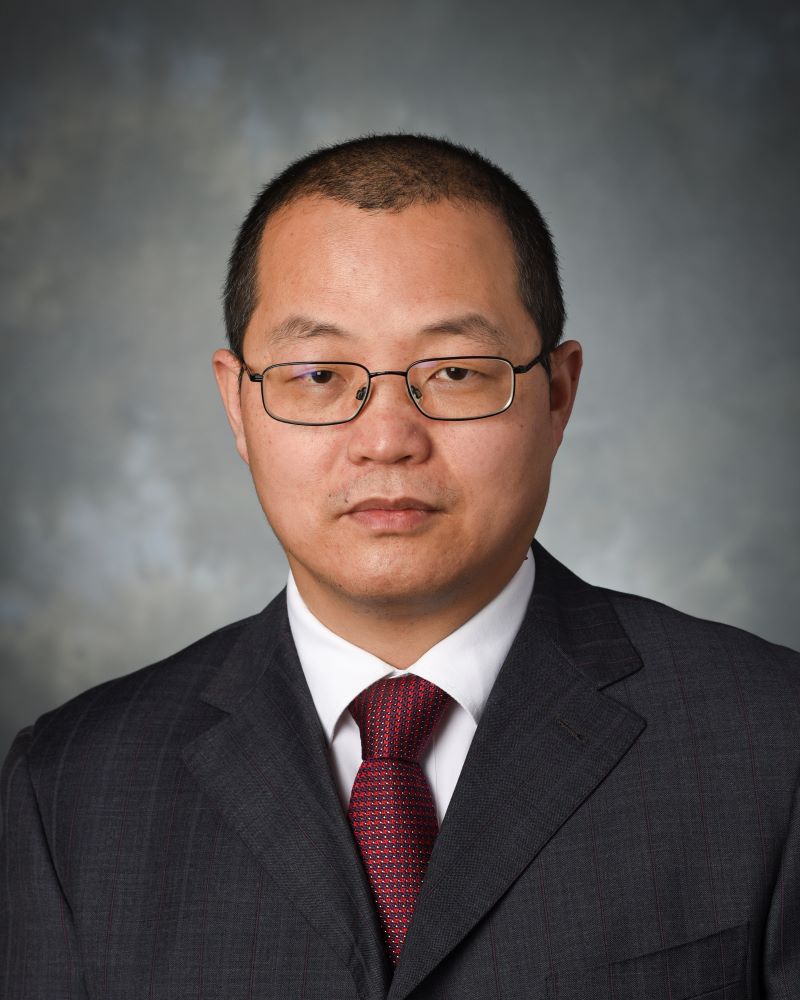}}]{Bin Li}(S'11-M'16-SM'20)
received the B.S. degree in Electronic and Information Engineering, M.S. degree in Communication and Information Engineering, both from Xiamen University, China, and 
Ph.D. degree in Electrical and  Computer Engineering from The Ohio State University. He is currently an associate professor in the Department of Electrical Engineering at the Pennsylvania State University, University Park, PA, USA. His research focuses on the intersection of networking, machine learning, and system developments, and their applications in networking for virtual/augmented reality, mobile edge computing, mobile crowd-learning, and Internet-of-Things. He is a senior member of the IEEE and a member of the ACM. He received both the National Science Foundation (NSF) CAREER Award and the Google Faculty Research Award.
\end{IEEEbiography}

\begin{IEEEbiography}[{\includegraphics[width=1in,height=1.25in,clip,keepaspectratio]{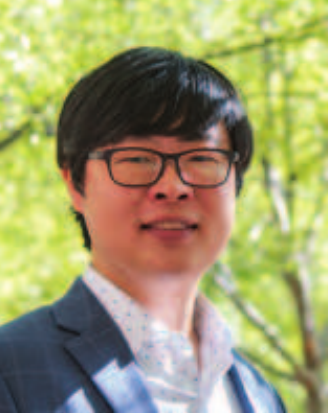}}]{Yin Sun}(Senior Member, IEEE) is the Bryghte D. and Patricia M. Godbold Endowed Associate Professor in the Department of Electrical and Computer Engineering at Auburn University, Alabama. He received his B.Eng. and Ph.D. degrees in Electronic Engineering from Tsinghua University, in 2006 and 2011, respectively. He was an Assistant Professor in the Department of Electrical and Computer Engineering at Auburn University from 2017 to 2023 and a Postdoctoral Scholar and Research Associate at the Ohio State University from 2011 to 2017. His research interests include Networking, Machine Learning, Semantic Communications, Age of Information, and Information Theory. His articles received the Best Student Paper Award of the IEEE/IFIP WiOpt 2013, the Best Paper Award of the IEEE/IFIP WiOpt 2019, runner-up for the Best Paper Award of ACM MobiHoc 2020, and the Journal of Communications and Networks (JCN) Best Paper Award in 2021. He received the Auburn Author Award in 2020, the National Science Foundation (NSF) CAREER Award in 2023, the Bryghte D. and Patricia M. Godbold Endowed Professorship in 2023, the Ginn Faculty Achievement Fellowship in 2023, and the College of Engineering's Research Award for Excellence (Senior Faculty) in 2024. 
\end{IEEEbiography}

\vspace{-1cm}
\begin{IEEEbiography}[{\includegraphics[width=1in,height=1.25in,clip,keepaspectratio]
{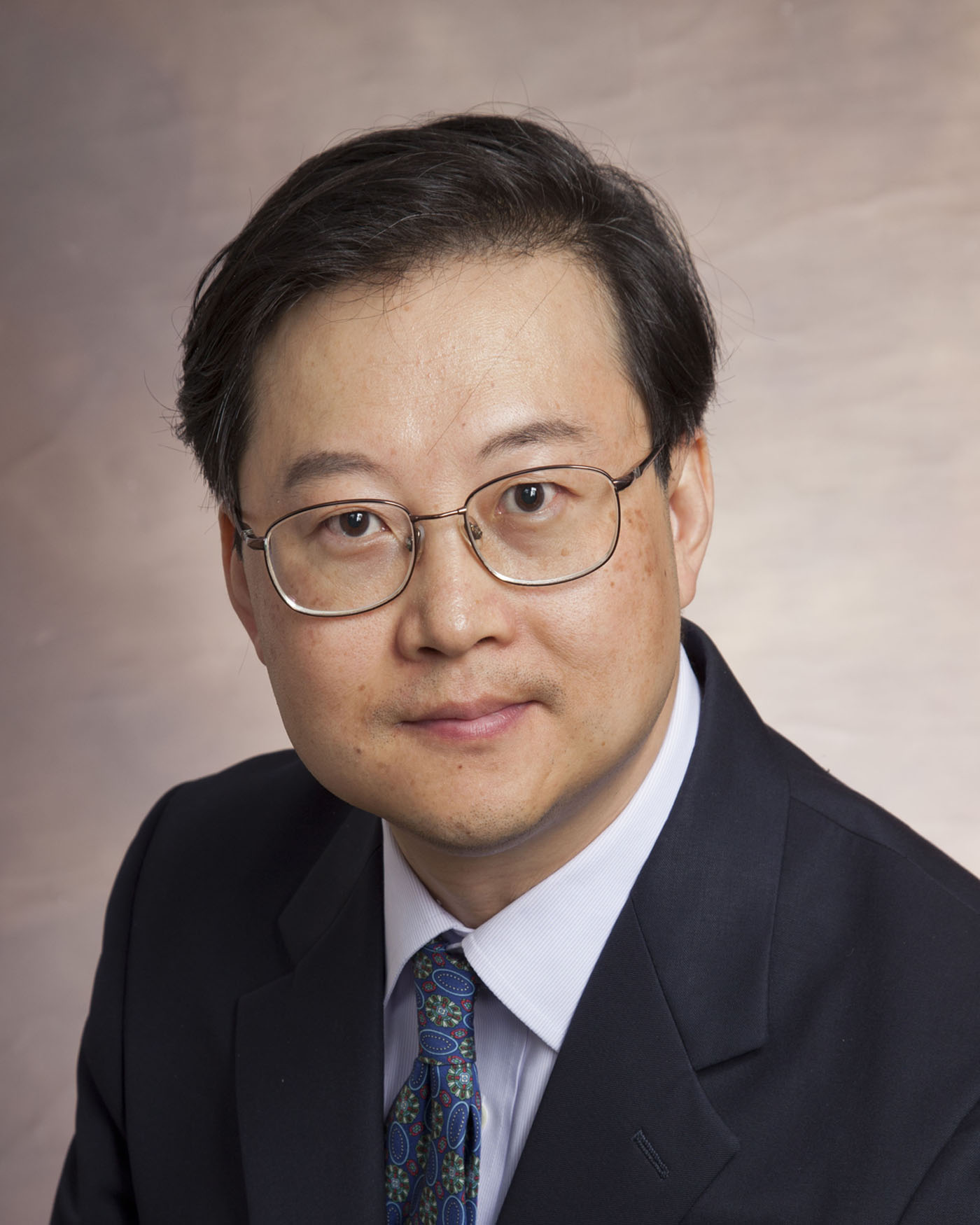}}]{Y. Thomas Hou}(Fellow, IEEE) received his Ph.D. from NYU Tandon School of Engineering in 1998. He is currently Bradley Distinguished Professor of Electrical and Computer Engineering at Virginia Tech, Blacksburg, VA, USA, which he joined in 2002. He was a Member of Research Staff at Fujitsu Laboratories of America in Sunnyvale, CA from 1997 to 2002. 
His current research focuses on developing real-time optimal solutions to complex science and engineering problems arising from wireless and mobile networks. 
He is also interested in wireless security.  
He authored/co-authored two textbooks and has published over 400 papers in IEEE/ACM journals and conferences. 
His papers were recognized by 12 best paper awards from IEEE and ACM, including an IEEE INFOCOM Test of Time Paper Award in 2023. 
He holds six U.S. patents.  
Prof. Hou was named an IEEE Fellow for contributions to modeling and optimization of wireless networks.  
He was/is on the editorial boards of a number of IEEE and ACM transactions and journals. 
He was Steering Committee Chair of IEEE INFOCOM conference and was a member of the IEEE Communications Society Board of Governors.  He was also a Distinguished Lecturer of the IEEE Communications Society.  
\end{IEEEbiography}

\vspace{-1cm}
\begin{IEEEbiography}[{\includegraphics[width=1in,height=1.25in,clip,keepaspectratio]{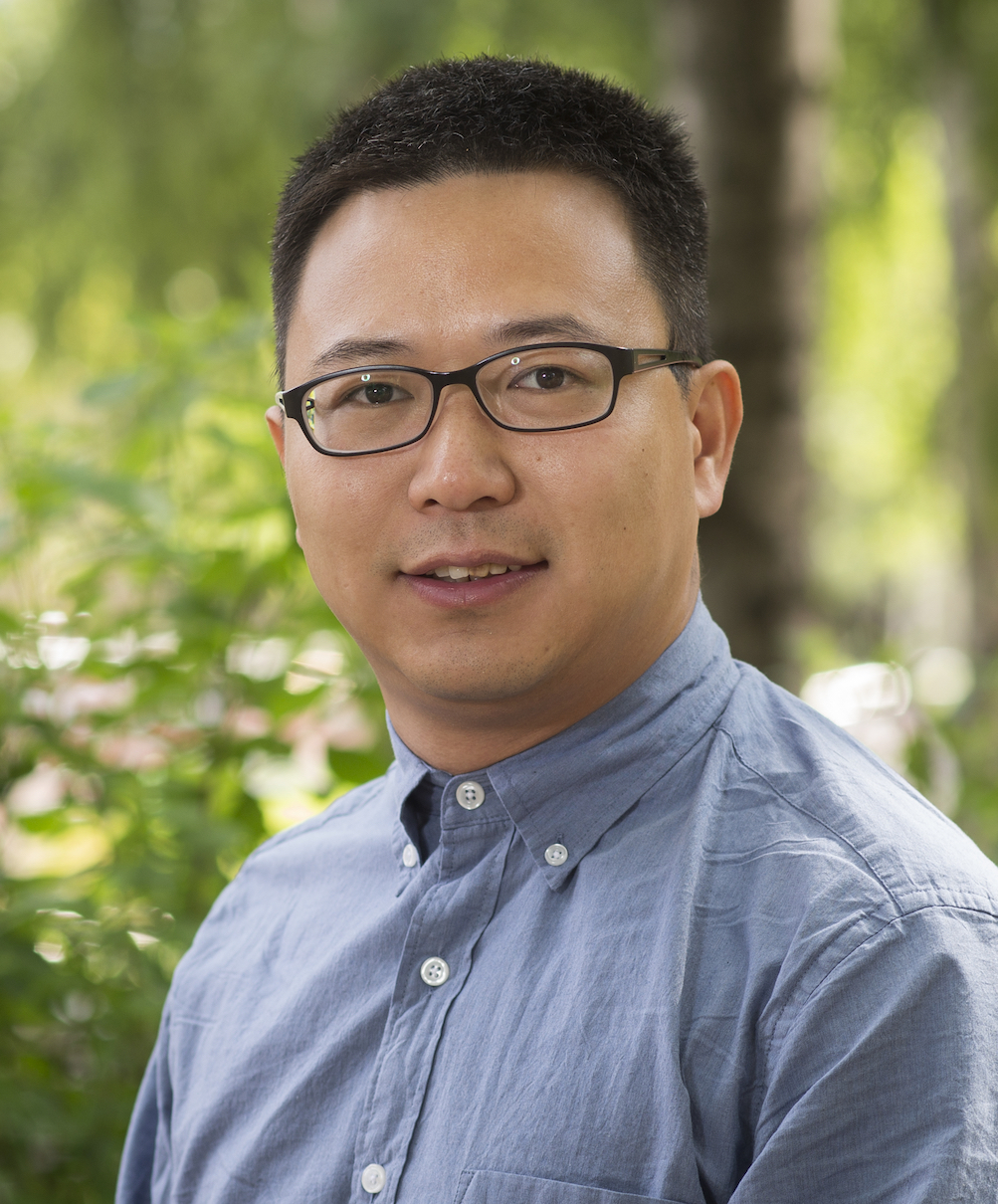}}]{Bo Ji}(S'11-M'12-SM'18)
received his B.E. and M.E. degrees in Information Science and Electronic Engineering from Zhejiang University, Hangzhou, China, in 2004 and 2006, respectively, and his Ph.D. degree in Electrical and Computer Engineering from The Ohio State University, Columbus, OH, USA, in 2012. Dr. Ji is an Associate Professor of Computer Science and a College of Engineering Faculty Fellow at Virginia Tech. Prior to joining Virginia Tech, he was an Associate Professor in the Department of Computer and Information Sciences at Temple University, where he was an Assistant Professor from July 2014 to June 2020. He was also a Senior Member of Technical Staff at AT\&T Labs, San Ramon, CA, from January 2013 to June 2014.
His research interests include the multidisciplinary intersections of Computing and Networking Systems, Artificial Intelligence and Machine Learning, Security and Privacy, and Extended Reality.
He has been the general co-chair of IEEE/IFIP WiOpt 2021 and the technical program co-chair of ACM MobiHoc 2023 and ITC 2021, and he has also served on the editorial boards of various IEEE and ACM journals (IEEE/ACM Transactions on Networking, ACM SIGMETRICS Performance Evaluation Review, IEEE Transactions on Network Science and Engineering, IEEE Internet of Things Journal, and IEEE Open Journal of the Communications Society). Dr. Ji is a senior member of the IEEE and the ACM and a member of the AAAS. He was a recipient of the National Science Foundation (NSF) CAREER Award in 2017, the NSF CISE Research Initiation Initiative Award in 2017, the IEEE INFOCOM 2019 Best Paper Award, the IEEE/IFIP WiOpt 2022 Best Student Paper Award, the IEEE TNSE Excellent Editor Award in 2021, 2022, and 2024, and the Dean's Faculty Fellow Award from the College of Engineering at Virginia Tech in 2023.
\end{IEEEbiography}
\clearpage
\appendices

\twocolumn[
\begin{center}
\section*{\normalfont\Huge\sffamily \textcolor{gray}{Supplemental Material of} \\ Learning-augmented Online Minimization of Age
of Information and Transmission Costs}

\vspace{0.6cm}

{\normalfont\large\sffamily Zhongdong Liu, Keyuan Zhang, Bin Li, Yin Sun, Y. Thomas Hou, and Bo Ji}

\vspace{1cm}

\textbf{DESCRIPTION}: This document is supporting content for the manuscript (Learning-augmented Online Minimization of Age
of Information and Transmission Costs), given the
limited number of pages allowed by the publisher.
\end{center}

\vspace{1.2cm}
]

\section{Proof of Lemma~\ref{lemma:problem_equivalence}} 
\label{appendix:proof_of_equivalence}
\begin{proof}
Our goal is to show that: (i) any feasible solution to Problem~\eqref{problem} can be converted to a feasible solution to Problem~\eqref{prob:integerl-program}, and these two solutions have the same total costs; (ii) any feasible solution to Problem~\eqref{prob:integerl-program} can be converted to a feasible solution to Problem~\eqref{problem}, and the total cost of the converted solution to Problem~\eqref{problem} is no greater than the total cost of the solution to Problem~\eqref{prob:integerl-program}. This implies that any optimal solution to Problem~\eqref{problem} is also an optimal solution to Problem~\eqref{prob:integerl-program}, and vice versa. Therefore, these two problems are equivalent \cite[Sec. 4.1.3]{boyd2004convex}.

We first show that any feasible solution to Problem~\eqref{problem} can be converted to a feasible solution to Problem~\eqref{prob:integerl-program}.
Given a channel state pattern $\mathbf{s}$, 
we assume that the solution $\pi  = \{ {t_1},{t_2}, \dots ,{t_n}\}$ is a feasible solution to Problem~\eqref{problem},
where this solution makes the $i$-th transmission at the ON slot $t_i$ (i.e., $s(t_i)d(t_i) = 1$), and the total number of transmission is $n$. We can compute the total cost of the solution $\pi$ to Problem~\eqref{problem} as ${C_{\ref{problem}}}(s,\pi ) = cn + ({t_1} - 1){t_1}/2 + \sum\nolimits_{i = 2}^n {({t_i} - {t_{i - 1}} - 1)({t_i} - {t_{i - 1}})/2}  + (T - {t_n} - 1)(T - {t_n})/2$, where $cn$ is the total transmission cost and the rest is the total staleness cost.
Based on the solution $\pi$ to Problem~\eqref{problem}, we construct a solution $\pi'$ to Problem~\eqref{prob:integerl-program} in the following way: (i) solution $\pi'$ sends the ACKs at the same time when solution $\pi$ transmits, i.e., solution $\pi'$ sends ACKs at a sequence of slots {$\{ t{_1},t{_2}, \dots ,t{_n}\} $};
(ii) based on the ACK decisions in step (i), for any slot $t$ and any packet $i \le t$,  solution $\pi'$ lets $z_i(t) = 0$ if  $\sum\nolimits_{\tau  = i}^t s (\tau )d(\tau ) \ge 1$ and $z_i(t) = 1$ if $\sum\nolimits_{\tau  = i}^t s (\tau )d(\tau ) = 0$. We denote the solution $\pi'$ to  Problem~\eqref{prob:integerl-program} by $\pi ' = \{ \{ {t_1},{t_2}, \dots ,{t_n}\} ,\{ \{ {z_i}(t)\} _{i = 1}^t\} _{t = 1}^T\} $. 
We can easily verify that solution $\pi'$ is a feasible solution to Problem~\eqref{prob:integerl-program} because both constraints~\eqref{eq:primal-con1} and \eqref{eq:primal-con2} are satisfied. Furthermore, according to the construction of $z_i(t)$ in step (ii), we know that once the ACK is made at some ON slot $t \in \{ {t_1},{t_2}, \dots,{t_n}\}$, then all previously arrived packets (packet $1$ to packet $t$) are acked forever after slot $t$, i.e.,  $z_i(\tau) = 0$ for all $i \le t$ and all $\tau \ge t$. This indicates that we can compute the total holding cost of the solution $\pi'$ as 
\begin{equation}
    \begin{aligned}
        & \sum\nolimits_{t = 1}^T {\sum\nolimits_{i = 1}^t {{z_i}(t)} } \\
        = & \underbrace {\sum\nolimits_{t = 1}^{{t_1} - 1} {\sum\nolimits_{i = 1}^t {{z_i}(t)} }  + \sum\nolimits_{t = {t_1}}^T {\sum\nolimits_{i = 1}^{{t_1}} {{z_i}(t)} } }_{\text{Total holding cost of packet $1$ to packet $t_1$}} \\
        & + \underbrace {\sum\nolimits_{t = {t_1} + 1}^{{t_2} - 1} {\sum\nolimits_{i = {t_1} + 1}^t {{z_i}(t)} }  + \sum\nolimits_{t = {t_2}}^T {\sum\nolimits_{i = {t_1} + 1}^{{t_2}} {{z_i}(t)} } }_{\text{Total holding cost of packet ($t_1+1$) to packet $t_2$}} \\
        & +  \dots  + \underbrace {\sum\nolimits_{t = {t_n} + 1}^T {\sum\nolimits_{i = {t_n} + 1}^t {{z_i}(t)} } }_{\text{Total holding cost of packet ($t_n+1$) to packet $T$}} \\
        \buildrel (a) \over  = & \sum\nolimits_{t = 1}^{{t_1} - 1} {\sum\nolimits_{i = 1}^t 1 }  + \sum\nolimits_{t = {t_1}}^T {\sum\nolimits_{i = 1}^{{t_1}} 0 } \\
        & + \sum\nolimits_{t = {t_1} + 1}^{{t_2} - 1} {\sum\nolimits_{i = {t_1} + 1}^t {1} }  + \sum\nolimits_{t = {t_2}}^T {\sum\nolimits_{i = {t_1} + 1}^{{t_2}} {0} } \\
        & +  \dots  + \sum\nolimits_{t = {t_n} + 1}^T {\sum\nolimits_{i = {t_n} + 1}^t {1} }  \\
        = & ({t_1} - 1){t_1}/2 + \sum\nolimits_{i = 2}^n {({t_i} - {t_{i - 1}} - 1)({t_i} - {t_{i - 1}})/2} \\ 
        & + (T - {t_n} - 1)(T - {t_n})/2,
    \end{aligned}
    \label{eq:total_holding_cost}
\end{equation}
where in (a), ${z_i}(t) = 1$ because packet $i$ is not acked by slot $t$ and ${z_i}(t) = 0$ otherwise.
In addition, the total ACK cost of the solution $\pi'$ is $cn$. Therefore, the total cost of the solution $\pi'$ to Problem~\eqref{prob:integerl-program} is ${C_{\ref{prob:integerl-program}}}(s,\pi') = cn + ({t_1} - 1){t_1}/2 + \sum\nolimits_{i = 2}^n {({t_i} - {t_{i - 1}} - 1)({t_i} - {t_{i - 1}})/2}  + (T - {t_n} - 1)(T - {t_n})/2$, which is the same as the total cost of solution $\pi$ to Problem~\eqref{problem}. 
In summary, any feasible solution $\pi$ to Problem~\eqref{problem} can be converted to a feasible solution $\pi'$ to Problem~\eqref{prob:integerl-program}, and those two solutions have the same total cost, i.e., ${C_{\ref{problem}}}(s,\pi ) = {C_{\ref{prob:integerl-program}}}(s,\pi ')$.

Next, we show that any feasible solution to Problem~\eqref{prob:integerl-program} can be converted to a feasible solution to Problem~\eqref{problem}.
Given a channel state pattern $\mathbf{s}$, 
we assume that the solution $\pi = \{ \{ {t_1},{t_2}, \dots ,{t_n}\} ,\{ \{ {z_i}(t)\} _{i = 1}^t\} _{t = 1}^T\}$ is a feasible solution to Problem~\eqref{prob:integerl-program}, where this solution makes the $i$-th ACK at the ON slot $t_i$ (i.e., $s(t_i)d(t_i) = 1$). 
Though the solution $\pi$ is a feasible solution to Problem~\eqref{prob:integerl-program}, it is possible that this solution makes unnecessary cost of $z_i(t)$ (i.e., letting $z_i(t) = 1$ even though $\sum\nolimits_{\tau=i}^{t}s(\tau) d(\tau) \ge 1$).
In this case, we can always find another feasible solution $\hat \pi  = \{ \{ {t_1},{t_2}, \dots ,{t_n}\} ,\{ \{ {{\hat z}_i}(t)\} _{i = 1}^t\} _{t = 1}^T\} $ to Problem~\eqref{prob:integerl-program} that makes the same ACK decisions as the solution $\pi$ but never makes the unnecessary cost of ${\hat z}_i(t)$ (i.e., letting ${\hat z}_i(t) = 1$ only when $\sum\nolimits_{\tau=i}^{t}s(\tau) d(\tau) = 0$ and letting ${\hat z}_i(t) = 0$ only when $\sum\nolimits_{\tau=i}^{t}s(\tau) d(\tau) \ge 1$), and their total cost in Problem~\eqref{prob:integerl-program}  satisfies $C_{\ref{prob:integerl-program}}(\mathbf{s}, {\hat \pi}) \le C_{\ref{prob:integerl-program}}(\mathbf{s}, \pi)$. 
Similar to the previous analysis (i.e., Eq.~\eqref{eq:total_holding_cost}), the total cost of the solution $\hat \pi$ is ${C_{\ref{prob:integerl-program}}}(s,{\hat \pi}) = cn + ({t_1} - 1){t_1}/2 + \sum\nolimits_{i = 2}^n {({t_i} - {t_{i - 1}} - 1)({t_i} - {t_{i - 1}})/2}  + (T - {t_n} - 1)(T - {t_n})/2$.
Based on the feasible solution ${\hat \pi }$ to Problem~\eqref{prob:integerl-program}, we can construct a solution $\pi'$ to Problem~\eqref{problem} in the following way: 
(i) solution $\pi'$ transmits at the same time when solution $\pi$ sends the ACKs, i.e., solution $\pi'$ transmits at a sequence of slot {$\{ t{_1},t{_2}, \dots ,t{_n}\} $}.
We denote the solution $\pi'$ to  Problem~\eqref{problem} by $\pi ' = \{ {t_1},{t_2}, \dots ,{t_n}\}$. We can easily check that the solution $\pi'$ is a feasible solution to Problem~\eqref{problem} since constraint~\eqref{eq:problem_cons1} is satisfied. In addition, we can compute the total cost of the solution $\pi'$ to Problem~\eqref{problem} as ${C_{\ref{problem}}}(s,\pi') = cn + ({t_1} - 1){t_1}/2 + \sum\nolimits_{i = 2}^n {({t_i} - {t_{i - 1}} - 1)({t_i} - {t_{i - 1}})/2}  + (T - {t_n} - 1)(T - {t_n})/2$, which is the same as the total cost of solution ${\hat \pi }$ to Problem~\eqref{prob:integerl-program}.
In conclusion, any feasible solution $\pi$ to Problem~\eqref{prob:integerl-program} can be converted to a feasible solution $\pi'$ to Problem~\eqref{problem}, and their total cost satisfies ${C_{\ref{prob:integerl-program}}}(s,\pi ) \ge {C_{\ref{problem}}}(s,\pi')$.

Finally, we show that for any optimal solution of Problem~\eqref{problem}, it can be converted to an optimal solution to Problem~\eqref{prob:integerl-program}, and vice versa. 
Assuming that the solution $\pi_*$ is an optimal solution to Problem~\eqref{problem}.
From the above analysis, we can construct a feasible solution $\pi'_*$ to Problem~\eqref{prob:integerl-program}, and their total cost satisfies ${C_{\ref{problem}}}(\mathbf{s},\pi_*) = {C_{\ref{prob:integerl-program}}}(\mathbf{s},\pi'_*)$. 
We claim that $\pi'_*$ is also an optimal solution to Problem~\eqref{prob:integerl-program}. Otherwise, there must be an optimal solution $\pi''_*$ to Problem~\eqref{prob:integerl-program} such that $\pi''_* \ne \pi'_*$ and ${C_{\ref{prob:integerl-program}}}(\mathbf{s}, \pi'_*) > {C_{\ref{prob:integerl-program}}}(\mathbf{s}, \pi''_*)$. 
Again, from the previous analysis, we know that the optimal solution $\pi''_*$ to Problem~\eqref{prob:integerl-program} can be converted to a feasible solution $\pi _*^\dag $ to Problem~\eqref{problem}, and their total cost satisfies ${C_{\ref{prob:integerl-program}}}(\mathbf{s},\pi''_*) \ge {C_{\ref{problem}}}(\mathbf{s},\pi _*^\dag)$.
However, this indicates the solution $\pi_*$ is not an optimal solution to Problem~\eqref{problem} since we have ${C_{\ref{problem}}}(\mathbf{s},\pi _*^\dag) < {C_{\ref{problem}}}(\mathbf{s},\pi_*)$, which contradicts with our assumption. Therefore, the solution $\pi_*$ is also an optimal solution to Problem~\eqref{prob:integerl-program}. Similarly, we can show that any optimal solution to Problem~\eqref{prob:integerl-program} is also an optimal solution to Problem~\eqref{problem}. This completes the proof. 
\end{proof}

\section{Proof of Lemma~\ref{lemma:ml_robustness}}
\label{appendix:ml_robustness_proof}
Our proof outline is as follows.
We first show that LAPDOA produces a feasible primal solution and an almost feasible dual solution (with a factor of $c/(c+1)$) in Lemma~\ref{lemma:ml_almost_feasible_solution}.
Then, we show that the ratio between the total primal objective value (denoted by $P$) and the total dual objective value (denoted by $D$) is at most $3/ \lambda$, i.e., $P/D \le 3/\lambda$.  
Scaling down all dual variables ${y_i}(t)$ generated by LAPDOA by a multiplicative factor of $c/(c + 1)$, we obtain a feasible dual solution with a dual objective value of $(c/(c + 1)) \cdot D$. By the weak duality \cite{buchbinder2009design}, we have $P/OPT \le P/((c/(c + 1)) \cdot D) = ((c + 1)/c) \cdot P/D \le ((c + 1)/c) \cdot 3/\lambda $, completing the proof.

To begin with, we introduce two key observations of LAPDOA that will be widely used in the following proofs.

\begin{observation}
Assuming that LAPDOA made the latest ACK at some ON slot $L$ and the current time slot is $t$ ($t > L$), then at slot $t$, before the threshold is achieved ($M < 1$), LAPDOA updates the primal variable $z_i(t)$ and the dual variable $y_i(t)$ of packet $(L+1)$ to packet $t$; however, once the threshold is achieved ($M \ge 1$), LAPDOA only updates the primal variable $z_i(t)$ of the unacked packets if the channel is OFF at slot $t$.
\label{obs:ml_within_interval}
\end{observation}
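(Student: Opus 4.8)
The plan is to prove the observation by a direct inspection of the control flow of Algorithm~\ref{alg:ml_primal-dual} (LAPDOA), tracking the ACK marker $M$ through the inner loop over packets $i = L+1, L+2, \dots, t$ at a fixed slot $t$. The key structural fact I would establish first is that, within a single slot, $M$ is nondecreasing as $i$ increases (each applied increment $M'$ is nonnegative) and is reset to $0$ only when an ACK is issued. Consequently the inner loop splits cleanly into a prefix of packets processed while $M < 1$, followed by a suffix processed once $M \ge 1$, and I would argue the two halves of the claim separately.

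For the first half ($M < 1$), I would observe that the guard in Line~\ref{line:ml_d_less_than_1_begin} routes every such packet $i$ into the ``Not ready to ACK'' block. Regardless of whether $t \ge \alpha(i)$ (a big update, $M' \gets 1/\lambda c$, $y' \gets 1$) or $t < \alpha(i)$ (a small update, $M' \gets \lambda/c$, $y' \gets \lambda$), the block unconditionally executes $z_i(t) \gets 1$, $M \gets M + M'$, and $y_i(t) \gets y'$ through Line~\ref{line:ml_d_less_than_1_end}. Thus both the primal variable $z_i(t)$ and the dual variable $y_i(t)$ are set for every packet from $L+1$ up to the packet at which $M$ first reaches the threshold, which is exactly the first assertion. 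Since the big/small increments are strictly positive for $\lambda \in (0,1]$ and $c > 1$, this prefix is well defined.

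For the second half ($M \ge 1$), I would follow the ``Ready to ACK'' branch. If $s(t) = 1$, the algorithm sets $d(t) \gets 1$ (Line~\ref{line:d_t_update}), resets $M \gets 0$, updates $L \gets t$, and breaks to slot $t+1$; hence no further primal or dual variable is touched at slot $t$. If instead $s(t) = 0$, only the zero-update branch runs, which sets $z_i(t) \gets 1$ for an unacked packet (the guard $z_i(t) \ne 1$ at Line~\ref{line:ml_OFF_primal}) and leaves $y_i(t)$ unchanged inside the loop. This yields the second assertion: past the threshold, only the primal variable of the unacked packets is updated, and only on OFF slots.

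The only point requiring care---and the one I would flag as the main (though modest) obstacle---is to fix the scope of the claim to the inner loop (Lines~\ref{line:ml_inner_interation_begins}--\ref{line:ml_inner_interation_end}) and to exclude the separate end-of-slot dual sweep (Lines~\ref{line:ml_final_update_begin}--\ref{line:ml_final_update_end}), which does modify some $y_i(t)$ on OFF slots. Because that sweep is governed by the channel pattern rather than by $M$, it is logically independent of the threshold behavior described here, exactly as in the corresponding statement for PDOA (Observation~\ref{obs:within_interval}). I would state this scoping explicitly so that the observation reads consistently with its later uses in the robustness analysis of LAPDOA.
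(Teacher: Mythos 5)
Your proposal is correct and takes essentially the same approach as the paper: the paper states this observation without any proof, treating it as an immediate consequence of inspecting the control flow of Algorithm~\ref{alg:ml_primal-dual}, which is precisely what you make explicit (the monotonicity of $M$ within a slot, the prefix/suffix split of the inner loop, and the case analysis on $s(t)$ once $M \ge 1$). Your scoping remark about the end-of-slot dual sweep (Lines~\ref{line:ml_final_update_begin}--\ref{line:ml_final_update_end}) is also consistent with how the paper actually uses the observation, since in the robustness analysis those sweep updates are accounted for separately (e.g., the term ${D^1}(k)$ in the proof of Lemma~\ref{lemma:ml_robustness}).
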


\begin{observation}
Once LAPDOA makes an ACK at some ON slot $t$, all the packets arriving no later than slot $t$ (packet $1$ to packet $t$) are acked forever after slot $t$, and their primal variables and dual variables will never be changed after slot $t$,  i.e.,  $z_i(\tau) = 0$ and $y_i(\tau) = 0$ for all $i \le t$ and all $\tau > t$.
\label{obs:ml_after_ACK}
\end{observation}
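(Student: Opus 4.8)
The plan is to read the claim directly off the control flow of Algorithm~\ref{alg:ml_primal-dual}, in exact parallel to Observation~\ref{obs:after_ACK} for PDOA. The statement has two parts: (i) every packet $i \le t$ stays acked at all slots $\tau \ge t$, and (ii) the associated primal and dual variables $z_i(\tau)$, $y_i(\tau)$ remain at their initial value $0$ for all $\tau > t$. I would prove (ii) by a loop-range argument centered on the auxiliary variable $L$, and then obtain (i) as an immediate consequence of the fact that the ACK was executed on an ON slot.

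First I would establish the key monotonicity fact about $L$. The variable $L$ records the most recent ACK time, and it is written only in the ON-channel ACK branch (right after $d(t)\gets 1$ at Line~\ref{line:d_t_update}, where $L \gets t$ is set to the current slot). Hence $L$ is non-decreasing over time, and immediately after the ACK at slot $t$ we have $L = t$; since $L$ can only increase thereafter, $L \ge t$ holds whenever any later slot $\tau > t$ is processed.

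Next I would enumerate every location where $z_i(\cdot)$ or $y_i(\cdot)$ is modified. All such writes occur either inside the main inner loop (Lines~\ref{line:ml_inner_interation_begins}--\ref{line:ml_inner_interation_end}), whose index runs $i = L+1, \dots, \tau$, or inside the terminal dual-update loop (Lines~\ref{line:ml_final_update_begin}--\ref{line:ml_final_update_end}), whose index runs downward from $\tau$ to $L+1$; in particular the OFF-channel zero-update at Line~\ref{line:ml_OFF_primal} lies inside the former. In both loops the smallest index touched is $L+1$. Combined with $L \ge t$ at every slot $\tau > t$, no write ever reaches a packet index $i \le t$. Since all variables are initialized to $0$ and are modified only inside these two loops, it follows that $z_i(\tau) = 0$ and $y_i(\tau) = 0$ for all $i \le t$ and all $\tau > t$, which is part (ii).

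Finally, for part (i): because the ACK at slot $t$ is executed on an ON slot, $s(t)\,d(t) = 1$, so for any packet $i \le t$ and any slot $\tau \ge t$ the running sum satisfies $\sum_{\tau'=i}^{\tau} s(\tau')\,d(\tau') \ge s(t)\,d(t) = 1$. Thus constraint~\eqref{eq:relax_primal-con1} is already met with $z_i(\tau) = 0$, i.e., packet $i$ is acked forever after slot $t$. The only point requiring genuine care — the nearest thing to an obstacle — is the exhaustive verification that there is no code path writing to $z_i$ or $y_i$ outside the two loops bounded below by $L+1$ (in particular that the OFF-channel zero-update branch and the terminal dual loop both honor this bound, and that the \emph{break}-to-next-slot transitions never reset the lower index below $L+1$); once that bookkeeping is confirmed, the result is immediate.
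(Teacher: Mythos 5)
Your proof is correct. The paper states Observation~\ref{obs:ml_after_ACK} without any explicit proof, treating it as immediate from the structure of Algorithm~\ref{alg:ml_primal-dual}; your argument---monotonicity of $L$, the fact that every write to $z_i(\cdot)$ or $y_i(\cdot)$ performed at a slot $\tau > t$ has packet index at least $L+1 \ge t+1$, and the ON-slot ACK making $\sum_{\tau'=i}^{\tau} s(\tau')\,d(\tau') \ge 1$ so that constraint~\eqref{eq:relax_primal-con1} holds with $z_i(\tau)=0$---is precisely the formalization of the control-flow reasoning the paper leaves implicit.
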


Then, we show that LAPDOA gives an almost feasible solution in Lemma~\ref{lemma:ml_almost_feasible_solution}.

\begin{lemma}    LAPDOA produces a feasible solution to primal Problem \eqref{prob:integerl-program_relax}. In addition, let $\mathbf{y} \triangleq \{ \{ {y_i}(t)\} _{i = 1}^t\} _{t = 1}^T$ be the solution produced by LAPDOA to  dual Problem \eqref{problem:dual-program_relax}, then $(c/(c + 1))\mathbf{y}$ is a feasible solution to dual Problem \eqref{problem:dual-program_relax}.
\label{lemma:ml_almost_feasible_solution}
\end{lemma}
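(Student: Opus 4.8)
The plan is to follow the same two-part template as the PDOA feasibility argument (Lemma~\ref{lemma:online_feasible_sol}), adapting it to the big/small/zero updates of LAPDOA, and to dispatch the easy pieces first. Primal feasibility is immediate: constraint~\eqref{eq:relax_primal-con2} holds because every variable LAPDOA writes is nonnegative, and constraint~\eqref{eq:relax_primal-con1} holds because whenever packet $i$ is still unacked at slot $t$ the algorithm sets $z_i(t)\gets 1$ (this occurs in every branch of the inner loop: big, small, and zero updates), while if an ACK occurred in $[i,t]$ then $\sum_{\tau=i}^t s(\tau)d(\tau)\ge 1$ already. For the dual box constraint~\eqref{eq:relax_dual-con2}, every $y_i(t)$ equals $0$, $\lambda$, or $1$, hence lies in $[0,1]$ since $\lambda\in(0,1]$; multiplying by $c/(c+1)\in(0,1)$ keeps each scaled variable in $[0,1]$.

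The substance is the linear dual constraint~\eqref{eq:relax_dual-con1}. Because of the factor $s(t)$ it is automatic at OFF slots, so I only treat an ON slot $t$. Exactly as in Eq.~\eqref{eq:dual_var_reduction}, I would invoke Observations~\ref{obs:ml_within_interval} and \ref{obs:ml_after_ACK} to collapse the double sum: if $t$ lies in the $k$-th ACK interval $[t_k+1,t_{k+1}]$, then packets $1,\dots,t_k$ are acked at $t_k$ and frozen, and packets $t_k+1,\dots,t$ are frozen beyond $t_{k+1}$, leaving $\sum_{i=1}^t\sum_{\tau=t}^T y_i(\tau)=\sum_{i=t_k+1}^t\sum_{\tau=t}^{t_{k+1}} y_i(\tau)=:S(t)$ (the last ACK interval $[t_K+1,T]$ is handled analogously). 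It then suffices to prove the unscaled bound $S(t)\le c+1$, since scaling by $c/(c+1)$ immediately yields $S(t)\cdot c/(c+1)\le c$, which is constraint~\eqref{eq:relax_dual-con1} for the scaled solution.

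To establish $S(t)\le c+1$ I would track the marker $M$ over the interval using the key per-update relation: each inner-loop dual update pairs an increment $y'$ to the dual with an increment $M'$ to $M$ satisfying $y'=\lambda c\,M'$ for a big update and $y'=c\,M'$ for a small update, so since $\lambda\le 1$ every update obeys $y'\le c\,M'$. I would first argue that the final OFF-channel updates of Lines~\ref{line:ml_final_update_begin}--\ref{line:ml_final_update_end} contribute nothing to $S(t)$ at an ON slot: those updates set $y_i(\tau)=1$ only along a maximal block of consecutive OFF slots ending at $\tau$, and since $t$ is ON with $i\le t\le\tau$, that block cannot reach any index $i\le t$. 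Hence $S(t)$ consists solely of inner-loop updates. Splitting these into the updates made while $M<1$ and the single update that first drives $M$ to at least $1$, the former have total restricted marker increment strictly below the pre-crossing value of $M$, namely $<1$, and hence total dual value below $c\cdot 1=c$ via $y'\le c\,M'$, while the crossing update contributes at most $y'\le 1$; therefore $S(t)\le c+1$.

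I expect the main obstacle to be the bookkeeping that isolates this single ``overshoot'' unit. Unlike PDOA, whose capped update $\min\{1,\,c-c\cdot M\}$ forces the per-interval dual to land exactly at $c$, LAPDOA performs uncapped big and small updates, so the marker can cross $1$ by a full update's worth; the argument must show that, even with an arbitrary interleaving of big and small updates and the triangular accumulation of the marker across packets and slots, the excess beyond $c$ never exceeds one unit of dual value. This single-unit slack is precisely what degrades feasibility by the factor $(c+1)/c$ and is repaired by scaling $\mathbf{y}$ down by $c/(c+1)$, which finally yields constraint~\eqref{eq:relax_dual-con1} and completes the proof of Lemma~\ref{lemma:ml_almost_feasible_solution}.
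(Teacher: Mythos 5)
Your proposal is correct and follows essentially the same route as the paper's proof: reduce the dual constraint at an ON slot $t$ to the current ACK interval via Observations~\ref{obs:ml_within_interval} and \ref{obs:ml_after_ACK}, bound the interval's dual mass by $c+1$ using the fact that each dual increment is at most $c$ times the corresponding marker increment (your per-update inequality $y' \le c\,M'$ is just the disaggregated form of the paper's claim that $m+\lambda n \ge c$ forces $M(m,n)\ge 1$), with a single overshoot update worth at most $1$, and then scale by $c/(c+1)$. Your explicit verification that the end-of-slot OFF-channel updates (Lines~\ref{line:ml_final_update_begin}--\ref{line:ml_final_update_end}) cannot touch any packet $i \le t$ when $t$ is ON is a point the paper passes over silently, and is a welcome addition rather than a deviation.
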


\begin{proof}
We omit the proof for primal constraints~\eqref{eq:relax_primal-con1}-\eqref{eq:relax_primal-con2} and dual constraint~\eqref{eq:relax_dual-con2} because they are similar to the proof in Lemma~\ref{lemma:online_feasible_sol}  and provide the proof for dual constraint~\eqref{eq:relax_dual-con1}. 
Consider an ON slot $t$ and its dual constraint~\eqref{eq:relax_dual-con1}
    $\sum_{i=1}^{t} \sum_{\tau=t}^{T} y_{i}(\tau) \leq c$.
Recall that this dual constraint requires that for all the packets arriving no later than slot $t$,  the sum of their dual variables beyond slot $t$ should not exceed $c$. 
We assume that this ON slot $t$ falls into the $k$-th ACK interval $[t_k + 1, t_{k+1}]$, i.e., ${t_k} + 1 \le t \le {t_{k + 1}}$, 
where LAPDOA makes two ACKs at the ON slots $t_k$ and $t_{k+1}$ (in a special case that the ON slot $t$ falls into the last interval $[t_K + 1, T]$, i.e., ${t_K} + 1 \le t \le T$, where LAPDOA makes the last ACK at the ON slot $t_K$, our following analysis can be extended to this case). 
According to Observations~\ref{obs:ml_within_interval}
and \ref{obs:ml_after_ACK}, packet $1$ to packet $t_k$ are not updated after slot $t_k$, and packet $(t_k+1)$ to packet $t$ are not updated after slot $t_{k+1}$, similar to Eq.~\eqref{eq:dual_var_reduction}, we have 
$\sum_{i={1}}^{t} \sum_{\tau=t}^{T} y_{i}(\tau) = \sum_{i={t_k} + 1}^{t} \sum_{\tau=t}^{{{t_{k + 1}}}} y_{i}(\tau)$.
Furthermore, we assume that during the interval $[t,{t_{k + 1}}]$, all packets arriving between $[t_k+1,t]$ (packet $t_k+1$ to packet $t$) make $m$ big updates and $n$ small updates (some zero updates can also be made but we ignore them since they cannot increase the dual variables). In other words, we have $\sum_{i={t_k} + 1}^{t} \sum_{\tau=t}^{{{t_{k + 1}}}} y_{i}(\tau) = 1 \cdot m + \lambda \cdot n  = m + \lambda n$.
We claim that if $m + \lambda n \ge c$,  then the increment of ACK marker $M$  due to those $m$ big and $n$ small updates (denoted by $M(m,n)$) will be larger than or equal to $1$.
This is true since we have $M(m,n) = m \cdot (1/\lambda c) + n \cdot (\lambda /c) = m/\lambda c + \lambda n/c \ge m/c + \lambda n/c = (m + \lambda n)/c \ge c/c = 1$. 
This claim, in turn, implies that $\sum_{i={t_k} + 1}^{t} \sum_{\tau=t}^{{{t_{k + 1}}}} y_{i}(\tau) = m + \lambda n < c+1$. 
To see this, consider the edge case where there are $m'$ big updates and $n'$ small updates made by all packets arriving between $[t_k+1,t]$ since slot $t$, and they satisfy: (i) their sum of dual variables is smaller than $c$, i.e.,  $m' + \lambda n' < c$; (ii) with one more update (either big or small), the sum of their dual variables is no less than $c$, i.e., either $(m' + 1) + \lambda n' \ge c$ or $m' +  \lambda (n' + 1) \ge c$ holds. From condition (ii) and our claim we know that with the arrival of one more update, the ACK marker $M$ will be larger than or equal to be $1$ (because we have $M \ge M(m,n) \ge 1$) at some slot $t'$ ($t' \le t_{k+1}$). When this happens, the sum of dual variables is at most $\max \{ (m' + 1) + \lambda n',m' + \lambda (n' + 1)\}  = (m' + 1) + \lambda n' = (m' + \lambda n') + 1 < c + 1$, and those dual variables will never be updated after slot $t'$ according to Observation~\ref{obs:ml_within_interval}.  
Therefore, we have $\sum_{i={1}}^{t} \sum_{\tau=t}^{T} y_{i}(\tau) = \sum_{i={t_k} + 1}^{t} \sum_{\tau=t}^{{{t_{k + 1}}}} y_{i}(\tau) < c+ 1$.
Now scaling down the dual solution $\bf{y}$ by a factor of $c/(c + 1)$, we obtain a feasible dual solution $(c/(c + 1))\mathbf{y}$.
\end{proof}

Now with Lemma~\ref{lemma:ml_almost_feasible_solution}, we are ready to show our main result that LAPDOA is
$(((c + 1)/c) \cdot 3/\lambda)$-competitive.

\begin{proof}[Proof of Lemma~\ref{lemma:ml_robustness}]
In the following,  we assume that $d(t)$ is updated to the ACK marker $M$ $(M \ge 1)$ rather than $1$ in Line~\ref{line:d_t_update}, which possibly makes LAPDOA perform worse (i.e., has a larger total cost since the one-time ACK cost now is $c \cdot M$, which is larger than or equal to $c \cdot 1$). 
We show that our CR analysis holds for this worse setting (i.e., $d(t) = M$), and thus our CR analysis also holds for LAPDOA. The benefit of considering this worse setting is that this allows us to allocate the ACK costs to large and small updates.
Specifically, under the worse setting, suppose that LAPDOA
makes an ACK at slot $t_k$, and after $m$ ($m \ge 0$) big updates and $n$ ($n \ge 0$) small updates, LAPDOA is ready to make another ACK at some slot $t_{k+1}$. At this point, the ACK marker is $M = m / (\lambda c) + n  \lambda / c$, and the ACK cost is $c \cdot M = m / \lambda + n  \lambda$. However, instead of calculating the ACK cost at slot $t_{k+1}$, we can distribute the ACK cost to the updates in $[t_k + 1,t_{k+1}]$, that is, each big update gets an ACK cost of $1 /\lambda$ and each small update gets an ACK cost of $\lambda$. The total ACK cost of those $m$ big updates and $n$ small updates is still $m / \lambda + n  \lambda$. Doing this does not change the ACK cost, but now every big update or small update has a contribution to the ACK cost, which helps our analysis in the following when we compute the primal increment (i.e., the sum of ACK cost and holding cost) of each update.

We assume that LAPDOA makes a sequence of ACKs $\pi = \{t_1, t_2, \dots, t_K\}$, where LAPDOA makes the $i$-th ACK at the ON slot $t_i$. 
Our goal is to show that for any $k$-th  ($k \in [0,K]$) ACK interval $[{t_k} + 1,{t_{k+1}}]$ (where the first ACK interval is $[1,{t_{1}}]$ when $k = 0$ and the last ACK interval is $[{t_K} + 1,T]$ when $k= K$), the ratio between the primal objective value and the dual objective value in this $k$-th ACK interval is at most $3/\lambda$, i.e., $P(k)/D(k) \le 3/\lambda$. According to Observation~\ref{obs:ml_after_ACK}, when this ACK interval ends at slot $t_{k+1}$, $P(k)$ and $D(k)$ are never changed.  
This implies that LAPDOA also achieves $ P/ {D} \le 3 / \lambda$ on the entire instance $\mathbf{s}$. 
This, along with Lemma~\ref{lemma:ml_almost_feasible_solution}, concludes that PDOA is $(((c + 1)/c) \cdot 3/\lambda)$-competitive based on the weak duality \cite{buchbinder2009design}.

We first discuss the relation between $P(k)$ and $D(k)$ in the first $K$ ACK interval $[{t_k} + 1,{t_{k+1}}]$ (i.e., $k \in [0,K - 1]$, where there is always an ACK made at slot $t_{k+1}$), and then discuss the relation between $P(K)$ and $D(K)$ in the last ACK interval $[{t_K} + 1,T]$ (i.e., $k = K$, where it is possible that no ACK is made at slot $T$) in the end.

Consider the $k$-th ($k \in [0,K-1]$) ACK interval $[{t_k} + 1,{t_{k+1}}]$ (denoted by $I_k$), where LAPDOA makes two ACKs at the ON slots $t_k$ and $t_{k+1}$ (the first ACK interval is the $0$-th ACK interval $[{t_0} + 1,{t_{1}}]$, where $t_0 = 0$ and LAPDOA only makes one ACK at slot $t_1$), respectively. There are two cases when we make an ACK at slot $t_{k+1}$: $1)$ the ACK marker $M$ is equal to or larger than $1$ at $t_{k+1}$; $2)$ the ACK marker $M$ equals or is larger than $1$ 
at some OFF slot $t'$ $(t'< t_{k+1})$ and $t_{k+1}$ is the very first ON slot after slot $t'$ (in this case, the channels are OFF during $[t', t_{k+1} - 1]$). We analyze the performance of LAPDOA  in these two cases of $t_{k+1}$.

Case $1)$: The ACK marker $M$ is equal to or larger than $1$ at $t_{k+1}$. In this case, we do not have zero updates in $I_k$.  Let $\Delta P$ and $\Delta D$ denote the increment of the primal objective value and the increment of the dual objective value when we make an update, respectively. In the case of a small update, we have $\Delta P = \lambda  + 1$ and $\Delta D = \lambda $, that is, $\Delta P/\Delta D = 1 + 1/\lambda $. In the case of a big update, $\Delta P =  1/\lambda  + 1$ and $\Delta D = 1 $, and we still have $\Delta P/\Delta D = 1 + 1/\lambda $. Obviously, for this $k$-th ACK interval, we have $P(k)/D(k) = 1 + 1/\lambda $.

Case $2)$:  The ACK marker $M$ equals or is larger than $1$ at some OFF slot $t'$ $(t'< t_{k+1})$ and $t_{k+1}$ is the very first ON slot after slot $t'$. 
An illustration is shown in Fig.~\ref{fig:ml_primal_dual_updates}.
We use $C_A(k)$ and $C_H(k)$ to denote the ACK costs and holding costs in $I_k$, respectively. Here $P(k) = C_A(k) + C_H(k)$.
In addition, we assume that there are $m$ ($m \ge 0$) big updates and $n$ ($n \ge 0$) small updates in $I_k$ (some zero updates can also be made but we ignore them since they cannot increase the ACK cost and the dual variable). Given that each big update has an ACK cost of $1/\lambda $ and increases the dual variable by $1$, and each small update has an ACK cost of $\lambda$ and increases the dual variable by $\lambda $, we have ${C_A}(k) = m/\lambda  + \lambda n$ and $D(k) \ge m + \lambda n$ (i.e., $D(k)$ can be larger than $m + \lambda n$ due to the additional updates of dual variables in Lines~\ref{line:ml_final_update_begin}-\ref{line:ml_final_update_end}).
\begin{figure}[!t]
	\centering
	\subfigure[Primal variables $z_i(t)$ updates.]{
         \includegraphics[scale = 0.45]{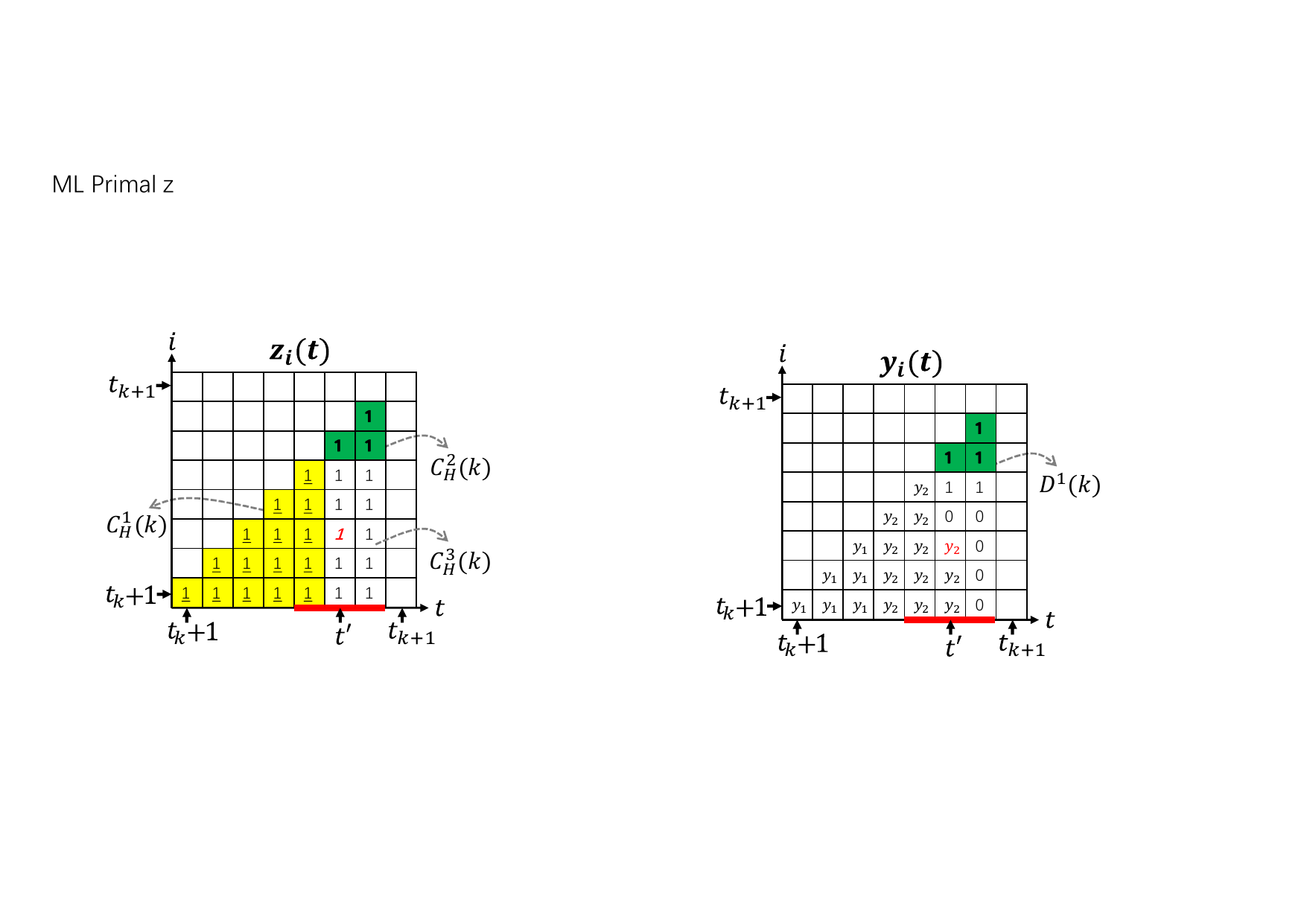}
         \label{fig:online_proof_ml_primal_1}}
         \quad
         \subfigure[Dual variables $y_i(t)$ updates. ]{\includegraphics[scale = 0.45]{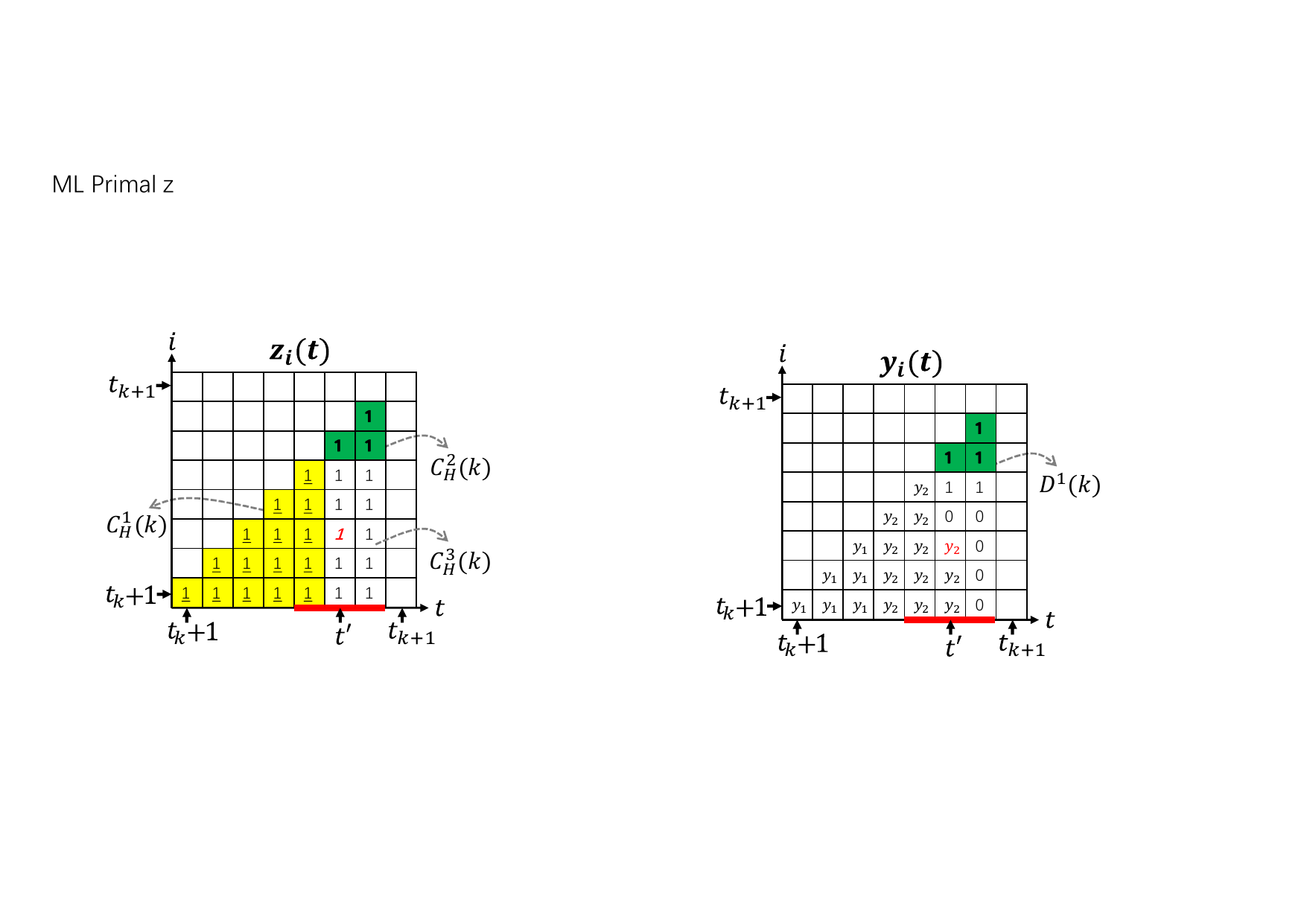}
         \label{fig:online_proof_ml_dual_1}}
         \caption{An illustration of $C_H^1(k), C_H^2(k), C_H^3(k)$ and ${D^1}(k)$ when $j \ne t'$. $C_H^1(k)$ is an equilateral triangle made of 1 (the underlined $1$'s with yellow background in Fig.~\ref{fig:online_proof_ml_primal_1}), $C_H^2(k)$ is an equilateral triangle made of $1$ (the bold $1$'s with green background in Fig.~\ref{fig:online_proof_ml_primal_1}), 
         $C_H^3(k)$ is a rectangle made of $1$ (the regular $1$'s without background in Fig.~\ref{fig:online_proof_ml_primal_1}),
         and 
         ${D^1}(k)$ is an equilateral triangle made of 1 (the bold $1$'s with green background in Fig.~\ref{fig:online_proof_ml_dual_1}).}
\label{fig:online_proof_ml_primal_dual_updates_1}
         \vspace{-10pt}
\end{figure}
Now we can compute         
\begin{equation}
    \begin{aligned}
            & P(k)/D(k)   \\
            & =({C_A(k) + C_H(k)})/{D(k)} \\
            &  =  (m/\lambda  + \lambda n + C_H(k))/ {D(k)} \\
            & \le (m/\lambda  + \lambda n)/({m + \lambda n}) + C_H(k)/{D(k)} \\ 
            & \le 1/{\lambda} + C_H(k)/D(k) \\
            & \buildrel (a) \over \le 1 / \lambda + 2 / \lambda \\
            & =3 / \lambda,
        \end{aligned}
        \label{eq:ml_CR_3_lambda}
\end{equation}
where $(a)$ is proven in the following. 
Similar to the analysis of Case-$(2)$ in the proof of Theorem~\ref{them:online_3}, we can first compute the total holding cost in $I_k$ as ${C_H}(k) = \sum\nolimits_{\tau  = {t_k} + 1}^{{t_{k + 1}} - 1} {\sum\nolimits_{i = {t_k} + 1}^\tau  {{z_i}(\tau )} }  = \sum\nolimits_{\tau  = {t_k} + 1}^{{t_{k + 1}} - 1} {\sum\nolimits_{i = {t_k} + 1}^\tau  1 }  = ({t_{k + 1}} - {t_k} - 1)({t_{k + 1}} - {t_k})/2$, where  ${{z_i}(\tau )} = 1$ for any $\tau  \in [{t_k} + 1,{t_{k + 1}} - 1]$ and $i \in [{t_k} + 1,\tau ]$ is because the packets in $I_k$ are not acked until slot $t_{k+1}$, and they need to pay a holding cost, i.e., ${{z_i}(\tau )} = 1$.  
Next, we split ${C_H}(k)$ into three parts under two different cases. 
Assuming that the ACK marker $M$ is equal to or larger than $1$ after the updating of $j$-th packet at slot $t'$. There are two cases for packet $j$: $1)$ packet $j$ is not packet $t'$ ($j \ne t'$), and $2)$ packet $j$ is packet $t'$ ($j = t'$).
In the first case ($j \ne t'$), we can split ${C_H}(k)$ into 
$C_H^1(k) = \sum\nolimits_{\tau  = {t_k} + 1}^{t' - 1} {\sum\nolimits_{i = {t_k} + 1}^\tau  {{z_i}(\tau )} } $, $C_H^2(k) = \sum\nolimits_{\tau  = t'}^{{t_{k + 1}} - 1} {\sum\nolimits_{i = t'}^\tau  {{z_i}(\tau )} } $, and $C_H^3(k) = \sum\nolimits_{\tau  = t'}^{{t_{k + 1}} - 1} {\sum\nolimits_{i = {t_k} + 1}^{t' - 1} {{z_i}(\tau )} } $ (see an illustration in Fig.~\ref{fig:online_proof_ml_primal_dual_updates_1}). In addition, when $j \ne t'$, we know that the total number of big updates and small updates satisfies $m + n = \sum\nolimits_{\tau  = {t_k} + 1}^{t' - 1} {\sum\nolimits_{i = {t_k} + 1}^\tau  {{z_i}(\tau )}  + } \sum\nolimits_{i = {t_k} + 1}^{j} {{z_i}(t')}  \ge \sum\nolimits_{\tau  = {t_k} + 1}^{t' - 1} {\sum\nolimits_{i = {t_k} + 1}^\tau  {{z_i}(\tau )}  = } C_H^1(k)$. 
\begin{figure}[!t]
	\centering
	\subfigure[Primal variables $z_i(t)$ updates.]{
         \includegraphics[scale = 0.45]{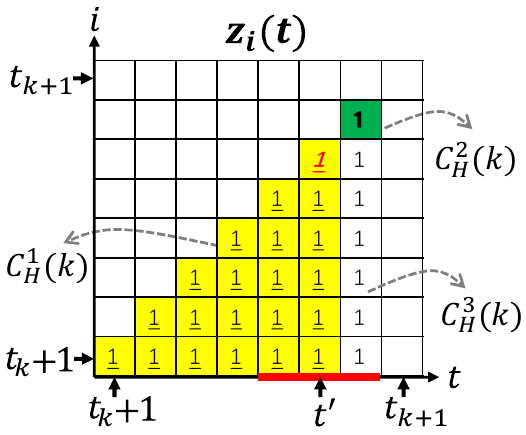}
         \label{fig:online_proof_ml_primal_2}}
         \quad
         \subfigure[Dual variables $y_i(t)$ updates. ]{\includegraphics[scale = 0.45]{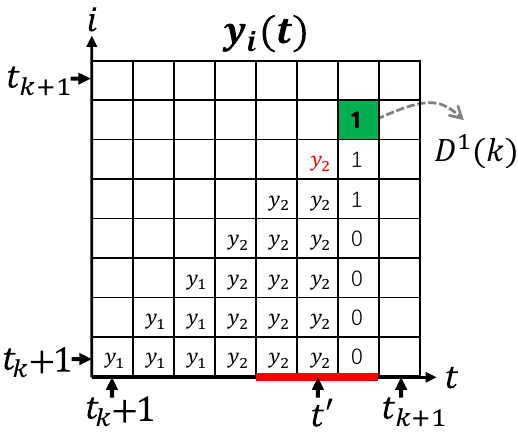}
         \label{fig:online_proof_ml_dual_2}}
         \caption{An illustration of $C_H^1(k), C_H^2(k), C_H^3(k)$ and ${D^1}(k)$ when $j = t'$. $C_H^1(k)$ is an equilateral triangle made of 1 (the underlined $1$'s with yellow background in Fig.~\ref{fig:online_proof_ml_primal_2}), $C_H^2(k)$ is an equilateral triangle made of $1$ (the bold $1$'s with green background in Fig.~\ref{fig:online_proof_ml_primal_2}), 
         $C_H^3(k)$ is a rectangle made of $1$ (the regular $1$'s without background in Fig.~\ref{fig:online_proof_ml_primal_2}),
         and 
         ${D^1}(k)$ is an equilateral triangle made of 1 (the bold $1$'s with green background in Fig.~\ref{fig:online_proof_ml_dual_2}).}
\label{fig:online_proof_ml_primal_dual_updates_2}
\end{figure}
In the second case ($j = t'$), we can split ${C_H}(k)$ into $C_H^1(k) = \sum\nolimits_{\tau  = {t_k} + 1}^{t'} {\sum\nolimits_{i = {t_k} + 1}^\tau  {{z_i}(\tau )} } $, $C_H^2(k) = \sum\nolimits_{\tau  = t' + 1}^{{t_{k + 1}} - 1} {\sum\nolimits_{i = t' + 1}^\tau  {{z_i}(\tau )} } $, and $C_H^3(k) = \sum\nolimits_{\tau  = t' + 1}^{{t_{k + 1}} - 1} {\sum\nolimits_{i = {t_k} + 1}^{t'} {{z_i}(\tau )} } $ (see an illustration in Fig.~\ref{fig:online_proof_ml_primal_dual_updates_2}). Furthermore, when $j = t'$, we know that the total number of big updates and small updates satisfies $m + n = \sum\nolimits_{\tau  = {t_k} + 1}^{t'} {\sum\nolimits_{i = {t_k} + 1}^\tau  {{z_i}(\tau )} }  = C_H^1(k)$.
In the following, we only focus on the analysis of $D(k)$ in the first case since the analysis in the second case is very similar. 
According to Lines~\ref{line:ml_final_update_begin}-\ref{line:ml_final_update_end}, for the dual variables, we have an equilateral triangle made of $1$, which has the same shape as $C_H^2(k)$, we denote it by ${D^1}(k)$ (see an illustration in Fig.~\ref{fig:online_proof_ml_primal_dual_updates_1}).
We can calculate that ${D^1}(k) = \sum\nolimits_{\tau  = t'}^{{t_{k + 1}} - 1} {\sum\nolimits_{i = t'}^\tau  {{y_i}(\tau )} }  = \sum\nolimits_{\tau  = t'}^{{t_{k + 1}} - 1} {\sum\nolimits_{i = t'}^\tau  1 }  = C_H^2(k)$.
Now we can compute 
\begin{equation}
    \begin{aligned}
            C_H(k) &= C_H^1(k) + C_H^2(k) + C_H^3(k) \\
            & \buildrel (a) \over \le 2(C_H^1(k)+ C_H^2(k)) \\
            & \buildrel (b) \over \le 2((m + n) + C_H^2(k)) \\
            & \buildrel \over= 2((m + n) + {D^1}(k)) \\
            &  \le 2(({m + \lambda n})/{\lambda } + ({{D^1}(k)})/{\lambda }) \\
            &  = 2/\lambda  \cdot (m + \lambda n + {D^1}(k)) \\
            & \buildrel (c) \over \le 2/\lambda  \cdot D(k),
        \end{aligned}
        \label{eq:robustness_proof_holding_dual_ratio}
\end{equation}
where $(a)$ can be proven using the same techniques in Eq.~\eqref{eq:triangles_larger_than_rectangle}, $(b)$ is because $C_H^1(k)$ is at least $m+n$ as we analyzed above, and  $(c)$ is because $D(k)$ is least $m + \lambda n + {D^1}(k)$ (i.e., $D(k)$ can be larger than $m + \lambda n + {D^1}(k)$ due to Lines~\ref{line:ml_final_update_begin}-\ref{line:ml_final_update_end}).
This completes (a) in Eq.~\eqref{eq:ml_CR_3_lambda}.

In the end, we consider the last time interval $[{t_K} + 1,T]$ (denoted by $I_K$).
If the last slot is an ON slot and LAPDOA makes the last ACK exactly at the last slot, i.e., $t_K= T$, then our previous analysis in Cases $1$ and $2$ still holds. Next, we consider the scenario where $t_K < T$. There are two cases at slot $T$: $1)$ $T$ is the slot before the ACK marker $M$ equals or is larger than $1$; $2)$ $T$ is the slot when or after the ACK marker $M$ equals or is larger than $1$. In both cases, there is no ACK made during $I_K$.
Let $P(K)$ and $D(K)$ be the primal and dual objective value in $I_{K}$, respectively.

Case $1)$: $T$ is the slot before the ACK marker $M$ equals or is larger than $1$. In this case, we do not have zero updates in $I_K$.  Let $\Delta P$ and $\Delta D$ denote the increment of the primal objective value and the increment of the dual objective value when we make an update, respectively. In the case of a small update, we have $\Delta P = c \cdot 0 + 1 =  1$ and $\Delta D = \lambda $, that is, $\Delta P/\Delta D = 1/\lambda $. In the case of a big update, $\Delta P = c \cdot 0 + 1 = 1$ and $\Delta D = 1 $, so we have $\Delta P/\Delta D = 1$. Obviously, for this $K$-th ACK interval, we have $P(K)/D(K) \le 1/\lambda $. 

Case $2)$: $T$ is the slot when or after the ACK marker $M$ equals or is larger than $1$. We assume that the ACK marker $M$ equals or is larger than $1$ at slot $t'$ $(t'> t_K)$. According to the definition of Case $2$, we have $T \ge t'$.
We claim that the channels are OFF during the interval $[t', T]$. Otherwise, an ACK will be made during $[t', T]$, which contradicts the fact that there is no ACK during $I_K$.
We use $C_A(K)$ to denote the total ACK cost and use $C_H(K)$ to denote the total holding cost in $I_K$.  Here $P(K) = C_A(K) + C_H(K) = 0 + C_H(K)$, where $C_A(K) = 0$ because there is no ACK made during $I_K$.
We have
  \begin{equation}
        \begin{aligned}
            {P(K)}/{D(K)} &  = ({{C_A(K) + C_H(K)}})/{D(K)} \\
            & = 0 + {C_H(K)}/{D(K)}\\ 
            & \buildrel (a) \over \le 2/\lambda,
        \end{aligned}
    \end{equation}
where the analysis in $(a)$ is the same as the (a) in Eq.~\eqref{eq:ml_CR_3_lambda}.

In summary, LAPDOA  achieves $P(k)/{D}(k) \le 3/\lambda $ for any ACK interval, and thus LAPDOA also achieves $ P/ {D} \le 3/\lambda $ on the entire instance. 
\end{proof}

\section{Proof of Lemma~\ref{lemma:ml_consistency}}
\label{appendix:ml_consistency_proof}
\begin{proof}
In this proof, similar to the proof of Lemma~\ref{lemma:ml_robustness}, we still assume that $d(t)$ is updated to the ACK marker $M$ $(M \ge 1)$ rather than $1$ in Line~\ref{line:d_t_update} (except the analysis of big updates in the case of $\lambda \in (0, 1/c]$, where we still update $d(t)$ to be $1$). Therefore, for the big updates and small updates in any ACK interval, each big update can be charged with an ACK cost of $1/ \lambda$, and each small update can be charged with an ACK cost of $\lambda$. 
In particular, for the big updates and small updates in the last time interval $[t_K, T]$ (we assume that LAPDOA makes the last ACK at slot $t_K$), though there is no ACK made during $[t_K, T]$, we still charge each big update and each small update with an ACK cost of $1/ \lambda$ and $\lambda$, respectively. 
Doing this can possibly increase the total cost of LAPDOA, but we can show that the upper bound we derived still holds in this case.

We first consider $\lambda \in (0, 1/c]$. 
In this case, 
LAPDOA has three types of updates: big updates, small updates, and zero updates. 
Consider the total cost of big updates first. 
Once the prediction $\mathcal{P}$ makes an ACK at some ON slot $t$, LAPDOA will make a big update immediately, the ACK marker becomes  $M=1/\lambda c = 1/c \cdot 1/\lambda  \ge 1/c \cdot c = 1$, and thus LAPDOA will also make an ACK at the beginning of slot $t$. 
Since the prediction $\mathcal{P}$ has a total number of $C_A(\mathbf{s},\mathcal{P})/c$ ACKs, then LAPDOA has also $C_A(\mathbf{s},\mathcal{P})/c$ big updates. 
Each big update leads to an ACK, which results in an ACK cost of $c$. Therefore, the total cost of the big updates in LAPDOA is $C_A(\mathbf{s},\mathcal{P})/c \cdot c = C_A(\mathbf{s},\mathcal{P})$.
By the definition of small update and zero update, each small update or zero update in LAPDOA corresponds to one packet in the prediction $\mathcal{P}$ that has not been acked yet, which requires the prediction $\mathcal{P}$ to pay a holding cost of $1$, so the total number of small updates and zero udpates is at most $C_H(\mathbf{s},\mathcal{P}) / 1 = C_H(\mathbf{s},\mathcal{P})$.
For each of the small updates, the increase in the primal is  $\Delta P =  \lambda  + 1$, and for each of the zero updates, the increase in the primal is $\Delta P = 0 + 1 = 1$, so the total cost of small updates and zero updates is at most $(1+\lambda)C_H(\mathbf{s},\mathcal{P})$.
In summary, the total cost of big updates, small updates, and zero updates is $C_A(\mathbf{s},\mathcal{P}) + (1+\lambda)C_H(\mathbf{s},\mathcal{P})$. 
This concludes Eq.~\eqref{eq:consistency_1}.

Next, we analyze $\lambda \in (1/c,1]$. We consider two cases: $1)$ the channels are ON all the time, and $2)$ there are some OFF channels. We show that Eq.~\eqref{eq:consistency_2} holds for both cases.
    
Case $1)$: The channels are ON all the time. In this case, LAPDOA will generate only two types of updates: small updates and big updates. 
By the definition of small updates, for each of them, there is one corresponding packet in the prediction $\mathcal{P}$ that has not been acked yet, which requires the prediction $\mathcal{P}$ to pay a holding cost of $1$ for this packet, so the total number of the small updates is at most $C_H(\mathbf{s},\mathcal{P})/1=C_H(\mathbf{s},\mathcal{P})$. 
Each small update contributes $(\lambda + 1)$ to the primal objective value, thus the total cost of small updates is at most $(1+\lambda)C_H(\mathbf{s},\mathcal{P})$.
Next, we analyze the total cost of big updates. 
We claim that for any ACK made by the prediction $\mathcal{P}$, LAPDOA makes at most $\left\lceil {\lambda c} \right\rceil $ big updates for this ACK. 
To see this, assuming that prediction $\mathcal{P}$ makes an ACK at slot $t$, and consider all the big updates due to this ACK (i.e., those big updates produced by the packets in LAPDOA that have not been acked yet and arrives before or at slot $t$). 
After at most 
$\left\lceil {\lambda c} \right\rceil $ such big updates, the ACK marker will become $M = \left\lceil {\lambda c} \right\rceil  \cdot 1/ \lambda c \ge 1$ at some slot $t' \ge t$. 
Once the ACK marker $M$ equals or is larger than $1$, 
no more big updates will be made for the ACK made by prediction $\mathcal{P}$ at slot $t$.
Given that prediction $\mathcal{P}$ makes $C_A(\mathbf{s},\mathcal{P})/c$ ACKs, then LAPDOA makes at most $\left\lceil {\lambda c} \right\rceil \cdot C_A(\mathbf{s},\mathcal{P})/c$ big updates. For each big update, the increment in the primal objective value is $\Delta P = 1/ \lambda  + 1$. Therefore, the total cost of big updates is at most $(1/ \lambda  + 1) \cdot \left\lceil {\lambda c} \right\rceil \cdot C_A(\mathbf{s},\mathcal{P})/c$.
In summary, when the channels are always ON, the total cost of LAPDOA is upper bounded by $C(\mathbf{s},\mathcal{P}, \lambda)   \le(\lambda  + 1)C_H(\mathbf{s},\mathcal{P}) + (1 / \lambda  + 1)  \left\lceil {\lambda c} \right\rceil  {C_A(\mathbf{s},\mathcal{P})}/{c},$  
which is smaller than the bound in Eq.~\eqref{eq:consistency_2}.

Case $2)$: There are some OFF channels. In this case, LAPDOA will generate three types of updates: small updates, big updates, and zero updates. Note that these zero updates only increase the holding costs. 
We use ${C_{A,s}}$ and ${C_{H,s}}$ to denote the ACK cost and the holding cost of all the small updates, respectively; use ${C_{A,b}}$ and ${C_{H,b}}$ to denote the ACK cost and the holding cost of all the big updates, respectively; and ${C_{H,z}}$ to denote the holding cost of all the zero updates. 
Obviously, we have $C({\bf{s}},{\cal P},\lambda ) = {C_{A,b}} + {C_{H,b}} + {C_{A,s}} + {C_{H,s}} + {C_{H,z}}$. 
Furthermore, similar to the analysis in Case $1$, for the big updates, we can obtain ${C_{A,b}} + {C_{H,b}} \le (1/\lambda  + 1) \cdot \left\lceil {\lambda c} \right\rceil  \cdot {C_A}({\bf{s}},{\cal P})/c$; and for the small updates, we have ${C_{A,s}} + {C_{H,s}} \le (\lambda  + 1){C_H}({\bf{s}},{\cal P})$. 
To analyze the holding costs of zero updates ${C_{H,z}}$, 
our idea is to bound it by the holding cost of small updates and big updates, which can be further bounded by the total cost of prediction.  
To this end, 
we assume that LAPDOA makes a sequence of ACKs $\pi = \{t_1, t_2, \dots, t_K\}$, where LAPDOA makes the $i$-th ACK at the ON slot $t_i$
(i.e., $s(t_i)d(t_i) = 1$). 
Our goal is to show that in any $k$-th  ($k \in [0,K]$) ACK interval $[{t_k} + 1,{t_{k+1}}]$ (where the first ACK interval is $[1,{t_{1}}]$ when $k = 0$ and the last ACK interval is $[{t_K} + 1,T]$ when $k= K$), the holding costs of zero updates can be bounded by the holding cost of small updates and big updates.

\begin{figure}[!t]
	\centering
	\subfigure[when $j \ne t'$]{
         \includegraphics[scale = 0.45]{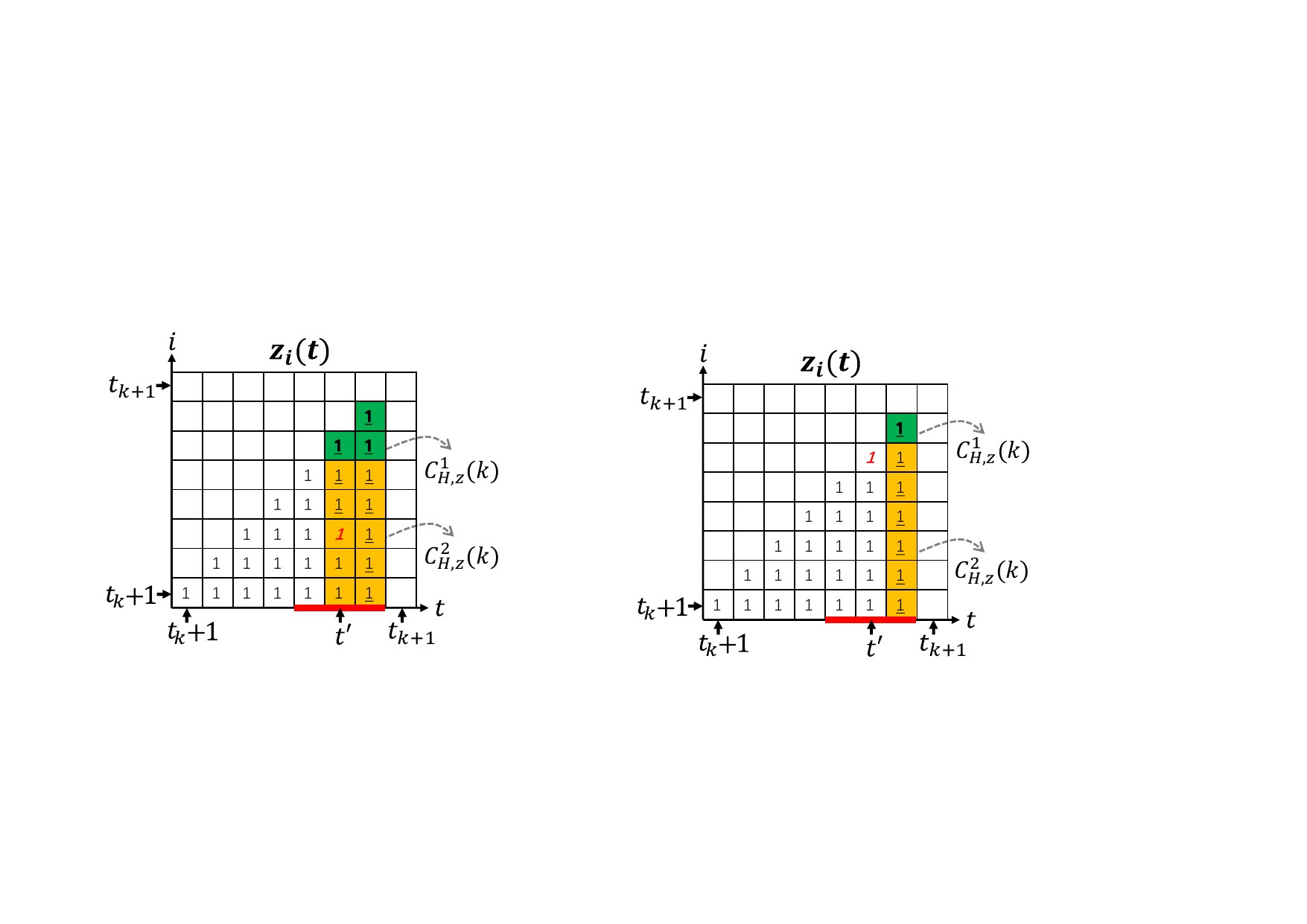}
         \label{fig:ml_proof_consistency_primal_1}}
         \subfigure[when $j = t'$]{\includegraphics[scale = 0.45]{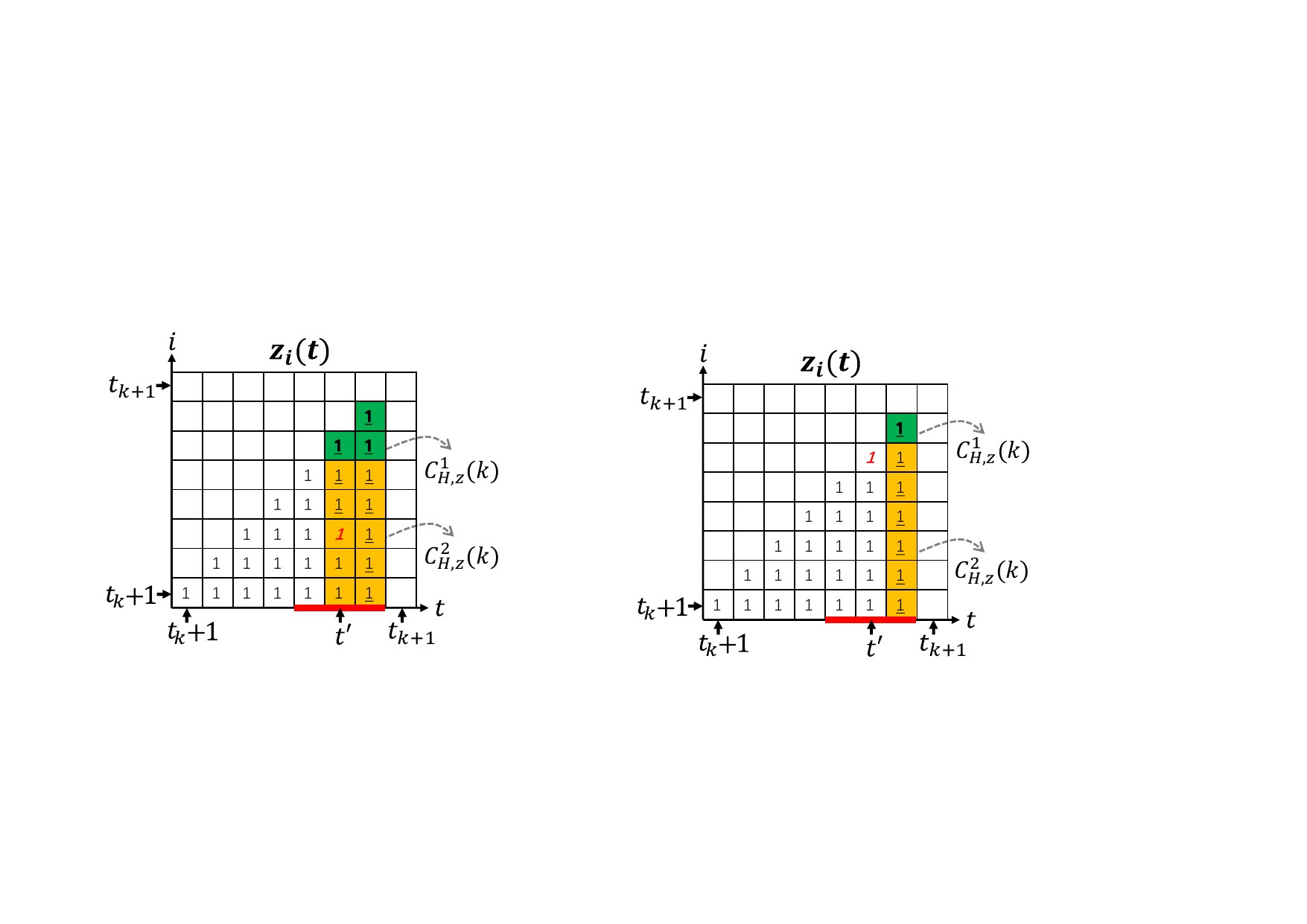}
         \label{fig:ml_proof_consistency_primal_2}}
         \caption{An illustration of $C_{H,z}(k)$, $C_{H,z}^1(k)$, and $C_{H,z}^2(k)$ in two different cases ($j \ne t'$ and $j = t'$). $C_{H,z}(k)$ is the sum of the underlined $1$'s, $C_{H,z}^1(k)$ is an equilateral triangle made of $1$ (the bold underlined $1$'s with green background), and
         $C_{H,z}^2(k)$ is a rectangle made of $1$ (the sum of some regular $1$'s and some underlined $1$'s with orange background).}         \label{fig:ml_proof_consistency_primal}
         \vspace{-10pt}
\end{figure}

We consider the $k$-th ($k \in [0, K-1]$) ACK interval $[{t_k} + 1,{t_{k+1}}]$ (denoted by $I_k$), where the LAPDOA makes two ACKs at the ON slots $t_k$ and $t_{k+1}$, respectively. Still, there are two cases when we make an ACK at $t_{k+1}$: $1)$ the ACK marker $M$ is equal to or larger than $1$ at $t_{k+1}$ and $t_{k+1}$ is an ON slot; $2)$ the ACK marker $M$ equals or is larger than $1$ 
at some OFF slot $t'$ $(t'< t_{k+1})$ and $t_{k+1}$ is the very first ON slot after slot $t'$. 
Note that though the following analysis is for the general ACK interval $[{t_k} + 1,{t_{k+1}}]$ ($k \in [0, K-1]$), they can be easily extended the last ACK interval $[{t_K} + 1,T]$.

\begin{enumerate}
\item The ACK marker $M$ is equal to or larger than $1$ at the ON slot $t_{k+1}$.  
In this case, there is no zero update, and the holding cost of zero updates is $0$.

\item  The ACK marker $M$ equals or is larger than $1$ at some OFF slot $t'$ $(t'< t_{k+1})$ and $t_{k+1}$ is the very first ON slot after slot $t'$ (i.e., the channels are OFF during $[t', t_{k+1} - 1]$).  
Assuming that the ACK marker $M$ is equal to or larger than $1$ after the updating of the $j$-th packet at slot $t'$. In this case, for the holding costs of zero updates in $I_k$ (denoted by ${C_{H,z}}(k)$), we can compute it under two different cases based on packet $j$: $1)$ packet $j$ is not packet $t'$ ($j \ne t'$), and $2)$ packet $j$ is packet $t'$ ($j = t'$).
In the first case ($j \ne t'$), we can denote ${C_{H,z}}(k)$ as ${C_{H,z}}(k) = \sum\nolimits_{i = j + 1}^{t'} {{z_i}(t')}  + \sum\nolimits_{\tau  = t' + 1}^{{t_{k + 1}} - 1} {\sum\nolimits_{i = {t_k} + 1}^\tau  {{z_i}(\tau )} } $ (see an illustration in Fig.~\ref{fig:ml_proof_consistency_primal_1}); and in the second case ($j = t'$), we can denote ${C_{H,z}}(k)$ as ${C_{H,z}}(k) = \sum\nolimits_{\tau  = t' + 1}^{{t_{k + 1}} - 1} {\sum\nolimits_{i = {t_k} + 1}^\tau  {{z_i}(\tau )} } $ (see an illustration in Fig.~\ref{fig:ml_proof_consistency_primal_2}). 
In the following, we only focus on the analysis of ${C_{H,z}}(k)$ in the first case since the analysis in the second case is very similar. 
Next, we bound ${C_{H,z}}(k)$ by two areas: $C_{H,z}^1(k) \triangleq \sum\nolimits_{\tau  = t'}^{{t_{k + 1}} - 1} {\sum\nolimits_{i = t'}^\tau  {{z_i}(\tau )} } $ and $C_{H,z}^2(k) \triangleq \sum\nolimits_{\tau  = t'}^{{t_{k + 1}} - 1} {\sum\nolimits_{i = {t_k} + 1}^{t' - 1} {{z_i}(\tau )} } $ (see an illustration in Fig.~\ref{fig:ml_proof_consistency_primal_1}). 
Clearly, we have ${C_{H,z}}(k) = \sum\nolimits_{i = j + 1}^{t'} {{z_i}(t')}  + \sum\nolimits_{\tau  = t' + 1}^{{t_{k + 1}} - 1} {\sum\nolimits_{i = {t_k} + 1}^\tau  {{z_i}(\tau )} }  \le \sum\nolimits_{\tau  = t'}^{{t_{k + 1}} - 1} {\sum\nolimits_{i = t'}^\tau  {{z_i}(\tau )} }  + \sum\nolimits_{\tau  = t'}^{{t_{k + 1}} - 1} {\sum\nolimits_{i = {t_k} + 1}^{t' - 1} {{z_i}(\tau )} }  = C_{H,z}^1(k) + C_{H,z}^2(k)$.
We use $C_{H,z}^1$ to denote the sum of $C_{H,z}^1(k)$ over all the ACK intervals $I_k$ ($k \in [0, K]$), i.e., $C_{H,z}^1 \triangleq \sum\nolimits_{k = 0}^K {C_{H,z}^1(k)} $. 
For any of the zero updates in $C_{H,z}^1$, there is one corresponding packet in the prediction $\mathcal{P}$ that has not been acked yet since the channels are OFF during some $[t',{t_{k + 1}} - 1]$, which requires the prediction $\mathcal{P}$ to pay a holding cost of $1$ for this packet. 
Similarly, for any of the small updates, by the definition of small updates, there is one corresponding packet in the prediction $\mathcal{P}$ that has not been acked yet, which requires the prediction $\mathcal{P}$ to pay a holding cost of $1$ for this packet. Therefore, the total number of the zero updates in $C_{H,z}^1$ and the small updates is at most $C_H(\mathbf{s},\mathcal{P})/1=C_H(\mathbf{s},\mathcal{P})$, 
and each of such update has a total cost at most $(1+\lambda)$, 
which indicates that the total cost of the zero updates in $C_{H,z}^1$  and the small updates is at most $(1+\lambda)C_H(\mathbf{s},\mathcal{P})$, i.e.,  
\begin{equation}
    {C_{A,s}} + {C_{H,s}} + C_{H,z}^1 \le (1+\lambda)C_H(\mathbf{s},\mathcal{P}).
\end{equation}
For the holding cost $C_{H,z}^2(k)$, same as the analysis in $(a)$ of Eq.~\eqref{eq:robustness_proof_holding_dual_ratio}, it is upper bounded by the sum of the holding cost of big updates and small updates in $I_k$ and the holding cost of the zero updates in $C_{H,z}^1(k)$, i.e., $C_{H,z}^2(k) \le C_{H,s}(k) + C_{H,b}(k) + C_{H,z}^1(k)$. 
More generally, let $C_{H,z}^2$ denote the sum of $C_{H,z}^2(k)$ over all the ACK intervals $I_k$ ($k \in [0, K]$), i.e., $C_{H,z}^2 \triangleq \sum\nolimits_{k = 0}^K {C_{H,z}^2(k)} $. 
Then we have 
\begin{align*}
    C_{H,z}^2  & \le C_{H,s} + C_{H,b} + C_{H,z}^1 \\
    & = (C_{H,s} + C_{H,z}^1) + C_{H,b} \\
    & \buildrel (a) \over \le C_H(\mathbf{s},\mathcal{P}) + C_{H,b} \\
    & \buildrel (b) \over \le C_H(\mathbf{s},\mathcal{P}) + \left\lceil {\lambda c} \right\rceil \times {C_A(\mathbf{s},\mathcal{P})}/{c},
\end{align*}
where $(a)$ is because as we showed before, the total number of the zero updates in $C_{H,z}^1$ and the small updates is at most $C_H(\mathbf{s},\mathcal{P})$, and each of small updates or zero updates increases the holding costs of LAPDOA by $1$, so the total holding cost of them is at most $C_H(\mathbf{s},\mathcal{P})$; $(b)$  is due to the total number of big updates is at most $\left\lceil {\lambda c} \right\rceil \times C_A(\mathbf{s},\mathcal{P})/c$, and each of big updates increases the holding costs by $1$, so we have  $C_{H,b}  \le \left\lceil {\lambda c} \right\rceil \times C_A(\mathbf{s},\mathcal{P})/c$. 
\end{enumerate}
In summary, the total cost of LAPDOA in Case $2$ is
\begin{align*}
    & C(\mathbf{s},\mathcal{P}, \lambda) \\
    & =  {C_{A,b}} + {C_{H,b}} + {C_{A,s}} + {C_{H,s}} + {C_{H,z}} \\
    & \le {C_{A,b}} + {C_{H,b}} + {C_{A,s}} + {C_{H,s}} + C_{H,z}^1 + C_{H,z}^2 \\
    & =  ({C_{A,b}} + {C_{H,b}} ) + ({C_{A,s}} + {C_{H,s}} + C_{H,z}^1) + C_{H,z}^2 \\
    & \le (1 / \lambda  + 1) \times \left\lceil {\lambda c} \right\rceil \times {C_A(\mathbf{s},\mathcal{P})}/{c} + (1+\lambda)C_H(\mathbf{s},\mathcal{P})  \notag \\
    & \quad + C_H(\mathbf{s},\mathcal{P}) + \left\lceil {\lambda c} \right\rceil \times {C_A(\mathbf{s},\mathcal{P})}/{c} \\
    & = (1 / \lambda  + 2) \times \left\lceil {\lambda c} \right\rceil \times {C_A(\mathbf{s},\mathcal{P})}/{c} + (2+\lambda)C_H(\mathbf{s},\mathcal{P}).
\end{align*}

Finally, combining the results in Case $1$ and Case $2$, we see that Eq.~\eqref{eq:consistency_2} holds.
\end{proof}

\end{document}